\documentclass{article}

\PassOptionsToPackage{numbers, sort, compress}{natbib}




\usepackage[final]{neurips_2024}


\usepackage{tikz}
\usepackage[utf8]{inputenc} 
\usepackage[T1]{fontenc}    
\usepackage{hyperref}       
\usepackage{url}            
\usepackage{booktabs}       
\usepackage{amsfonts}       
\usepackage{nicefrac}       
\usepackage{microtype}      
\usepackage{xcolor}         

\usepackage{microtype}
\usepackage{graphicx}
\usepackage{lipsum}
\usepackage{float}
\usepackage{array}
\usepackage{xspace}
\usepackage{multirow}
\usepackage{algorithm, algpseudocode}
\usepackage{stfloats}
\usepackage{natbib}
\usepackage{wrapfig}
\usepackage{subfig}
\usepackage{bbm}
\usepackage{caption}
\usepackage{bm}
\usepackage{verbatim}
\usepackage{enumitem}
\usepackage{amsfonts, amssymb, amsthm, amsmath, thm-restate, thmtools} 

\allowdisplaybreaks

\newcommand{\kkcomment}[1]{\textcolor{magenta}{(#1 --KK)}}

\theoremstyle{remark}

\theoremstyle{definition}

\theoremstyle{plain}
\newtheorem{thm}{Theorem}[section]
\newtheorem{lem}{Lemma}[section]

\newcommand{\val}{v}
\newcommand{\valii}[1]{\val_{#1}}
\newcommand{\vali}{\valii{i}}

\newcommand{\valtilde}{\widetilde{v}}
\newcommand{\valtildeii}[1]{\valtilde_{#1}}
\newcommand{\valtildei}{\valtildeii{i}}

\newcommand{\util}{u}
\newcommand{\utilii}[1]{\util_{#1}}
\newcommand{\utili}{\utilii{i}}


\newcommand{\numdata}{N}
\newcommand{\numtypes}{m}
\newcommand{\nn}{n}

\newcommand{\type}{i}
\newcommand{\typet}{\type_t}

\newcommand{\price}{p}
\newcommand{\pricet}{\price_t}
\newcommand{\allprices}{\Pcal}
\newcommand{\optprice}{p^{\textrm{\tiny OPT}}}

\newcommand{\discreteprice}{\overline{\Pcal}}
\newcommand{\dataSmooth}{N_\textbf{S}}
\newcommand{\dataDiminish}{N_\textbf{D}}

\newcommand{\typedistro}{q}
\newcommand{\typedistroi}{\typedistro_i}
\newcommand{\typedistroii}[1]{\typedistro_{#1}}

\newcommand{\expparamet}{\theta}
\newcommand{\expparametp}{\expparamet_{\price}}
\newcommand{\rev}{r}
\newcommand{\revt}{r_t}
\newcommand{\sett}{S_t}
\newcommand{\settau}{S_\tau}
\newcommand{\set}{S}
\newcommand{\niipp}[2]{n_{#1,#2}}
\newcommand{\nip}{\niipp{i}{\price}}

\newcommand{\nitpt}{\niipp{\typet}{\pricet}}

\newcommand{\nt}{n_t}
\newcommand{\RT}{R_T}


\newcommand{\ExpectREV}{\mathrm{rev}}
\newcommand{\UCBREV}{\widehat{\mathrm{rev}}_t}
\newcommand{\UCBREVt}{\widehat{\mathrm{rev}}_t}
\newcommand{\UCBREVtmo}{\widehat{\mathrm{rev}}_{t-1}}
\newcommand{\OPT}{\mathrm{OPT}}
\newcommand{\advconstant}{c}
\newcommand{\pricenew}{\price'}
\newcommand{\drconstant}{J}

\newcommand{\parahead}[1]{\textbf{#1. }}
\newcommand{\subparahead}[1]{\emph{#1. }}


\RequirePackage{amsmath}
\RequirePackage{amssymb}
\RequirePackage{mathtools}
\ifx\proof\undefined 
\RequirePackage{amsthm}
\fi
\RequirePackage{bm} 
\RequirePackage{url}

\newcommand{\eps}{\epsilon}
\newcommand{\del}{\delta}



\newcommand{\Gcal}{\mathcal{G}}
\newcommand{\Hcal}{\mathcal{H}}

\newcommand{\Pcal}{\mathcal{P}}

\newcommand{\EE}{\mathbb{E}} 
\newcommand{\PP}{\mathbb{P}} 
 %
 %



\newcommand{\indfI}{\mathbbm{I}}
\newcommand{\indf}{\indfI}




\newcommand*{\argmax}{\mathop{\mathrm{argmax}}}


\newcommand{\ud}{\mathrm{d}}




\newcommand{\bigO}{\mathcal{O}}

\newcommand{\bigOtilde}{\widetilde{\mathcal{O}}}



\newcommand{\defeq}{\stackrel{\Delta}{=}}









\newcommand{\rbr}[1]{\left(#1\right)}
\newcommand{\sbr}[1]{\left[#1\right]}
\newcommand{\cbr}[1]{\left\{#1\right\}}



\ifx\BlackBox\undefined
\newcommand{\BlackBox}{\rule{1.5ex}{1.5ex}}  
\fi

\ifx\QED\undefined
\def\QED{~\rule[-1pt]{5pt}{5pt}\par\medskip}
\fi

\ifx\proof\undefined
\newenvironment{proof}{\par\noindent{\bf Proof\ }}{\hfill\BlackBox\\[2mm]}
\fi
\ifx\proofsketch\undefined

\fi

\ifx\theorem\undefined

\fi
\ifx\example\undefined
\newtheorem{example}{Example}
\fi
\ifx\property\undefined

\fi
\ifx\definition\undefined

\fi
\ifx\remark\undefined

\fi
\ifx\assumption\undefined
\newtheorem{assumption}{Assumption}
\fi
\ifx\lemma\undefined

\fi
\ifx\proposition\undefined

\fi
\ifx\corollary\undefined

\fi
\ifx\conjecture\undefined

\fi
\ifx\fact\undefined

\fi
\ifx\claim\undefined

\fi
\ifx\assum\undefined

\fi
\ifx\condition\undefined

\fi


\definecolor{darkblue}{rgb}{0.0,0.0,0.7} 
\hypersetup{colorlinks,breaklinks, linkcolor=darkblue, urlcolor=darkblue,
            anchorcolor=darkblue, citecolor=darkblue}



\DeclareMathSymbol{\shortminus}{\mathbin}{AMSa}{"39}

\newcommand\numberthis{\addtocounter{equation}{1}\tag{\theequation}}


\makeatletter
\DeclareRobustCommand\widecheck[1]{{\mathpalette\@widecheck{#1}}}
\def\@widecheck#1#2{%
    \setbox\z@\hbox{\m@th$#1#2$}%
    \setbox\tw@\hbox{\m@th$#1%
       \widehat{%
          \vrule\@width\z@\@height\ht\z@
          \vrule\@height\z@\@width\wd\z@}$}%
    \dp\tw@-\ht\z@
    \@tempdima\ht\z@ \advance\@tempdima2\ht\tw@ \divide\@tempdima\thr@@
    \setbox\tw@\hbox{%
       \raise\@tempdima\hbox{\scalebox{1}[-1]{\lower\@tempdima\box
\tw@}}}%
    {\ooalign{\box\tw@ \cr \box\z@}}}
\makeatother

\title{Learning to Price Homogeneous Data}

\author{%
  \textbf{Keran Chen} \\
  UW-Madison\\
  \texttt{kchen429@wisc.edu} 
  \and
  \textbf{Joon Suk Huh} \\
  UW-Madison \\
  \texttt{jhuh23@wisc.edu} 
  \and
  \textbf{Kirthevasan Kandasamy} \\
  UW-Madison \\ 
  \texttt{kandasamy@cs.wisc.edu} \\  
}
\date{}

\begin{document}

\maketitle

\newcommand{\insertTableApproxResults}{%
\begin{table}[t]
    \centering
    \renewcommand{\arraystretch}{1.6}
    \begin{tabular}{|c|c|c|c|}
        \hline
        Algorithm & Assumptions & Size of discretization & Reference \\ \hline
        \citet{hartline2005near}
        & -- & $\bigOtilde(2^\numdata \eps^{-\numdata})$  & --  \\ \hline
        \citet{chawla2022pricing} & \textbf{M} & $N^{\bigO\rbr{\eps^{-2}\log\eps^{-1}}}$ & -- \\ \hline
        Algorithm~\ref{alg:approxmonotone} (ours) &  \textbf{M}, \textbf{F} & $\bigOtilde(\numdata^m\eps^{-\numtypes})$ &  Theorem \ref{thm:approxmonotone} \\ \hline
        Algorithm~\ref{alg:approxsmooth} (ours) & 
        \textbf{M}, \textbf{F}, \textbf{S} & $\bigOtilde\left( L^m\eps^{-2\numtypes} \right)$ & Theorem \ref{thm:approxsmooth} \\ \hline
        Algorithm~\ref{alg:approxdiminishing} (ours) &  \textbf{M}, \textbf{F}, \textbf{D} & $\bigOtilde\left(J^m \eps^{-3\numtypes} \log^\numtypes \numdata\right)$ & Theorem \ref{thm:approxdiminishing} \\ \hline
    \end{tabular}
    
    \vspace{0.1in}
    \caption{Comparison of discretization (approximation) schemes of prior work and our methods under various assumptions. 
    All methods achieve a $\bigO(\eps)$ additive approximation to any pricing curve.
    Here, \textbf{M} means \textbf{M}onotonicity,
    \textbf{F} means that there are a \textbf{F}inite ($\numtypes$) number of types, \textbf{S} means that the valuation curves satisfy a $L$-Lipschitz-like \textbf{S}moothness condition (Assumption~\ref{asm:smoothness}),
    and \textbf{D} means that they satisfy a \textbf{D}iminishing returns condition (Assumption~\ref{asm:diminishingreturns}).
    The $\bigOtilde$ notation suppresses log dependencies when there is already a polynomial dependence on a parameter.
    Prior work has exponential dependence in either $\numdata$ or  $\eps^{-1}$.
    We wish to do better since
    \emph{(i)} typically, the number of data $\numdata$ is very large and
    \emph{(ii)} we need $\eps \rightarrow 0$ as $T\rightarrow \infty$ to achieve sublinear regret.
    \vspace{-0.2in}
    }\label{tb:approxresults}
\end{table}
}

\newcommand{\insertTableResultSummary}{
\begin{table}[ht]
\centering
\renewcommand{\arraystretch}{1.8}
\begin{tabular}{|c|c|c|c|c|}
\hline
Setting   & Assumptions   & Regret bound      & Complexity per iteration & Reference \\ \hline
\multirow{3}{*}{Stochastic}  & \textbf{M}, \textbf{F}  & \multirow{3}{*}{$\bigOtilde\rbr{\numtypes\sqrt{T}}$} & $\bigOtilde\!\rbr{  \rbr{\frac{\numdata}{\numtypes}} ^\numtypes T^{\,\nicefrac{\numtypes}{2}}}$ & \\ \cline{2-2} \cline{4-4} 
                             & \textbf{M}, \textbf{F}, \textbf{S} &                   &  $\bigOtilde\rbr{\rbr{LT}^\numtypes }$ &  Theorem \ref{thm:stoch_trueregret} \\ \cline{2-2} \cline{4-4} 
                             & \textbf{M}, \textbf{F}, \textbf{D} &                   & $\bigOtilde(\drconstant^\numtypes T^{\nicefrac{3\numtypes}{2}})$ & \\ \hline
\multirow{3}{*}{Adversarial} & \textbf{M}, \textbf{F} & \multirow{3}{*}{$\bigOtilde\rbr{\numtypes^{\nicefrac{3}{2}} \sqrt{T}}$} & $\bigOtilde\!\rbr{ \left( \frac{\numdata}{\numtypes}\right) ^\numtypes T^{\,\nicefrac{\numtypes}{2}}}$ & \\ \cline{2-2} \cline{4-4} 
                             & \textbf{M}, \textbf{F}, \textbf{S} &                   & $\bigOtilde\rbr{\rbr{LT}^\numtypes}$ &  Theorem \ref{thm:adver_trueregret} \\ \cline{2-2} \cline{4-4} 
                             & \textbf{M}, \textbf{F}, \textbf{D} &                   & $\bigOtilde\rbr{\drconstant^\numtypes T^{\,\nicefrac{3\numtypes}{2}}}$ &  \\ \hline
\end{tabular}
\vspace{0.15em}

\centering {%
\begin{tabular}{|c|c|c|c|}
\hline
Discretization method &   Assumptions &   Complexity per iteration & Regret (Adversarial) 
\\ \hline
\citet{hartline2005near}  & \textbf{F}  & $\bigOtilde(2^\numdata \eps^{-\numdata})$ & $\bigOtilde(\numtypes\sqrt{T\numdata})$ 
\\ \hline
\citet{chawla2022pricing} & \textbf{M},\, \textbf{F}  & $N^{\bigO\rbr{\eps^{-2}\log\eps^{-1}}}$ & $\bigOtilde\rbr{\numtypes T^{\nicefrac{3}{4}}}$              
\\ \hline
\end{tabular}%
}
\vspace{0.7em}
\caption{Comparison of regret and
time complexity of \emph{our} online learning methods when paired with our discretization schemes and
schemes from prior work.
See Table~\ref{tb:approxresults} for a description of the assumptions.
All methods, including~\citep{hartline2005near,chawla2022pricing} achieve $\bigO(m\sqrt{T})$ regret in the stochastic setting. 
\vspace{-0.3in}
}
\label{tb:result-summary}
\end{table}
}

\newcommand{\insertTableRegretComparison}{
\begin{table}[ht]
\renewcommand{\arraystretch}{1.8}
\centering {%
\begin{tabular}{|c|c|c|c|c|}
\hline
Discretization method &   Assumptions &   Complexity per iteration & Regret (Adversarial) &  Regret (Stochastic)                      \\ \hline
\citet{hartline2005near}  & \textbf{F}  & $\bigOtilde(2^\numdata \eps^{-\numdata})$ & $\bigOtilde(\numtypes\sqrt{T\numdata})$  & \multirow{2}{*}{$\bigOtilde\rbr{m\sqrt{T}}$} \\ \cline{1-4}
\citet{chawla2022pricing} & \textbf{M},\, \textbf{F}  & $N^{\bigO\rbr{\eps^{-2}\log\eps^{-1}}}$ & $\bigOtilde\rbr{\numtypes T^{\nicefrac{3}{4}}}$  &            
\\ \hline
\end{tabular}%
}

\vspace{0.5em}
\caption{Regret and complexity of \emph{our} online learning methods when paired with discretization schemes from prior work. 
Table~\ref{tb:result-summary} shows the same if we instead pair our learning methods our discretization schemes in~\S\ref{sec:discretization}.
See Table~\ref{tb:approxresults} for a description of the assumptions.
Our online learning algorithms assumes finite types, even if the discretization schemes in prior work do not (see Table~\ref{tb:approxresults}). 
\kkcomment{@Keran: put time complexity here.}
}
\label{tb:regret-comparison}
\end{table}
}

\newcommand{\insertTableRegretComparisonnn}{ 
\begin{table}
\centering
\begin{tblr}{
  cell{1}{3} = {c},
  cell{1}{4} = {c},
  cell{1}{5} = {c},
  cell{2}{5} = {r=2}{},
  vlines,
  hline{1-2,4} = {-}{},
  hline{3} = {1-4}{},
}
Discretization method & Assumptions & Timecomplexity & Regret(Adversarial) & Regret(Stochastic) \\
\citet{hartline2005near}  & \textbf{F}  & $\bigOtilde(2^\numdata \eps^{-\numdata})$ & $\bigOtilde(\numtypes\sqrt{T\numdata})$      & XXXX                   \\
\citet{chawla2022pricing} & \textbf{M},\, \textbf{F}  & $N^{\bigO\rbr{\eps^{-2}\log\eps^{-1}}}$ & $\bigOtilde\rbr{\numtypes T^{\nicefrac{3}{4}}}$                       &                        
\end{tblr}
\end{table}
}

\begin{abstract}
We study a data pricing problem, where a seller has access to $\numdata$ homogeneous data points (e.g. drawn i.i.d. from some distribution).
There are $\numtypes$ types of buyers in the market, where buyers of the same type $i$ have the same valuation curve $\vali:[\numdata]\rightarrow [0,1]$, where $\vali(n)$ is the value for having $n$ data points.
\textit{A priori}, the seller is unaware of the
distribution of buyers, but can repeat the market for $T$ rounds so as to learn the revenue-optimal pricing curve $p:[\numdata] \rightarrow [0, 1]$.
To solve this online learning problem,
we first develop novel discretization schemes to approximate any pricing curve.
When compared to prior work,
the size of our discretization schemes scales gracefully with the approximation parameter, which translates to better regret in online learning.
Under assumptions like smoothness and diminishing returns which are satisfied by data, the discretization size can be reduced further.
We then turn to the online learning problem, 
both in the stochastic and adversarial settings.
On each round, the seller chooses an \emph{anonymous} pricing curve $\pricet$.
A new buyer appears and may choose to purchase some amount of data.
She then reveals her type \emph{only if} she makes a purchase.
Our online algorithms build on classical algorithms such as UCB and FTPL, but require novel ideas to account for the asymmetric nature of this feedback and to deal with the vastness of the space of pricing curves.
Our algorithms achieve 
$\bigOtilde(\numtypes\sqrt{T})$ regret in the stochastic setting and $\bigOtilde(\numtypes^{\nicefrac{3}{2}}\sqrt{T})$ regret in the adversarial setting.
\end{abstract}

\section{Introduction}
\label{sec:intro}

Due to the rise in popularity of machine learning,
there is an increased demand for data.
However, not all users of data have the wherewithal to collect data on their own, and have to rely on data marketplaces to acquire the data they need.
For example,
a materials data platform (e.g. ~\cite{citrine}),
may have collected vast amounts of data from various proprietary sources.
Materials scientists in smaller organizations and academia, who do not have large experimental apparatuses, may wish to purchase this data to aid in their research.
Similarly, small businesses may wish to purchase customer data for advertising and product recommendations~\citep{googleads,deltasharing}, while small technology companies may wish to purchase data about cloud operations to optimize their computing infrastructure~\citep{azuredatashare,awshub}.

\parahead{Model} Motivated by the emergence of such data marketplaces, we study the following online data pricing problem.
A seller has access to $\numdata$ homogeneous data points, (e.g. drawn i.i.d. from some distribution).
He wishes to sell the data to a sequence of distinct buyers over $T$ rounds, and intends to achieve large revenue.
There are $\numtypes$ types of buyers in the data marketplace,
with all buyers in type $\type$ having the same valuation curve $\vali:[\numdata] \rightarrow [0,1]$ for the data, where $\vali(\nn)$ represents the buyer's value for having $\nn$ points.
As data is homogeneous, we can treat an agent's value as a function of the \emph{amount of data} $n$ (we will illustrate this in the sequel).
Valuation curves are monotone non-decreasing, as more data is better.
At each round $t$, the seller chooses a price curve $\pricet:[\numdata] \rightarrow [0,1]$, where $\pricet(\nn)$ is the price  for purchasing $\nn$ data points. Then, a buyer with type $\typet$ arrives and purchases an amount of data that maximizes her utility (value minus price), provided that she can achieve non-negative utility.
A buyer will reveal her type to the seller \emph{only if} she makes a purchase, and \emph{only after} she makes the purchase. The seller has knowledge of valuation curves of the $m$ types, but does not know the distribution $\typedistro$ over types (stochastic setting), or the buyer sequence (adversarial setting).
Moreover, he cannot practice non-anonymous (discriminatory) pricing, as he needs to choose the pricing curve $\pricet$ without knowledge of the buyer's type on that round. 

While there is extensive research on \emph{revenue-optimal pricing} and \emph{learning to price},
data marketplaces merit special attention, both due to their
recent emergence and the unique characteristics of data.
Typically the number of data $\numdata$ (number of goods) is very large, but data usually satisfies additional properties such as
smoothness (an agent's value does not increase significantly with a small amount of additional data) and
diminishing returns (additional data is more valuable when a buyer has less data).
To illustrate further, note that two steps are essential to
develop an effective online learning solution for data pricing.
\emph{(1)} First, we need to solve the \emph{planning problem}, i.e. find a revenue-optimal pricing curve when the type distribution $\typedistro$ is known.
\emph{(2)} Second, when $\typedistro$ is unknown, we need to combine the algorithm in step (1) with estimates for $\typedistro$ to maximize long-term revenue.

Methods in the existing literature fall short in both steps.
\emph{(1) } When the type distribution $\typedistro$ is known, the data pricing problem resembles an \emph{ordered item pricing} problem,
which is known to be \textsf{NP}-hard~\citep{chawla2007algorithmic,hartline2005near}.
Hence, prior work has aimed at approximating the 
optimal pricing curves via discretization schemes.
Unfortunately, existing discretization schemes have poor, often exponential, dependence on the approximation parameter $\eps$.
However, achieving sublinear regret in online learning requires choosing $\eps$ that vanishes with longer time horizons, i.e. $\eps\rightarrow 0$ as $T\rightarrow \infty$.
Therefore, directly using existing discretization schemes in an online setting leads to poor statistical \emph{and} computational properties of the associated online algorithm.
This requires us to leverage the above properties of data to design discretization schemes with better dependence on $\eps$.
\emph{(2) } While there is prior work on learning optimal prices~\citep{kleinberg2003value,jagadeesan2021learning,dudik2020oracle}, these techniques either fall short of addressing the complexities in our setting, or fail to account for the properties of data, and hence do not scale gracefully when the amount of data $N$ is very large.
Moreover, in our online learning setup, the seller faces a trade-off between setting high prices to maximize instantaneous revenue versus setting low prices so as to guarantee a purchase, which results in the buyer revealing their type, which in turn can be helpful in future rounds.
Prior work has studied this asymmetric feedback model \emph{only in single-item markets} which is significantly simpler, and \emph{only in the stochastic setting}~\citep{guo2023leveraging, zhao2019stochastic}.
\vspace{-0.1in}
\subsection{Summary of our contributions}\label{subsec:contributions}
Our contributions in this work are threefold:
\emph{(1)} First, in~\S\ref{sec:discretization}, we develop discretization schemes for
revenue-optimal data pricing under a variety of assumptions, which we will use later in our online learning schemes. 
\emph{(2)} In~\S\ref{sec:stochastic}, we study learning a revenue-optimal price in a stochastic setting, where the customer types on each round are drawn from a fixed but unknown distribution  $\typedistro$.
\emph{(3)} Finally, in~\S\ref{sec:adversarial}, we study online learning when the buyer types are chosen by an oblivious adversary.

\parahead{1. Discretization (approximation) schemes for revenue-optimal data pricing}
Assuming only monotonicity, we show that there is a discretization of size $\bigOtilde((N/\eps)^m)$ which is an $\bigO(\eps)$ additive approximation to any pricing curve.
When compared to prior work~\citep{chawla2022pricing,hartline2005near}, our discretization scheme has smaller dependence on $\eps^{-1}$ when the number of types $\numtypes$ is small (see Table~\ref{tb:approxresults}).
This will be useful, both statistically and computationally, when we study the online setting, as we need to choose $\eps\rightarrow 0$ as $T\rightarrow \infty$ to achieve sublinear regret.
This is still quite large in real-world data marketplaces, where $N$ may be very large.
Hence, we also study two other assumptions.
First, when valuations are smooth,
satisfying an $L$-Lipschitz-like condition, we construct a discretization of size $\bigOtilde\left((L/\eps^2)^m\right)$, which has no dependence on $N$.
Next, under a diminishing returns condition, we construct a discretization of size $\bigO\rbr{\drconstant^\numtypes \eps^{-3\numtypes} \log^m( N)}$,
which only has polylogarithmic dependence on $N$.

\subparahead{Key insights}
We first show that when there are only $m$ types, for any price function $\price:[N]\rightarrow [0, 1]$, there exists an ``$m$--step'' price function $\price'$ whose revenue is at least as much as that of $\price$ on any type distribution $\typedistro$.
An $m$--step function is non-decreasing and changes values at most $m$ times, allowing us to focus on this restricted class and thereby reduce the search space when $\numtypes \ll \numdata$.
We then consider discretizations of the data space $[N]$ and valuations $[0, 1]$ which allow us to obtain an $\bigO(\epsilon)$--approximation to any pricing curve, and then apply this insight to construct our discretizations.
Finally, we show that with monotonicity and diminishing returns, similarly accurate approximations are attainable with substantially coarser discretizations.



\insertTableApproxResults

\parahead{2. Learning to price in the stochastic setting}
Next, we turn to the online learning problem described in the beginning in a stochastic setting. 
On each round, our algorithm computes an upper confidence bound (UCB)~\citep{auer2002using,lai1985asymptotically} on the revenue for each price curve in the discretization previously developed; we then choose the price curve with the highest UCB. 
As summarized in Table~\ref{tb:result-summary}, this algorithm achieves 
a $\bigOtilde(\numtypes\sqrt{T})$ bound on the regret
for \emph{any} discretization scheme, including those from prior work.
In the stochastic setting, the key advantage of our discretization schemes is computational.

\subparahead{Key insights}
Both the design and the anlaysis of an algorithm is challenging in this setting due to two reasons: \emph{(i)} the large size of the discretization and  
\emph{(i)} the asymmetric nature of feedback.
First, naively maintaining UCBs for each price leads to large confidence intervals, and hence large regret as the size of the discretization is large;
instead, we construct confidence intervals on estimates of the type distribution, and translate them to UCBs for the revenue.
Second, the asymmetric nature of the feedback places us between bandit and full-information settings.
Treating this like a bandit setting would lead to poor, exponential dependence on $m$ in the regret.
However, we are unable to treat this as full information since the type distribution is revealed only if there is a purchase.
Handling this asymmetry requires a delicate construction of the UCB.


%

\parahead{3. Learning to price in the adversarial setting}
We study learning in an adversarial setting where
the types on each round may be chosen adversarially.
Table~\ref{tb:result-summary} shows the regret and time complexity of our method when paired with various discretization schemes.
In the adversarial setting, our discretization schemes offer both computational and statistical advantages compared to prior work.

\subparahead{Key insights}
Our algorithm builds on the Follow-the-Perturbed-Leader (FTPL)~\citep{kalai2005efficient}, originally designed for full-information settings and not directly applicable here.
To handle asymmetric feedback,
we use the information we have about the valuation curves to keep track of which customers would not have made a purchase given a price curve.
If a purchase is made and we observe feedback, we use the usual FTPL update, but if not, we reward each pricing curve with the sum of revenue of all types that would not purchase in that current round.


\insertTableResultSummary

\subsection{Related work}
\textbf{Dynamic pricing}. The online posted-price mechanism, also known as dynamic pricing, is a central research area in algorithmic market design~\cite{kleinberg2003value, den2015dynamic}. In the most classical setting~\cite{kleinberg2003value}, the seller sets a price for an item in each round, and a buyer purchases the item only if their valuation exceeds the posted price. While several extensions of this setting have been explored for both parametric~\cite{keskin2014dynamic,den2014simultaneously,besbes2015surprising,javanmard2017perishability,javanmard2019dynamic,xu2021logarithmic} and non-parametric~\cite{besbes2009dynamic,wang2021multimodal,cheung2017dynamic,misra2019dynamic,perakis2023dynamic} demands, most focus on single-parameter demands, i.e., selling a single item to buyers. Our data pricing problem is multi-parameter, as demands are parameterized by multiple outcomes, i.e. the number of data points.

\textbf{Bayesian unit-demand pricing problem}. Formally, our data pricing problem is a variant of the Bayesian Unit-demand Pricing Problem (BUPP)~\citep{chawla2007algorithmic}. BUPP addresses the problem of (offline) revenue maximization over a known distribution of unit-demand buyers, meaning they want to buy at most one item from the inventory. In BUPP, a seller has $\numdata$ distinct items to sell to a unit-demand buyer whose valuations are $v=(v_1,\dots,v_\numdata)$, where $v_i$ is the value of the $i$th item. Given prices ${p_i},\  {i\in [\numdata]}$, the unit-demand buyer purchases a single item $i\in[\numdata]$ that maximizes their utility: $v_i - p_i$. Assuming the valuation profile $v$ follows a known distribution $D$, the goal of BUPP is to find the best prices $\{p_i\}_{i\in [\numdata]}$ that maximize the seller's expected revenue.
    
Our data pricing problem is a variant of BUPP in two ways: \emph{(1)} We study the sequential setting where type distributions are \emph{unknown}, while valuation profiles for each type are known, and \emph{(2)} We assume monotonic values $v_1 \leq \dots \leq v_\numdata$, which is natural in data pricing. Unfortunately, BUPP is a computationally intractable problem, as is ours. BUPP is known to be \textsf{NP}-hard even when $D$ is a product distribution~\citep{chen2014complexity}. Moreover, even assuming that values are monotonic (i.e., $v_1 \leq \dots \leq v_\numdata$), the problem remains (strongly) \textsf{NP}-hard~\citep{chawla2022pricing}. Therefore, we aim to provide a reasonably efficient no-regret algorithm for our problem, especially when the number of types $\numtypes$ is a fixed constant.

The previous works most relevant to our paper are \citet{hartline2005near} and \citet{chawla2022pricing}, which study offline revenue maximization for unit-demand buyers. Buyers in our problem are also unit-demand, as each amount of data points can be seen as an individual item. Revenue maximization for unit-demand buyers is known to be computationally intractable~\cite{guruswami2005profit}, even with ordered (monotonic) buyer values~\cite{chawla2022pricing}, leading these works to focus on approximation algorithms. \citet{hartline2005near} proposed an approximation algorithm with near-linear runtime in the number of buyers, given a fixed number of items. \citet{chawla2022pricing} introduced a polynomial-time approximation scheme (PTAS) for unit-demand buyers with monotonic values. In this work, we extend the framework to the online setting with partial feedback, which has more practical implications. 

In addition, \citet{balcannew} provide new guarantees for learning revenue-maximizing menus of lotteries and two-part tariffs, demonstrating that their discretization technique yields efficient solutions for specific pricing models. Similar discretization methods could be investigated in future work to potentially improve our approach in more complex data pricing scenarios.

\textbf{Market design for data-sharing}. In recent years, there has been a plethora of work devoted to algorithmic market design for data sharing~\citep{agarwal2019marketplace, agarwal2020towards, jia2019towards, wang2020principled}. These works provide ingenious solutions to challenges unique to the data market, such as free replicability and the difficulty of valuation due to the combinatorial nature of data. Except for~\citet{agarwal2019marketplace}, the above-cited solutions are inherently \emph{offline or single-shot}. While we focus on a simplified yet relevant setting where data comes from a single source, resulting in monotonic valuations, in this work, we tackle the problem in a sequential, dynamic setting, which has practical importance.
In contrast to our approach, \citet{agarwal2019marketplace} considered the price to be a constant (i.e., a scalar rather than a price vector) to address the inherent computational intractability of multi-dimensional pricing. Instead, we maintain the price as a vector (i.e., a price function) but focus on cases where the valuation function satisfies natural properties such as monotonicity, smoothness, and diminishing returns.

\section{Problem setting, assumptions, and challenges}
\label{sec:setup}
A seller has $\numdata$ homogeneous data points.
There are $m$ types of buyers who wish to purchase this data.
A buyer of type $i\in[m]$ has a valuation curve
$\vali:[N] \rightarrow [0,1]$, where $\vali(n)$ is her value for $n$ data points.
We will assume $\vali(n)$ is non-decreasing as more data is  valuable, and further that
$\vali(0)=0$.

\begin{example}
\label{eg:ml}
\emph{
To motivate this model, consider a seller with $N$ ordered data points
$\{x_1,\dots,x_N\}$, drawn i.i.d. from a distribution $D$.
If a buyer purchases $n$ points, she receives
the first $n$ points, $X_n = \{x_1,\dots,x_n\}$.
Her \emph{ex-post} value $\valtildei(X_n)$ may represent the accuracy of her ML model trained with $X_n$.
However, as the buyer has not seen the data before the purchase, she does not know which specific points she will receive, and hence her (\emph{ex-ante}) value 
$\vali(n) = \EE_{X_n}[\valtildei(X_n)]$
is the expected model accuracy when $n$ i.i.d points are drawn from $D$.
The different types could be buyers who use the data for different tasks or models.
For instance, with ImageNet's~\citep{deng2009imagenet}, $N\approx$ 1.4 million data points, 
different types of buyers could perform
different learning tasks such as object detection, identification, and segmentation, 
and/or train different models such as AlexNet~\citep{krizhevsky2012imagenet}, ResNet~\citep{he2016deep}, and GoogLeNet~\citep{szegedy2015going}.
Both empirically and theoretically, for many learning tasks, $\vali(n)$ is non-decreasing, and satisfies additional characteristics such as smoothness and/or diminishing returns.
}
\end{example}

\parahead{Pricing curves, buyer utility, and buyer purchase model}
Let $\price:[N]\rightarrow [0, 1]$ be a pricing curve chosen by the seller.
Let $\allprices \defeq \{\price:[N]\rightarrow [0,1]:\;  \price(0) = 0\}$ denote the set of all pricing curves.
If a buyer purchases $n$ points, her utility is
$\utili(n) = \vali(n) - \price(n)$.
If a buyer can achieve non-negative utility, i.e. $\vali(n) \geq \price(n)$ for some $n\in[N]$, she
will purchase an amount of data to maximize her utility.
To fully specify the buyer's purchase model,
we will assume that when there are multiple $n$ which maximizes her utility, she will choose the largest such $n$.
Formally, for a given pricing curve $\price$, a buyer of type $i$ will purchase $\nip$ points where,
\begin{align*}
 \nip \defeq  \begin{cases}
        \;0 &\text{if $\vali(n) < \price(n)$ for all $n\in [N]$,}
            \\
         \;\max \big\{\argmax_{n\in[N]} \rbr{v_i(n)-\price(n)}\big\}
            \quad\quad&\text{otherwise}.
 \end{cases}
 \numberthis
 \label{eqn:nip}
\end{align*}
\subparahead{Optimal revenue}
It follows that the revenue from a buyer of type is $\price(\nip)$.
Let $\typedistro=(\typedistroii{1},\dots,\typedistroii{m})$
be the distribution of the buyers. 
Under this distribution $\typedistro$,
the expected revenue 
$\ExpectREV(p)$ for a price curve $\price$,
the optimal price $\optprice$, and
the optimal revenue $\OPT$ as follows:
\begin{align*}
    \ExpectREV(p) \defeq \sum_{i=1}^m \typedistroi \cdot \price(\nip),
    \hspace{0.4in}
    \optprice \defeq \argmax_{p\in\allprices} \ExpectREV(p),
    \hspace{0.4in}
    \OPT \defeq \ExpectREV(\optprice).
    \numberthis
    \label{eqn:revenue}
\end{align*}
We have omitted the dependence on $\typedistro$ in $\ExpectREV$, $\optprice$, and $\OPT$.
There is no closed-form solution to finding the optimal pricing curve, even when $\typedistro$ is known.
Therefore, in~\S\ref{sec:discretization}, we explore
discretization methods to approximate $\optprice$, which will then be used in~\S\ref{sec:stochastic} and~\S\ref{sec:adversarial} to develop online learning algorithms.
Unfortunately, the size of this discretization can be very large in $N$ and $m$ without further assumptions.
Therefore, we also consider two additional commonly satisfied conditions by data.

Our first such assumption states that buyer valuation curves satisfy a Lipschitz-like smoothness condition with Lipschitz constant $L/N$.
We use $L/N$ instead of $L$ since the
number of data has a range $[0, N]$, while the valuations only have a range $[0, 1]$.
This condition states that a buyer's valuation does not change significantly if she only purchases a few additional points.
\begin{assumption}[Smoothness, \textbf{S}]
    \label{asm:smoothness}
    For all $n, n' \in [N]$, we have
    $ \;\vali(n+n') - \vali(n)\leq 
        \frac{L}{N}n'$.
\end{assumption}

Our second condition is based on the fact that data typically exhibits
diminishing returns~\citep{krause2008robust,krause2011submodularity}. This means that an additional data point is more valuable when there is less data,
i.e. $\vali(n+1)-\vali(n)$ is decreasing with $n$.
We will in fact make a stronger assumption, and justify it below.

\begin{assumption}[Diminishing returns, \textbf{D}]
    \label{asm:diminishingreturns} 
    There exists some $\drconstant>0$ such that,
    for all types $i\in[m]$, and for all $n\in[N]$,
    we have $\vali(n+1) - \vali(n) \leq \frac{\drconstant}{n}$.
\end{assumption}

Assumption~\ref{asm:diminishingreturns} quantifies the rate of decrease of diminishing returns.
Following Example~\ref{eg:ml},
the valuation (accuracy) curves for many learning problems take the form $\vali(n) = \alpha - \beta n^{-\gamma}$;
for instance, for binary classification in a VC class $\Hcal$, $\alpha$ may be the best accuracy in $\Hcal$, $\beta\in\bigO({\sqrt{d_\Hcal}})$ where $d_\Hcal$ is the VC dimension, and $\gamma=1/2$~\citep{shalev2014understanding};
similarly, for nonparametric regression of a twice differentiable function, $\alpha$ and $\beta$ are constants while $\gamma=2/5$~\citep{wasserman2006all}.
In such cases, Assumption~\ref{asm:diminishingreturns} is satisfied with $\drconstant=\beta\gamma$.
Note that neither assumption subsumes the other:
a non-concave Lipschitz function will not satisfy Assumption~\ref{asm:diminishingreturns}, while 
a suitable $L$ for a function which satisfies Assumption~\ref{asm:diminishingreturns} may need to be very large for Assumption~\ref{asm:smoothness} to hold for small $n$.

\subsection{Learning to price in online settings}\label{subsec:setuplearning}

In this work, we will also study how a seller may learn to maximize revenue. In our learning problem,
the seller is aware of the valuation curves $\{\vali\}_{i}$ of each type, but does not know the distribution of types (stochastic setting) or there may be no such distribution (adversarial setting).

\parahead{Setup}
The seller repeats the data market for $T$ rounds.
At the beginning of each round, he chooses some price curve $\pricet\in\Pcal$.
\emph{After} the seller has chosen $\pricet$,
a new buyer of type $\typet\in[m]$ appears and purchases
$\nt = \niipp{\typet}{\pricet}$ amount of data (see~\eqref{eqn:nip}).
The buyer is aware of her own valuation curve.
If she makes a purchase, that is if $\nt>0$, she pays $\pricet(\nt)$ to the seller and reveals her type $\typet$.
Otherwise, the buyer will make no payment and not reveal her type.

We have assumed that \emph{a priori}, the seller is aware of the buyer valuation curves $\{\vali\}_{i\in [m]}$, and that buyers are aware of their own valuation curves.
In Example~\ref{eg:ml}, a seller can profile how different machine learning models perform with different amounts of data and publish them ahead of time.
The buyers can also gauge their value from these curves, even though they do not have access to the data.
Next, we have also assumed that buyers will reveal their type after the purchase.
In modern machine learning as a service platforms~\citep{awsforecast,citrine,deltasharing}, buyers directly run their jobs in the seller's computing platform, so the seller can observe the buyers job \emph{type} directly.
Even if this is not the case, sellers can elicit this information via questionnaires and reviews from customers who have made a purchase~\citep{guo2023leveraging}.

\parahead{Challenges}
Despite these assumptions, the learning problem remains challenging for two main reasons.
First, the space of price curves is vast: discretizing the valuations in $[0, 1]$ into $K$ bins, still leaves $\bigO(K^N)$ possible price curves,
which is both statistically and computationally intractable, especially for large $N$.
Second, in addition to the exploration-exploitation trade-off usually encountered in sequential decision-making, the seller faces a tension between high instantaneous revenue and information acquisition:
 setting high prices can yield high immediate revenue if a purchase occurs, but it also increases the risk of no purchase, resulting in no revenue and crucially
 no feedback about the buyer type which could help him in future rounds.
This trade-off was recently studied for single-item markets in a stochastic setting~\citep{guo2023leveraging,zhao2019stochastic}, but is more complex in our multi-item problem.
Moreover, to our knowledge,
no existing work addresses this asymmetric feedback model in an adversarial setting, even for single-item markets.
Next, we describe the buyer arrival model and define the regret for the learning problem in both stochastic and adversarial settings.

\parahead{Stochastic setting}
Here, there is some fixed but unknown distribution of types $\typedistro$.
On each round, a buyer of type $\typet\sim\typedistro$ is drawn independently. The optimal expected revenue $\OPT$ under type distribution $\typedistro$ is as defined in~\eqref{eqn:revenue}.
The regret $\RT$ is as defined below.
We wish to design algorithms which have small expected regret $\EE[\RT]$, where the expectation accounts for both the sampling of types $\typet\sim\typedistro$ and any randomness in the algorithm.
We have,
\begin{align*}
    \RT \;\defeq\; T\cdot \OPT \,-\, \sum_{t=1}^T \pricet(\nt)
        \;=\; T\cdot \OPT \,-\, \sum_{t=1}^T \pricet(\niipp{\typet}{\pricet}).
    \numberthis\label{eqn:stochregret}
\end{align*}

\parahead{Adversarial setting} Here, the types on each round $\{\typet\}_{t=1}^T$ are chosen arbitrarily, possibly by an oblivious adversary, ahead of time.
The type on round $t$ is revealed to the seller only at the end of the round, and only if there is a purchase.
In the adversarial setting,  we define our regret $\RT$ with respect to the single best price in $\Pcal$ in hindsight.
We wish to design algorithms with small expected regret $\EE[\RT]$, where the expectation is with respect to any randomness in the algorithm.
We have,
\begin{align*}
    \RT \;\defeq\; \max_{p\in\Pcal}\sum_{t=1}^T \price(\niipp{\typet}{\price}) \,-\, \sum_{t=1}^T \pricet(\niipp{i_t}{\pricet}).
    \numberthis
    \label{eqn:advregret}
\end{align*}

\section{Efficient discretization of price curves with small errors}
\label{sec:discretization}
\vspace{-0.5em}
We first study the revenue maximization problem in the offline setting, where the seller knows both the valuation curves $v_i, i \in [\numtypes]$, and the type distribution $q$. Our goal is to design a discretization so as to achieve revenue within a gap of $\bigO(\eps)$ from $\OPT$.
Before discussing our discretization algorithms, we first show that the optimal pricing curve is ``simple'' when there are at most $m$ types.

\begin{restatable}{lem}{LemMStep}
    Assume there are $m$ types with non-decreasing value curves $\{v_i\}_{i\in [\numtypes]}$. For any non-decreasing price curve $\price$, there exists an ``$m$-step'' price curve $\bar{\price}$ that yields expected revenue at least that of $\price$ with respect to any distribution over the $m$ types. Here, $m$-step refers to non-decreasing functions $f:[N]\rightarrow [0,1]$ where $f(n+1)-f(n) >0$ in at most $m$ points (i.e., at most $m$ jumps).
    \label{lem:mstep}
\end{restatable}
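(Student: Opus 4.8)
The plan is to show that any non-decreasing price curve $\price$ can be replaced by an $m$-step curve $\bar\price$ without losing revenue on any type distribution. The key observation is that the revenue from a given type $i$ under price curve $\price$ depends only on the single value $\price(\nip)$, where $\nip$ is the amount type $i$ purchases. So although $\price$ may take many distinct values across $[N]$, only at most $m$ of those values — namely $\{\price(n_{1,\price}),\dots,\price(n_{m,\price})\}$ — actually matter for revenue. The idea is to build $\bar\price$ so that it agrees with $\price$ on these ``relevant'' points and is otherwise constant (stepwise) between them, while (i) remaining non-decreasing, (ii) having at most $m$ jumps, and (iii) not decreasing the purchased quantity-price product for any type.

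Concretely, I would first sort the distinct purchase quantities that occur under $\price$, say $0 \le q_1 < q_2 < \dots < q_k \le N$ with $k \le m$, and let $a_j = \price(q_j)$ be the corresponding prices; by monotonicity of $\price$ these are non-decreasing in $j$. Now define $\bar\price$ to be the step function that jumps up to level $a_j$ exactly at $n = q_j$ and stays flat until the next jump: that is, $\bar\price(n) = a_j$ for $q_j \le n < q_{j+1}$ (with $\bar\price(n)=a_k$ for $n \ge q_k$, and $\bar\price(n)=0$ below $q_1$). This $\bar\price$ is non-decreasing with at most $k \le m$ jumps, so it is an $m$-step curve. Since $\bar\price(n) \le \price(n)$ pointwise (on each interval $[q_j,q_{j+1})$ we have $\bar\price = a_j = \price(q_j) \le \price(n)$ by monotonicity of $\price$), every type's utility under $\bar\price$ is at least her utility under $\price$, so every type still achieves non-negative utility if she did under $\price$ and still makes a purchase. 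It remains to verify that the revenue from each type $i$ does not drop. Under $\price$, type $i$ purchases $\nip = q_{j(i)}$ for some index $j(i)$ and pays $a_{j(i)}$. Under $\bar\price$, let $n' = n_{i,\bar\price}$; I must show $\bar\price(n') \ge a_{j(i)}$.

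The crux — and the step I expect to be the main obstacle — is controlling which quantity type $i$ buys under the modified curve $\bar\price$. Because $\bar\price \le \price$, the buyer might now prefer a \emph{larger} purchase (utility can only have gone up, and ties are broken toward larger $n$), so I need to argue that this larger purchase still yields at least $a_{j(i)}$ in revenue. The key sub-claim is that $\bar\price(n') \ge \bar\price(q_{j(i)}) = a_{j(i)}$, which holds as long as $n' \ge q_{j(i)}$; and $n' \ge q_{j(i)}$ because type $i$'s utility at $q_{j(i)}$ under $\bar\price$ is $\vali(q_{j(i)}) - a_{j(i)} = \utili^{\price}(q_{j(i)}) \ge 0$, so $q_{j(i)}$ is a feasible purchase under $\bar\price$, and the tie-breaking rule forces $n'$ to be at least the largest utility-maximizer, hence at least $q_{j(i)}$. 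Wait — this requires that $q_{j(i)}$ be among the utility-maximizers or that all feasible purchases are $\le$ the maximizer; instead I should argue directly: $\bar\price$ is constant at level $a_{j(i)}$ on $[q_{j(i)}, q_{j(i)+1})$ and the buyer weakly prefers $n'$ to $q_{j(i)}$, so $\vali(n') - \bar\price(n') \ge \vali(q_{j(i)}) - a_{j(i)} \ge 0$. If $n' < q_{j(i)}$ then $\bar\price(n') < a_{j(i)}$ would be possible, so the real work is to rule out $n' < q_{j(i)}$: since $\vali$ is non-decreasing and $\bar\price$ is a step function, I would compare the utility at $n'$ with the utility at $q_{j(i)}$ and use that $\bar\price(q_{j(i)}) = a_{j(i)}$ together with monotonicity of $\bar\price$ on the lower interval to derive $\vali(n') - \bar\price(n') \le \vali(q_{j(i)}) - \bar\price(q_{j(i)})$ unless $\bar\price$ is strictly smaller, forcing the tie-breaking (largest maximizer) to pick $n' \ge q_{j(i)}$. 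Once $n' \ge q_{j(i)}$ is established, $\bar\price(n') \ge a_{j(i)}$ by monotonicity of $\bar\price$, so the revenue from type $i$ is $\bar\price(n') \ge a_{j(i)} = \price(\nip)$. Summing over types weighted by any distribution $\typedistro$ gives $\ExpectREV(\bar\price) \ge \ExpectREV(\price)$, completing the proof. I would write this carefully as a per-type argument so the ``for any distribution'' conclusion is immediate.
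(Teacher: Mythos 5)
Your plan --- collapse $p$ to a step function pinned at the purchase quantities and argue each type still pays at least as much --- is the right outline, but the specific step function you construct is wrong, and your attempted fix does not close the gap. You anchor each step to $p$ at its \emph{left} endpoint, setting $\bar p(n)=p(q_j)$ on $[q_j,q_{j+1})$, so that $\bar p\le p$ pointwise. The problem is that this creates cheap points just below each $q_j$: for $n\in[q_{j-1},q_j)$ the price is only $p(q_{j-1})$ while $v_i(n)$ may be close to $v_i(q_j)$, so a type who bought $q_j$ under $p$ can strictly prefer such an $n<q_j$ under $\bar p$ and pay $p(q_{j-1})<p(q_j)$. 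Concretely, take $N=6$, $m=2$, and (listing $f(0),\dots,f(6)$) let $p=(0,.1,.2,.3,.4,.5,.51)$, $v_1=(0,.1,.1,.1,.1,.1,.1)$, $v_2=(0,.1,.3,.5,.9,1,1)$. Under $p$, type $1$ buys $1$ and pays $.1$; type $2$'s utilities at $4$ and $5$ tie at $.5$, so by the tie-break she buys $5$ and pays $.5$. Your $\bar p$ equals $.1$ on $\{1,\dots,4\}$ and $.5$ on $\{5,6\}$, so type $2$'s utility at $4$ becomes $.9-.1=.8$, strictly beating $.5$ at $n=5$; she buys $4$ and pays $.1$. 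Revenue from type $2$ drops from $.5$ to $.1$. Your proposed patch --- ruling out $n'<q_{j(i)}$ by comparing the utility at $n'$ with the utility at $q_{j(i)}$ --- cannot work, since for interior points of the preceding step the inequality genuinely runs the other way.

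The paper anchors each step at the \emph{right} endpoint instead: with $n_{(1)}\le\cdots\le n_{(m)}$ the sorted purchase quantities, it sets $\bar p(n)=p(n_{(j)})$ on $(n_{(j-1)},n_{(j)}]$ (and $\bar p(n)=p(n_{(m)})$ for $n_{(m-1)}<n\le N$). This $\bar p$ is in fact $\ge p$ on step interiors, which is fine: the invariant that matters is not $\bar p\le p$ but that \emph{no interior point of a step dominates its right endpoint}. Since $\bar p$ is constant on $(n_{(k-1)},n_{(k)}]$ and $v_i$ is non-decreasing, for any $n$ in that piece
\begin{align*}
v_i(n)-\bar p(n)\;\le\; v_i(n_{(k)})-\bar p(n_{(k)})\;=\; v_i(n_{(k)})-p(n_{(k)})\;\le\; v_i(\nip)-p(\nip)\;=\; v_i(\nip)-\bar p(\nip),
\end{align*}
so $\nip$ remains optimal among $n\le n_{(m)}$; for $n>n_{(m)}$ the only candidate is $N$ at price $p(n_{(m)})\ge p(\nip)$. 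Either way each type pays at least $p(\nip)$, and summing against any $q$ gives the lemma.
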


Lemma~\ref{lem:mstep}, proven in Appendix~\ref{proof:lem:mstep}, will be an important tool in all three discretization algorithms of this section. It will allow us to reduce the space of pricing curves as we only need to focus on $m$-step price curves. 
Next, we present our first discretization procedure in Algorithm~\ref{alg:approxmonotone}, which
only assumes the monotonicity of the valuation curves.

\parahead{Discretization scheme under monotonic valuations}
Our discretization proecdure, outlined in Algorithm~\ref{alg:approxmonotone},
adapts the method in~\citet{hartline2005near} using Lemma~\ref{lem:mstep}.
For this, we will first construct a discretization $W$ of the valuation space as follows.
Let $Z_{i}=\eps(1+\eps)^{i}$, $i=0,1,\dots, \left \lceil \log _{1+\eps} \frac{1}{\eps}  \right \rceil$ be the powers of $(1+\eps)$ on  price space $\left[\eps , 1\right]$. For each $i$, we let $W_{i}$ be a uniform discretization of the interval $\left[Z_{i-1}, Z_{i+1}\right)$ uniformly with gap $Z_{i-1} \cdot \frac{\eps}{\numtypes}$.
Finally, let $W$ be the union of all such $W_i$. 
According to Lemma~\ref{lem:mstep}, every price function in $\Pcal$ has the same revenue as an $m$-step function. 
We set $\discreteprice$ to be all choices of non-decreasing $\numtypes $-step functions that take value in $W$.
We have the following theorem about Algorithm~\ref{alg:approxmonotone} which we prove in Appendix~\ref{proof:thm:approxmonotone}.

\begin{algorithm}[t]
    \caption{Price discretization scheme under monotonicity}
    \label{alg:approxmonotone}
    \begin{algorithmic}   
    \State \textbf{Given:} Approximation parameter $\eps >0$.
    \State  Let $W$ be discretization of the valuation space $[0,1]$ defined as follows,
    \begin{align*}
    Z_i &\defeq \left\{\eps (1+\eps )^{i};\ \ \;\;\forall\; i\in\left\{0,1,\dots, \left \lceil \log _{1+\eps} \frac{1}{\eps}  \right \rceil \right\}\right\} ,\\
    W_{i}&\defeq\left\{Z_{i-1}+Z_{i-1}\cdot\frac{\eps k}{m}; \;\;\forall\, k\in\{1,2,...,\left\lceil(2+\eps) m\right\rceil\}\right\},\quad W\defeq\bigcup_{i=1}^{\left \lceil \log _{1+\eps} \frac{1}{\eps}  \right \rceil} W_{i}.
    \end{align*}
    \State
    Set $\discreteprice$ to be the class of all ``$m$-step'' functions mapping $[\numdata]$ to $W$.
    \end{algorithmic}
\end{algorithm}

\begin{restatable}{thm}{ThmApproxMonotone} \label{thm:approxmonotone}
     Consider the discretization $\discreteprice$ as constructed in Algorithm~\ref{alg:approxmonotone}.
     For any type distribution, there exists $\price\in\discreteprice$ such that $\ExpectREV(p) \geq \OPT - \bigO(\eps)$.
     Moreover, we have 
     $|\discreteprice| \leq \left ( \frac{e(\numdata-1)}{\numtypes} \right )^\numtypes\left ( e\lceil(2+\eps) \rceil\left\lceil\log _{1+\eps} \frac{1}{\eps}\right\rceil \right )^\numtypes
     \in \bigOtilde\left( \rbr{\frac{N}{\epsilon}}^m\right)$.
\end{restatable}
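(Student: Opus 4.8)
The plan is to prove the two claims separately: the $\bigO(\eps)$ revenue guarantee, which is where the real work lies, and the cardinality bound, which is a counting exercise. For the revenue guarantee I would start by applying Lemma~\ref{lem:mstep} to $\optprice$, obtaining an $\numtypes$-step curve $\bar\price$ with $\ExpectREV(\bar\price)\ge\OPT$. Write $\bar\price$ through its $k\le\numtypes$ plateaus: caps $0=Q_0<Q_1<\dots<Q_k=\numdata$, prices $0=y_0<y_1<\dots<y_k\le 1$, and $T_j$ the set of types that purchase $Q_j$ data under $\bar\price$. I would then build $\price\in\discreteprice$ with the same caps $Q_j$ and with grid prices $\tilde y_j\in W$ chosen \emph{greedily from the bottom up}: with $\Delta_{j,j'}\defeq\min_{i\in T_j}\big(\vali(Q_j)-\vali(Q_{j'})\big)$, let $\tilde y_j$ be the largest element of $W$ that is at most $\min\big\{y_j,\ \min_{j'<j}(\tilde y_{j'}+\Delta_{j,j'})\big\}$ (and set $\tilde y_j=0$ for plateaus whose $y_j$ is $\bigO(\eps)$). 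The naive alternative --- rounding each $y_j$ to the nearest grid point below it --- fails: after such an asymmetric perturbation a type in $T_j$ can strictly prefer a far cheaper plateau, so the revenue lost from a single type can be $\bigOmega(1)$. The greedy rule is designed so that $\tilde y_j-\tilde y_{j'}\le\Delta_{j,j'}$ for every $j'<j$, which (with monotone valuations and the tie-breaking rule favoring larger purchases) ensures that every $i\in T_j$ still weakly prefers $Q_j$ over every smaller cap under $\price$ and still has nonnegative utility there; hence $i$ never ``downgrades'' and her payment under $\price$ is at least $\tilde y_j$.

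It then remains to control the drop $y_j-\tilde y_j$. Since $\bar\price$ is revenue-optimal, each $i\in T_j$ weakly preferred $Q_j$ to $Q_{j'}$, which gives $y_j-y_{j'}\le\Delta_{j,j'}$; consequently $\min\{y_j,\min_{j'<j}(\tilde y_{j'}+\Delta_{j,j'})\}$ lies within $\max_{j'<j}(y_{j'}-\tilde y_{j'})$ of $y_j$, and rounding it down into $W$ costs at most the local grid spacing, which is $\le\tfrac{\eps}{\numtypes}$ in absolute terms (relative spacing $\eps/\numtypes$, values $\le1$ --- this is exactly why each multiplicative band of $W$ carries $\Theta(\numtypes)$ points spanning the full band $[Z_{i-1},Z_{i+1})$). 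A telescoping estimate over the $\le\numtypes$ plateaus then yields $y_j-\tilde y_j\le\eps$ for all $j$, so $\price(\niipp{i}{\price})\ge\tilde y_j\ge\bar\price(\niipp{i}{\bar\price})-\eps$ for every type $i$, and averaging over the type distribution gives $\ExpectREV(\price)\ge\ExpectREV(\bar\price)-\eps\ge\OPT-\bigO(\eps)$. I expect this construction to be the main obstacle. Besides the telescoping bound, one must keep the $\tilde y_j$ nondecreasing: if the greedy value for $\tilde y_j$ would fall below $\tilde y_{j-1}$, then $y_j$ and $y_{j-1}$ must already be within $\bigO(\eps)$, so one deletes plateau $j-1$ (merging it upward), which costs only another $\bigO(\eps)$ per affected type and does not disturb the telescoping; one must also argue the plateaus with $y_j=\bigO(\eps)$ contribute $\bigO(\eps)$ in aggregate.

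For the cardinality bound, a member of $\discreteprice$ is specified by at most $\numtypes$ jump locations in $\{1,\dots,\numdata-1\}$ together with a nondecreasing labeling of those steps by values from $W$. The number of location choices is at most $\big(e(\numdata-1)/\numtypes\big)^{\numtypes}$ (using $\sum_{\ell\le\numtypes}\binom{\numdata-1}{\ell}\le\big(e(\numdata-1)/\numtypes\big)^{\numtypes}$), and the number of value labelings is at most $\binom{|W|+\numtypes-1}{\numtypes}\le\big(e(|W|+\numtypes-1)/\numtypes\big)^{\numtypes}$. Since $W=\bigcup_i W_i$ has $|W|\le\lceil\log_{1+\eps}\tfrac1\eps\rceil\cdot\lceil(2+\eps)\numtypes\rceil$, the second factor is at most $\big(e\lceil(2+\eps)\rceil\lceil\log_{1+\eps}\tfrac1\eps\rceil\big)^{\numtypes}$, which gives the stated product. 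Finally, the number of bands is $\Theta(\eps^{-1}\log\eps^{-1})$ and each has $\Theta(\numtypes)$ points, so $|W|=\bigOtilde(\numtypes/\eps)$ and hence $|\discreteprice|=\bigOtilde\big((\numdata/\eps)^{\numtypes}\big)$.
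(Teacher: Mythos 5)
Your proposal reaches the same conclusion but by a genuinely different construction in the crucial step (the paper's Lemma dis\_step2). Both you and the paper begin with Lemma~\ref{lem:mstep} and correctly identify the central obstacle: rounding each step height down independently can let a type ``downgrade'' to a much cheaper plateau, losing $\Omega(1)$ revenue. The paper's fix is \emph{oblivious}: it sets $p_j = w_j - (j-1)d_j$, subtracting an increasing number of grid steps as $j$ grows, so that a buyer at a higher step always receives at least as large a discount as at any lower step; no further reference to the valuation curves is needed, and the loss is bounded multiplicatively via $p_j \geq \bar p_j/(1+\eps)$, using $p_j \geq Z_{i_j-1}$. Your fix is \emph{valuation-aware}: you greedily choose $\tilde y_j$ subject to $\tilde y_j \leq \tilde y_{j'} + \Delta_{j,j'}$ where $\Delta_{j,j'}$ is a valuation gap, directly encoding the non-downgrade constraint, and you account additively via the telescoping $y_j - \tilde y_j \leq j\eps/m$ using the fact that every band of $W$ has absolute spacing at most $\eps/m$. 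Both routes give $\bigO(\eps)$, and both sides rely on exactly the same grid $W$, just exploiting it differently (geometric structure for the paper's multiplicative bound, absolute spacing for your additive one). Your cardinality bound is the same counting argument; you use the multiset bound $\binom{|W|+m-1}{m}$ where the paper uses $\binom{|W|}{m}$ (step values are strictly increasing by the definition of an $m$-step function), so your numeric constant is slightly looser but the $\bigOtilde((N/\eps)^m)$ asymptotic is identical.

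Two caveats. First, the handling of plateaus with $y_j = \bigO(\eps)$ is cleaner in the paper: it preprocesses by flooring the entire curve at $\eps$ (Lemma dis\_step1) before any rounding, so every value to be discretized is guaranteed to lie in the range covered by $W$; you instead set $\tilde y_j = 0$ for such plateaus, which takes $\tilde p$ outside $W$-valued functions unless you also drop/merge those plateaus, and the per-type accounting still needs a line of argument. Second, your nondecreasingness fix-up by ``merging plateau $j-1$ upward'' is more delicate than you indicate: after deleting step $j-1$, you must recheck both that $\tilde y_j \geq \tilde y_{j-2}$ (which isn't immediate) and that types in $T_{j''}$ for $j''<j-1$, who now face a lowered price on the absorbed range, still pay at least $\tilde y_{j''}$. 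The paper faces an analogous issue and resolves it with an explicit running-maximum construction and a monotonicity claim on $\bar p_j - p'_j$; some such lemma would be needed to complete your argument. Both gaps are acknowledged by you and are fillable, but as written they are genuine holes rather than routine details.
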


\parahead{Discretization scheme for smooth monotonic valuations} Due to space constraints, we present our
 algorithm, under Assumption \ref{asm:smoothness} in Appendix ~\ref{proof:thm:approxsmooth}. We have the following theorem about Algorithm \ref{alg:approxsmooth}.

\begin{restatable}{thm}{ThmApproxSmooth}
     Consider the discretization $\discreteprice$ as constructed in Algorithm~\ref{alg:approxsmooth}.
     Under Assumption~\ref{asm:smoothness}, for any type distribution,
     there exists $\price \in \discreteprice$ such that $\ExpectREV(p) \geq \OPT - \bigO(\eps)$.
     Moreover, $|\discreteprice|\in\bigO\rbr{  \log^m_{1+\eps}\left(1/\eps\right)\cdot \rbr{ L/\eps}^m} \in \bigOtilde\rbr{ \rbr{\frac{L}{\eps^2} }^m}$.
     \label{thm:approxsmooth}
\end{restatable}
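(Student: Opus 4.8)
The plan is to discretize the valuation axis and the quantity axis separately, and to cut down the search space using Lemma~\ref{lem:mstep}. On the valuation axis I would reuse the geometric grid $W$ of powers of $(1+\eps)$ from Algorithm~\ref{alg:approxmonotone}, so $|W| = \bigOtilde(1/\eps)$. The new ingredient, enabled by Assumption~\ref{asm:smoothness}, is to discretize the \emph{quantity} axis $[\numdata]$ by a \emph{uniform} grid $G = \{0,\Delta,2\Delta,\dots\}$ with spacing $\Delta = \lceil \numdata\eps/L\rceil$, so $|G| = \bigO(L/\eps)$; the role of the $L/\numdata$-Lipschitz condition is that moving a step by fewer than $\Delta$ points perturbs every $\vali$ by at most $(L/\numdata)\Delta \le \eps$ at every quantity (equivalently, $G$ coarse-grains $[\numdata]$ into $\bigO(L/\eps)$ blocks within each of which the valuations barely move). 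I then let $\discreteprice$ consist of all $\numtypes$-step functions whose jumps lie in $G$ and whose values lie in $W$. Since an $\numtypes$-step function is specified by at most $\numtypes$ jumps from $G$ and a non-decreasing sequence of at most $\numtypes$ values from $W$, we get $|\discreteprice| \in \bigOtilde\big(|G|^\numtypes|W|^\numtypes\big) = \bigOtilde\big((L/\eps)^\numtypes(1/\eps)^\numtypes\big) = \bigOtilde\big((L/\eps^2)^\numtypes\big)$ --- independent of $\numdata$, matching the claimed bound (the $\log^\numtypes_{1+\eps}(1/\eps)$ factor being $|W|^\numtypes$).

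For the approximation guarantee I would start from an optimal pricing curve, which by Lemma~\ref{lem:mstep} may be taken to be an $\numtypes$-step curve $p^\star$ with $\ExpectREV(p^\star) = \OPT$; write its jumps as $n_1 < \dots < n_k$, its values as $w_1 < \dots < w_k$, and set $r_i = p^\star(\niipp{i}{p^\star})$. I would first \emph{soften} $p^\star$ to $\tilde p = (1-\eps)p^\star$. A short argument --- using that valuations are non-decreasing and that an indifferent buyer takes the \emph{largest} utility-maximizing quantity --- shows that under $\tilde p$ no type moves to a cheaper step than it used under $p^\star$, so $\ExpectREV(\tilde p) \ge (1-\eps)\OPT \ge \OPT - \eps$, and, crucially, each type now has surplus at least $\eps r_i$, so it is not pinned to a fragile indifference. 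Then I would round $\tilde p$ into $\discreteprice$ --- rounding each value to $W$ and each jump to a nearby point of $G$ --- to obtain $\hat p \in \discreteprice$. The value-rounding loss is controlled exactly as in the proof of Theorem~\ref{thm:approxmonotone}. For the quantity-rounding loss, smoothness guarantees that it perturbs each type's utility from each step by at most $\eps$; the goal is then to show, using the $\eps r_i$ slack and the tie-breaking rule (and setting aside the types with $r_i = \bigO(\eps)$, which contribute only $\bigO(\eps)$ to $\OPT$ in total), that each remaining type's payment under $\hat p$ is at least $(1-\eps)r_i - \bigO(\eps)$. Summing over the $\numtypes$ types with weights $\typedistroi$ then yields $\ExpectREV(\hat p) \ge \OPT - \bigO(\eps)$.

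I expect the quantity-rounding estimate to be the main obstacle, because it is \emph{not} true in general that a small perturbation of a pricing curve changes revenue by only a little: a buyer who is nearly indifferent between a cheap, small bundle and an expensive, large bundle can be flipped to the cheap bundle by an $\eps$-sized change, losing $\Theta(1)$ revenue. Pushing the bound through requires the softening, the smoothness bound, and the tie-breaking rule to work together --- and likely also a careful choice of which direction to round each jump (delaying a jump weakly improves every buyer's menu and cannot cause a drop-out, whereas advancing a jump needs more care) and/or a global, rather than type-by-type, accounting of the revenue lost to flips. Finally, I would handle the routine edge cases: a jump that rounds past $\numdata$ is capped at $\numdata$, and if rounding makes two step values coincide the steps are merged, which only decreases $k$.
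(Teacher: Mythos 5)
Your overall blueprint matches the paper's: invoke Lemma~\ref{lem:mstep} to restrict to $m$-step curves, reuse the geometric valuation grid $W$ of Algorithm~\ref{alg:approxmonotone}, add a uniform quantity grid of size roughly $L/\eps$, and count $m$-step functions on the product grid. The $|\discreteprice|$ count is also handled the same way. But the heart of the theorem is the quantity-rounding argument, and there you have a gap that you yourself flag but do not close, and that cannot be closed with the softening you chose.

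Concretely, the multiplicative softening $\tilde p = (1-\eps)p^\star$ gives step~$i$ a utility slack of only $\eps\, p^\star_i \le \eps$, while your grid spacing $\Delta \approx N\eps/L$ lets a single quantity rounding perturb $v_i$ by up to $\eps$. What must be bounded is the \emph{differential} perturbation between two steps $j<i$: in the worst case (step $i$'s jump rounded down by nearly $\Delta$, step $j$'s untouched), the rounded curve loses up to $\eps$ of utility at step $i$ relative to step $j$, while the softening only added $\eps(p^\star_i - p^\star_j) < \eps$ of favoritism toward step $i$. The inequality you need, namely that no type drifts down to a cheaper step, does not follow; the slack and the perturbation are the same order, with the wrong sign possible. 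The paper resolves this by making the extra ``discount'' at step $i$ \emph{grow with the step index}: it subtracts $i\bar\eps$ from step $i$ with $\bar\eps = \eps/m$ (so the gap between any $j<i$ grows by at least $\bar\eps$), and it uses the correspondingly finer grid $\delta = \lfloor \eps N/(mL)\rfloor$ so the per-step quantity perturbation is at most $\bar\eps$, which the $\bar\eps$ of accumulated price slack can absorb. Your $\Delta = \lceil N\eps/L\rceil$ is a factor of $m$ too coarse for this to work, and your uniform $(1-\eps)$ softening lacks the accumulating-with-index structure.

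There is a second, smaller ingredient you omit: the staggered subtraction $-i\bar\eps$ can make the rounded price curve non-monotone if two consecutive steps of $p^\star$ are within $\bar\eps$ of each other. The paper preempts this with Algorithm~\ref{alg:discre_smooth_extra}, a preprocessing pass that merges such adjacent steps at a cost of at most $\bar\eps$ each (so $\eps$ total), before any rounding happens. In your scheme, since the grid $G$ and the softening are different, you would need a corresponding monotonicity repair, and your remark about ``merging coinciding step values'' only covers exact collisions, not the near-collisions that the subtraction creates. So the two genuine gaps are: (i) the slack-versus-perturbation accounting across steps, which your uniform softening cannot make close; and (ii) enforcing monotonicity of the rounded curve when the per-step price adjustments differ.
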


\begin{algorithm}[t]
    \caption{Price discretization scheme monotonic valuations under diminishing returns}
    \label{alg:approxdiminishing}
    \begin{algorithmic}   
    \State \textbf{Given:} Diminishing returns constant $\drconstant$, approximation parameter $\eps$.
    \State Let $W\defeq\bigcup_{i=2}^{\left \lceil \log _{1+\eps} \frac{1}{\eps}  \right \rceil} W_{i}$, were $W_i$s are the same as in Algorithm~\ref{alg:approxmonotone}.
    \vspace{0.5em}
    \State Let $\dataDiminish$ be discretization of the interval $[0,\numdata]$ defined as follows,
    \begin{align*}
    Y_i &\defeq \left\lfloor \frac{2\drconstant \numtypes}{\eps^2}(1+\eps^2)^i \right\rfloor,\ i = 0,1,\dots, \left\lceil   \log_{1+\eps^2}\left( \frac{N \eps^2}{2\drconstant  m} \right)  \right\rceil, \\
    Q_{i}&\defeq \left \{  \left\lfloor Y_i+Y_i\cdot\frac{ \eps^2 k}{2\drconstant \numtypes}  \right\rfloor,\ \ k=0,1,\dots,\left\lfloor 2\drconstant \numtypes \right\rfloor  \right \},\quad Q \defeq\bigcup_{i=1}^{ \left\lceil   \log_{1+\eps^2}\left( \frac{N \eps^2}{2\drconstant  m} \right)  \right\rceil} Q_{i}, \\ 
    \dataDiminish  &\defeq  \left\{  1,2,\dots, \left\lfloor \frac{2 \drconstant\numtypes}{\eps^2} \right\rfloor  \right\} \cup Q . 
    \end{align*}
    \State The discretization price set $\discreteprice$ is the class of all ``$m$-step'' price curves on function space $  \dataDiminish \to W$.   
    \end{algorithmic}
\end{algorithm} 

\parahead{Discretization scheme for monotone valuations under diminishing returns}
Finally, we study discretization schemes under the diminishing returns condition.
Our procedure, outlined in Algorithm~\ref{alg:approxdiminishing} proceeds as follows.
We use the same discretization $W$ of the valuation space from Algorithm~\ref{alg:approxmonotone}.
Next, we will discretize the dataspace $[N]$.
To exploit the structure in the diminishing returns condition, we will need to do so more densely when $n$ is small. For this, let $Y_i= \frac{2\drconstant \numtypes}{\eps^2}(1+\eps^2)^i$, $i=0,\dots, \lceil \log_{1+\eps^2} \frac{N \eps^2}{2\drconstant  \numtypes} \rceil$ be the powers of $(1+\eps^2)$ on data space $\left[\frac{2\drconstant \numtypes}{\eps^2}, \numdata \right]$. For each $i$, the set $Q_i$ further partitions the interval $[Y_i, Y_{i+1})$ uniformly with gap $Y_i \cdot \frac{\eps^2}{ 2\drconstant \numtypes}$.
For $n$ smaller than $\frac{2\drconstant \numtypes}{\eps^2}$,  we do not discretize it as the valuations may change rapidly when $n$ is small.
Let $ \dataDiminish$ be the union of $ \left\{ 1,2,\dots, \left\lfloor \frac{2 \drconstant\numtypes}{\eps^2} \right\rfloor \right\}$ and all the set $Q_i$. Therefore, $\dataDiminish $ has a size of at most $ \frac{2\drconstant \numtypes}{\eps^2}+2\drconstant \numtypes \lceil \log_{1+\eps^2} \frac{N \eps^2}{2\drconstant  \numtypes} \rceil $.
We have the following theorem about Algorithm~\ref{alg:approxdiminishing} which we prove in Appendix~\ref{proof:thm:approxdiminishing}.

\begin{restatable}{thm}{ThmApproxDiminishing}
    Consider the discretization $\discreteprice$ as constructed in Algorithm~\ref{alg:approxdiminishing}.
    Under Assumption~\ref{asm:diminishingreturns}, for any type distribution, there exists $\price \in \discreteprice$ such that $\ExpectREV(p) \geq \OPT - \bigO(\eps)$. 
    Moreover, 
   \begin{align*}
       |\discreteprice|\in\bigO\rbr{ \rbr{\frac{\drconstant}{\eps^2}}^{\numtypes}\log^\numtypes\left( \frac{N \eps^2}{\drconstant m} \right)\cdot \left(\log^\numtypes_{1+\eps} 1/\eps \right)} \in \bigOtilde\rbr{ \rbr {\frac{\drconstant}{\eps^3}}^{\numtypes}}.
   \end{align*}
   \label{thm:approxdiminishing}
\end{restatable}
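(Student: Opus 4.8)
The plan is to mirror the structure of the proof of Theorem~\ref{thm:approxmonotone}, but to split the analysis of the approximation error into two regimes according to whether the amount of data purchased is small ($n < 2\drconstant m/\eps^2$) or large. By Lemma~\ref{lem:mstep}, it suffices to approximate an arbitrary $m$-step optimal price curve $\optprice$; write its jump locations as $n_1 < \dots < n_k$ ($k \le m$) with corresponding values $\optprice(n_j)$. I would first snap each price value $\optprice(n_j)$ \emph{down} to the nearest point of $W$; exactly as in Theorem~\ref{thm:approxmonotone}, because $W$ is geometric with ratio $(1+\eps)$ refined by a uniform grid of relative width $\eps/m$, this loses at most a multiplicative $(1-\eps)$ and an additive $O(\eps/m)$ per step, hence at most $O(\eps)$ in total revenue (summing over the $m$ types, each of whom contributes a value in $[0,1]$). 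The nontrivial part is snapping each jump location $n_j$ \emph{down} to the nearest point of $\dataDiminish$: when $n_j$ is in the undiscretized prefix $\{1,\dots,\lfloor 2\drconstant m/\eps^2\rfloor\}$ there is nothing to do; when $n_j$ is large, I round it down to the closest $Q$-point $n_j'$, which satisfies $n_j' \le n_j \le (1+\eps^2/(2\drconstant m))\,n_j'$ by construction of the $Q_i$'s.

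The key step is then to bound the revenue loss from moving the jump from $n_j$ to $n_j'$. A buyer of type $i$ who previously purchased $n_j$ points at price $\optprice(n_j)$ now, facing the modified curve, will purchase \emph{at least} $n_j'$ points (monotonicity of the modified $m$-step curve guarantees the price at $n_j'$ is the rounded-down value, and her utility at $n_j'$ is at least her old utility minus the valuation gap). The revenue collected from her drops by at most $\optprice(n_j) - \optprice'(n_j') \le \big(\vali(n_j) - \vali(n_j')\big) + \big(\optprice(n_j)-\optprice'(n_j)\big)$, where the second term is already controlled by the price-rounding above. For the first term I invoke Assumption~\ref{asm:diminishingreturns}: telescoping, $\vali(n_j)-\vali(n_j') \le \sum_{n=n_j'}^{n_j-1}\drconstant/n \le \drconstant \cdot \frac{n_j - n_j'}{n_j'} \le \drconstant \cdot \frac{\eps^2}{2\drconstant m} = \frac{\eps^2}{2m}$. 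Summing over the $m$ types gives an additive loss of at most $\eps^2/2 = O(\eps)$; combined with the price-rounding error this yields $\ExpectREV(\price) \ge \OPT - O(\eps)$ for the resulting $\price \in \discreteprice$. I would write this as a single chain of inequalities per type to keep it clean, being careful that the buyer's \emph{re-optimized} purchase only helps the seller (she may buy more, never less, and the price is monotone), so the naive ``she buys exactly $n_j'$'' bound is a valid lower bound on revenue.

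For the cardinality bound, I count: $|\dataDiminish| \le \frac{2\drconstant m}{\eps^2} + 2\drconstant m\big\lceil\log_{1+\eps^2}\frac{N\eps^2}{2\drconstant m}\big\rceil = \widetilde{O}(\drconstant/\eps^2)$ using $\log_{1+\eps^2}(\cdot) = O(\eps^{-2}\log(\cdot))$ — wait, more carefully, the geometric factor is $(1+\eps^2)$ so there are $O(\eps^{-2}\log(N\eps^2/\drconstant m))$ scales, each contributing $O(\drconstant m)$ points, so $|\dataDiminish| = O(\drconstant m \eps^{-2}\log(N\eps^2/(\drconstant m)))$; and $|W| = O(m \log_{1+\eps}(1/\eps)) = \widetilde O(m/\eps)$ as in Algorithm~\ref{alg:approxmonotone}. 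An $m$-step curve is determined by choosing $\le m$ jump locations from $\dataDiminish$ and $\le m$ values from $W$, so $|\discreteprice| \le \binom{|\dataDiminish|}{m}\binom{|W|}{m} \le (e|\dataDiminish|/m)^m (e|W|/m)^m$, which after substituting the two counts and absorbing logs into $\widetilde O$ gives the stated $\bigO\big((\drconstant/\eps^2)^m \log^m(N\eps^2/(\drconstant m))\cdot \log^m_{1+\eps}(1/\eps)\big) = \bigOtilde\big((\drconstant/\eps^3)^m\big)$, the extra $\eps^{-1}$ coming from the $\log_{1+\eps}(1/\eps) = O(\eps^{-1}\log(1/\eps))$ factor.

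The main obstacle I anticipate is the joint rounding: moving a jump location changes which bundle \emph{every} type buys, and a careless argument could double-count the price-rounding and location-rounding losses or, worse, fail to notice that a type might switch to a \emph{different} jump after the perturbation. The fix is to argue monotonically — process the rounded curve as a whole, observe it is still $m$-step and dominated below (in price) within $O(\eps)$ of $\optprice$ at every $n$ in a way that respects the shifted breakpoints, and use that a buyer facing a pointwise-lower price with only slightly-shifted valuations obtains weakly higher utility and hence the seller's revenue from her drops by at most the pointwise price gap at her (new) purchase point, which is $O(\eps/m + \eps^2/m)$ per type. Everything else is bookkeeping analogous to Theorem~\ref{thm:approxmonotone}.
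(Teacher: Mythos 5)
Your high-level plan coincides with the paper's: reduce to $m$-step curves via Lemma~\ref{lem:mstep}, round price values down to $W$, round jump locations down to $\dataDiminish$, bound the valuation gap $\vali(n_j)-\vali(n_j')$ by $O(\eps^2/m)$ via Assumption~\ref{asm:diminishingreturns} and a telescoping sum, and count $m$-step curves by $\binom{|\dataDiminish|}{m}\binom{|W|}{m}$. The cardinality bound is fine, and the telescoping estimate is the right quantitative idea (you do drop the ``$+1$'' slack coming from the floor in the definition of $\dataDiminish$; Lemma~\ref{lem:niprime} gives $n^\star_j - n_j \le n_j\cdot\frac{\eps^2}{2Jm} + 1$, so an extra $J/n_j \le \eps^2/(2m)$ must be carried, which the paper absorbs into the final $\delta_j \le \eps^2/m$).

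The genuine gap is at the point you yourself flag as the ``main obstacle'', and the fix you sketch does not close it. Two concrete problems. First, the rounded curve is \emph{not} pointwise $\le \optprice$: on $n\in(n_j', n_j]$ the new curve charges step $(j{+}1)$'s price, which is generically larger than $\optprice$'s step-$j$ price, so the ``pointwise-lower price $\Rightarrow$ weakly higher utility'' step is unavailable. Second, the claim that a type ``may buy more, never less'' is not automatic. Whether a type who preferred step $j$ over a lower step $j' < j$ under $\optprice$ still prefers step $j$ under the new curve requires $(\optprice_j - \price_j) - \delta_j \ge (\optprice_{j'} - \price_{j'}) - \delta_{j'}$, where $\delta_j := \vali(n^\star_j) - \vali(n_j)\ge 0$. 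With a naive snap-down $\price_j = w_j$, the rounding errors $\optprice_j - w_j$ and $\optprice_{j'} - w_{j'}$ lie in $[0,d_j)$ and $[0,d_{j'})$ with no guaranteed ordering, so this inequality can fail, the type can switch down to a cheaper step, and then her revenue loss is not bounded by the pointwise gap at her new purchase point. Your bound on the per-type revenue loss \emph{presupposes} the no-downward-switch property in the very place you need to prove it.

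The paper closes this with the same staircase correction already used in Lemma~\ref{lem:dis_step2}: instead of $\price_j = w_j$ it sets $\price_j = w_j - j\cdot d_j$, where $d_j$ is the local uniform sub-grid spacing of $W$ at step $j$ (so $d_j \ge \eps^2/m$). This makes the price reduction at higher steps strictly exceed that at lower steps by at least $\eps^2/m$, which dominates $\delta_j \le \eps^2/m$, so step preferences are preserved and one also gets $\price_j \ge \optprice_j/(1+\eps)$ pointwise on the steps. That is the missing ingredient; the rest of your argument (Lemma~\ref{lem:mstep} reduction, telescoping via diminishing returns, and the binomial count) tracks the paper.
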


\emph{Proof outline}. 
By Lemma \ref{lem:mstep}, we may assume the optimal price curve $\price^\star=\left\{ (n^\star_i,\price^\star_i) \right\}_{i=1}^\numtypes$ is an $m$-step function, where $\price^\star_i$ denote the value of $p$ on step $i$. We generate an $m$-step price curve $\price= \left\{ (n_i,\price_i) \right\}_{i=1}^\numtypes$ on space $ \dataDiminish \to W$ such that $n_i$ is obtained by rounding down $n^\star_i$ to the closest value in $\dataDiminish$, and $\price_i \geq \price^\star_i/(1+\eps)$. We then show that if a buyer purchases at step $i$ under price $\price^\star$, she will not purchase at step $j<i$ under new price $\price$. Therefore, the revenue from this buyer is at least $\price_i \geq \price^\star_i/(1+\eps)=\price^\star_i - \bigO(\eps)$, which ensures that $\ExpectREV(\price) \geq \OPT - \bigO(\eps)$.
\section{Online learning in the stochastic setting}
\label{sec:stochastic}
\vspace{-0.5em}
We now study the online learning problem outlined in~\S\ref{subsec:setuplearning} in the stochastic setting.
Our Algorithm, outlined in Algorithm~\ref{alg:stochastic} is based on the classical upper confidence bound (UCB) algorithm for stochastic bandits~\citep{auer2002using,lai1985asymptotically}.
It takes a discretization $\discreteprice$ of the pricing curves as input,
and on each round chooses a $\pricet\in\discreteprice$ which has the largest UCB on the revenue.

The key challenge lies in constructing an UCB. As $\discreteprice$ is large, naively constructing UCB over prices in $\discreteprice$ will lead to a $\sqrt{ |\discreteprice| T \log T}$ upper bound, leading to poor, exponential dependence on $m$. This is the bound if we only observe the reward for the prices that are actually pulled, but do not observe the types after purchase. Therefore, naively applying UCB is like bandit feedback. On the other extreme, had we been in an alternative setting where we observe the type regardless of purchase, this is like a full information feedback because once observe the type, we know the revenue for all prices. Then UCB gives us $\sqrt{ \log (|\discreteprice|) T \log T} $ upper bound. We are in an intermediate regime between bandit feedback and full information: The challenge in constructing the UCB arises because we only observe types upon purchase. As the key unknown is the type distribution, we maintain UCBs for it and translate them to UCBs for the revenue. In particular, our UCB depends on how many times a buyer
\emph{could} have purchased at a given round, which is a random quantity depending on the algorithm itself. 

\subparahead{Construction of UCB} 
We will now show how to construct the upper confidence bound $\UCBREVt$ at the end of round $t$, which will be used in computing $\price_{t+1}$.
For $\tau\leq t$, let $\settau$, defined below in~\eqref{eqn:Tit}, be the set of types who would have purchased in round $\tau$ at price $\price_\tau$ had they appeared in that round.
Then, for any type $i\in[m]$, we define $T_{i,t}$ to be the number of times that type $i$ appears in set $S_{\tau}$ for $\tau\in\{1,\dots, t\}$.
That is, $T_{i,t}$ measures the number of times a buyer of type $i$ would have purchased during the first $t$ rounds.
We have,
\begin{align*}
    \settau\defeq\big\{ i \in [m]:\exists n \in [\numdata], v_{i}(n)-\price_\tau(n) \geq 0 \big\},
    \hspace{0.3in}
    T_{i,t}\defeq\sum_{\tau=1}^{t}\indf(i \in \set_{\tau})
    \numberthis \label{eqn:Tit}.
\end{align*}
Note that as we use the $0$ price function on round 1, i.e. $\price_1(\cdot)=0$, 
we have $T_{i,t}>0$ for all $t>1$.
Next, we estimate $\typedistro_i$
via the fraction of times that type $i$ has appeared in the past $t$ rounds, provided that $i\in \settau$ for $\tau\in\{1,\dots,t\}$.
We have defined this quantity,
$\overline{q}_{i,t}$ below in~\eqref{eqn:qitucb}.
Via a standard application of Hoeffding's inequality, we can show that
$\left| q_i-\overline{q}_{i,t} \right| \leq \sqrt{(\log T)/T_{i,t}}$ with high probability.
Using this, we can construct an upper confidence bound $\widehat{q}_{i,t}$ as follows,
\begin{align*}
    \overline{q}_{i,t} \defeq 
    \frac{1}{T_{i,t}} \sum_{\tau=1}^{t}\indf(i\in \set_\tau, i_{\tau}=i),
    \hspace{0.4in}
    \widehat{q}_{i,t} \defeq \overline{q}_{i,t}+  \sqrt{\frac{\log T}{T_{i,t}}}.
    \numberthis
    \label{eqn:qitucb}
\end{align*}

\begin{algorithm}[t]
    \caption{Online data pricing in the stochastic setting.}
    \label{alg:stochastic}
    \begin{algorithmic}     
        \State \textbf{Given:} time horizon $T$,
        discretization $\discreteprice$ of price curves.
        \State Set $\price_1$ to be the zero function. 
        \Comment{Give data away for free on round 1.}
        \State A buyer of type $\type_1\sim\typedistro$ arrives and purchases $N$ data points at price 0.
        \For{$t = 2 $ to $T$}
        \State Compute the UCB $\UCBREVtmo(\price)$ on the revenue   of $\price$ for each $\price\in\discreteprice$. 
\Comment{See~\eqref{eqn:Tit},~\eqref{eqn:qitucb}, and~\eqref{eqn:ucb_expectedrevenue}.}
        \State Set $\price_{t} = \argmax_{\price\in\discreteprice}\UCBREVtmo(\price)$.
        \State A buyer of type $\typet\sim\typedistro$ arrives,  purchases $\niipp{\typet}{\pricet}$ points, and pays $\pricet(\nitpt)$.
        \EndFor
    \end{algorithmic}
\end{algorithm}

We now translate the UCBs on $\typedistro$ to the UCBs on the revenue.
Recall from~\eqref{eqn:nip} that a buyer of type $i$ will
purchase $\nip$ points at price $\price$ and the revenue from this buyer will be $\price(\nip)$.
Note that as the seller has access to the valuation curves, he can compute $\nip$ for any $i$ and price curve $\price$.
Since $\ExpectREV(p) = \EE_{i\sim q}[\price(\nip)]$, we have the following natural UCB
for $\ExpectREV(p)$ on round $t$: 
\begin{align*}
   \UCBREV(\price) \;\defeq \;\sum_{i=1}^{m}\widehat{q}_{i,t} \cdot \price(\nip).  \numberthis 
   \label{eqn:ucb_expectedrevenue}
\end{align*}

This completes the description of our construction.
The following theorem bounds the regret for Algorithm~\ref{alg:stochastic} when paired with any of the discretization schemes in~\S\ref{sec:discretization}.
While the computational complexity of our method depends on $|\Pcal|$, there is no dependence on the regret because of the above construction of the UCB.
The proof is given in Appendix~\ref{proof:thm:stoch_trueregret}.

\begin{restatable}{thm}{ThmStochTrueRegret}
Suppose in Algorithm~\ref{alg:stochastic} we use a discretization $\discreteprice$ which is a $\bigO(1/\sqrt{T})$ additive approximation to any price curve.
Then, the regret of Algorithm~\ref{alg:stochastic} satisfies 
$\EE[\RT] \in \bigOtilde(m\sqrt{T})$.
\label{thm:stoch_trueregret}
\end{restatable}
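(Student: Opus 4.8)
The plan is to decompose the regret into two sources: (i) the \emph{discretization error}, i.e. the gap between $\OPT$ and the best revenue achievable by a price curve in $\discreteprice$, and (ii) the \emph{learning error}, i.e. the regret of the UCB algorithm relative to the best curve in $\discreteprice$. Write $p^\dagger \defeq \argmax_{p \in \discreteprice} \ExpectREV(p)$. By the discretization guarantees (Theorem~\ref{thm:approxmonotone}, \ref{thm:approxsmooth}, or \ref{thm:approxdiminishing}), we have $\ExpectREV(p^\dagger) \geq \OPT - \bigO(1/\sqrt{T})$, so the contribution of (i) to $\EE[\RT]$ is $\bigO(\sqrt{T})$, which is within the claimed bound. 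The remaining work is to show the learning error is $\bigOtilde(m\sqrt{T})$, which is where the nontrivial analysis lies.

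For the learning error, first I would establish the good event $\mathcal{E}$ on which $|q_i - \overline{q}_{i,t}| \leq \sqrt{(\log T)/T_{i,t}}$ simultaneously for all $i \in [m]$ and all $t \leq T$; by Hoeffding combined with a union bound over at most $mT$ events (and conditioning carefully so that the summands defining $\overline{q}_{i,t}$ form a martingale-difference-like sequence indexed by the rounds where $i \in S_\tau$), $\PP(\mathcal{E}^c) \leq 1/\mathrm{poly}(T)$, so its contribution to expected regret is $\bigO(1)$. On $\mathcal{E}$, $\widehat{q}_{i,t}$ is a genuine upper bound on $q_i$, hence $\UCBREVt(p) \geq \ExpectREV(p)$ for every $p$; in particular $\UCBREVt(\pricet) = \max_{p}\UCBREVt(p) \geq \UCBREVt(p^\dagger) \geq \ExpectREV(p^\dagger)$. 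The per-round regret against $p^\dagger$ is then bounded by $\ExpectREV(p^\dagger) - \ExpectREV(\pricet) \leq \UCBREVt(\pricet) - \ExpectREV(\pricet) = \sum_i (\widehat{q}_{i,t} - q_i)\,\pricet(\nipt) \leq \sum_i (\widehat{q}_{i,t} - q_i)$ using $\pricet(\cdot) \in [0,1]$, and on $\mathcal{E}$ this is at most $\sum_i 2\sqrt{(\log T)/T_{i,t-1}}$. Here I should be careful: the per-round \emph{pseudo}-regret $\ExpectREV(p^\dagger) - \pricet(\nitpt)$ differs from the realized payment, but taking expectations conditioned on the history restores $\EE[\pricet(\nitpt)\mid \mathcal F_{t-1}] = \ExpectREV(\pricet)$, so summing the conditional expectations is legitimate.

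It remains to sum $\sum_{t=2}^T \sum_{i=1}^m \sqrt{(\log T)/T_{i,t-1}}$. The key structural observation is that $T_{i,t} = \sum_{\tau \leq t}\indf(i \in S_\tau)$ counts exactly the rounds where type $i$ \emph{would} have purchased, and whenever $i \in S_t$ this counter increments; so for each fixed $i$, the sequence of values $1/\sqrt{T_{i,t-1}}$ summed over the rounds $t$ with $i \in S_t$ telescopes like $\sum_{k=1}^{T_{i,T}} 1/\sqrt{k} = \bigO(\sqrt{T_{i,T}})$ — but on rounds where $i \notin S_t$ the counter does not move, so I must argue those rounds contribute nothing extra to the regret against $p^\dagger$. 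Indeed, if $i \notin S_t$ then type $i$ makes no purchase at \emph{any} price that keeps $v_i(n) - p(n) \geq 0$ infeasible, so type $i$ contributes $0$ revenue to \emph{both} $\pricet$ and $p^\dagger$ — no, more carefully, $p^\dagger$ could extract revenue from a type not in $S_t$ only if... actually $i \notin S_t$ means $v_i(n) < \pricet(n)$ for all $n$, which says nothing about $p^\dagger$; the cleaner route is to note we only ever need $\widehat q_{i,t}$ in the sum $\sum_i (\widehat q_{i,t}-q_i)\pricet(\nipt)$, and if $i \notin S_t$ at the relevant price this term vanishes by definition of $\nipt$ — so the $1/\sqrt{T_{i,t-1}}$ term is weighted by $\indf(i \in S_t')$-type factors and the effective count still telescopes. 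After this bookkeeping, $\sum_t \sum_i \sqrt{(\log T)/T_{i,t-1}} \leq \sqrt{\log T}\sum_i \bigO(\sqrt{T_{i,T}}) \leq \sqrt{\log T}\cdot m \cdot \bigO(\sqrt{T})$ by Cauchy–Schwarz and $\sum_i T_{i,T} \leq mT$, giving $\bigOtilde(m\sqrt{T})$.

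The main obstacle I anticipate is the third paragraph: making rigorous that the rounds where a type is \emph{not} in $S_t$ (so its counter stalls but the algorithm may still be mischoosing) do not accumulate uncontrolled regret, and simultaneously that the martingale/independence structure needed for Hoeffding survives the fact that $S_\tau$ (and hence which rounds update $\overline q_{i,\cdot}$) is itself determined by the algorithm's past choices. Handling this asymmetric-feedback coupling — the index set over which we average for type $i$ is endogenous — is exactly the delicate point the authors flagged, and it is where I would spend the bulk of the effort.
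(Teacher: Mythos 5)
Your overall structure coincides with the paper's: decompose the regret into a $\bigO(\sqrt{T})$ discretization loss plus the learning regret against $p^\dagger\in\discreteprice$, build a good event on which the empirical type frequencies concentrate via Hoeffding, and deduce from the chain $\ExpectREV(p^\dagger)\le \UCBREVtmo(p^\dagger)\le \UCBREVtmo(\pricet)$ the per-round bound $\del_{\pricet}\le 2\sum_{i\in S_t}\sqrt{(\log T)/T_{i,t-1}}$ under that event (this is exactly Lemma~\ref{lem:stoch_gap_per_round}). Where you genuinely diverge is the final summation. You sum the confidence widths directly, telescoping $\sum_{t:\, i\in S_t} T_{i,t-1}^{-1/2}\le 2\sqrt{T_{i,T}}$ for each $i$ --- valid because $T_{i,\cdot}$ increments on precisely the rounds in that sum and $T_{i,1}=1$ thanks to the round-one giveaway --- and then using $\sum_i\sqrt{T_{i,T}}\le m\sqrt{T}$. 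The paper instead imports the peeling argument of \citet{kveton2015tight} (via~\citet{chen2016combinatorial}), with nested events $\Gcal_{k,t}$ and coefficient sequences $\alpha_k,\beta_k$. Both routes yield $\bigOtilde(m\sqrt{T})$; yours is more elementary (and gives a cleaner constant), while the paper's heavier machinery additionally produces a distribution-dependent bound of the form $\bigO\big(m\log T\sum_{i\in E_B}1/\del_{i,\min}\big)$ as a byproduct, which a direct telescoping would not.

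Two details to tighten. First, when you write ``$\le\sum_i(\widehat q_{i,t}-q_i)$ using $\pricet(\cdot)\in[0,1]$'' you discard the factor $\pricet(\nip)$ prematurely: that factor is identically zero for $i\notin S_t$, and retaining it is what restricts the sum to $i\in S_t$, which in turn is what lets the telescoping go through (on rounds where $i\notin S_t$ the counter stalls, so an unrestricted sum of $T_{i,t-1}^{-1/2}$ would not telescope). You do diagnose and repair this in your third paragraph, but the restriction should be imposed before the telescoping claim, not patched in afterward. Second, the bad-event contribution to expected regret is $\bigO(m)$ after union-bounding over $i\in[m]$ and $t\le T$ (the paper computes it as $\le 2m$), not $\bigO(1)$, though this is still lower order. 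Your endogeneity concern is resolved exactly as you suspect: one conditions on $T_{i,t}=j$ and observes that the $j$ relevant indicators $\indf(i_\tau=i)$ are i.i.d.\ $\mathrm{Ber}(q_i)$, because the type draw $i_\tau\sim q$ is independent of the history that determines $S_\tau$.
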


\subparahead{Proof challenges}
When bounding the regret, we first observe that the subsets $S\subset[\numtypes]$ induces a partitioning of the price curves,
where $\price$ belongs to the partition of $S$, if all types in $S$ would make a purchase at price $p$, and all types in $S^c$ would not make a purchase at price $p$. 
With this insight, we can view the action of a seller as not just choosing a price curve, but also choosing a set $S_t\subset [n]$. That is, $S_t$ can be viewed as a super-arm in a combinatorial semi-bandit problem~\citep{kveton2015tight}.
\section{Online learning in the adversarial setting}
\label{sec:adversarial}
\vspace{-0.7em}
We now study the adversarial setting. Similar to the stochastic setting, our algorithm will use a discretization of the price curves from~\S\ref{sec:discretization}. We will control regret by bounding both the discretization error and the algorithm's regret relative to the best pricing curve in the discretization.

Before proceeding, let us first contextualize our feedback model against prior work.
If the buyers do not reveal their types, this becomes an adversarial bandit problem with $|\discreteprice|$ arms (pricing curves)~\citep{kleinberg2003value}. Using an algorithm such as EXP-3~\citep{auer2002nonstochastic} results in large $\bigOtilde(T^{\nicefrac{1}{2}}|\discreteprice|^{\nicefrac{1}{2}})$ regret,
which is not ideal due to $|\discreteprice|$'s exponential dependence in $m$.
Conversely, if buyers reveal their types regardless of purchase, this is equivalent to full information feedback, where algorithms such as Hedge or Follow-the-perturbed-leader (FTPL)~\citep{kalai2005efficient} yield $\bigO(T^{\nicefrac{1}{2}}\log^{\nicefrac{1}{2}}|\discreteprice|)$ regret, translating to $\bigOtilde((mT)^{\nicefrac{1}{2}})$ with our discretization schemes in~\S\ref{sec:discretization}. 
In our intermediate regime, where feedback is only revealed upon purchase, we aim for a middle ground.
We show our algorithm, outlined in Algorithm~\ref{alg:adversarialalgorithm}, achieves $\bigOtilde(m^{3/2}T^{\nicefrac{1}{2}})$ regret, which is worse than full information, but still depends polynomially on $m$.

Our algorithm takes a discretization $\discreteprice$ and a perturbation parameter $\theta$ as input.
First, it samples a random perturbation $\expparametp$
from an exponential distribution with pdf $\expparamet  e^{-\expparamet  x}$ for each pricing curve $\price$ in $\discreteprice$.
It maintains rewards
$\{\revt(p)\}_{t,p}$ for each round $t$ and price curve $p$.
On each round, it chooses the price curve that maximizes the perturbed cumulative reward $\sum_{\tau=1}^{t}\rev_{\tau}(\price)+\expparametp$.

This scheme is similar to FTPL, but the key difference is in how we design the
rewards $\{\rev_t(p)\}_{t,p}$.
To describe this, let
$\sett$, defined exactly as in~\eqref{eqn:Tit},
be the set of agents who would have purchased in round $t$ at price $\pricet$.
At the end of the round, if there was a purchase,
for all prices $\price\in\discreteprice$, we set the reward to be
$\rev_t(p)=\price(\niipp{\typet}{\price})$, i.e. the payment we would have received from the buyer at that round, had the price been $\price$ (see~\eqref{eqn:nip}).
If there was no purchase, we know that $\typet \notin \sett$, in which case we set $\revt(\price) = \sum_{\type \in \sett^c} \price(\nip)$. 
In this case, $\revt(p)$ is an upper bound on $\price(\niipp{\typet}{\price})$, and this upper bound
is tight around prices similar to the chosen price $\pricet$;
in fact, $\revt(\pricet) = 0$ if there was no purchase.
Intuitively, $\revt(p)$ deals with the uncertainty of not knowing the type on round $t$ by providing a large reward (as we are taking the sum) to prices that \emph{could have} resulted in a purchase, which encourages exploration of such prices in future rounds.
This intuition will help us bound the regret.

\begin{algorithm}[t]
    \caption{Online data pricing in the adversarial setting. }
    \label{alg:adversarialalgorithm}
    \begin{algorithmic}   
    \State \textbf{Given:} time horizon $T$,
    discretization $\discreteprice$, perturbation parameter $\expparamet$.
    \State For each $\price\in\discreteprice$, sample $\expparametp$ from an exponential distribution with pdf $\expparamet  e^{-\expparamet  x}$

    \For{$t = 1$ to $T$}
        \State Set price curve for the current round
        $\;\;\pricet = \underset{\price \in \discreteprice }{\argmax}
        \displaystyle\sum_{\tau=1}^{t-1} \rev_{\tau}(\price) \; + \; \expparametp$.
        \State A buyer of type $\typet$ arrives, purchases $\niipp{\typet}{\pricet}$ points, and pays $\pricet(\nitpt)$.
        \If{$\nitpt > 0$} \ \ Set $\revt(\price) = \price(\niipp{\typet}{\price})$ for all $\price\in\discreteprice$.
        \Comment{If there was a purchase}
        \Else \ \ Set $\revt(\price) = \sum_{i \in \sett^c} \price(\nip)$ for all $\price\in\discreteprice$. \Comment{See~\eqref{eqn:Tit} for $\sett$.}
        \EndIf
    \EndFor    
    \end{algorithmic}
\end{algorithm} 

Theorem~\ref{thm:adver_trueregret} provides a bound on the
regret for Algorithm~\ref{alg:adversarialalgorithm}.
Its proof is given in Appendix~\ref{proof:thm:adver_trueregret}.
Combining this with the size of $\discreteprice$ under the various assumptions in~\S\ref{sec:discretization}, we obtain $\bigOtilde(m^{\nicefrac{3}{2}}\sqrt{T})$ regret.

\begin{restatable}{thm}{ThmAdverTrueRegret}
    Suppose in Algorithm~\ref{alg:adversarialalgorithm} we use a discretization $\discreteprice$ which is a $\bigO(1/\sqrt{T})$ additive approximation to any price curve.
    Let $\RT$ be as defined in~\eqref{eqn:advregret}.
    Then, for Algorithm~\ref{alg:adversarialalgorithm}, we have 
    $\EE[\RT] \;\in\; \bigO \rbr{\numtypes^2\expparamet T  + \theta^{-1}\left(1+\log \left| \discreteprice \right|\right)}$.
    Setting $\expparamet = \sqrt{\frac{1+\log \left| \discreteprice \right|}{\numtypes^2 T }} $, we have $\EE[\RT] \;\in\; \bigO \big(\numtypes \sqrt{ T\log \left| \discreteprice \right|}\big).$
    \label{thm:adver_trueregret}
\end{restatable}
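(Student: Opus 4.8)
The plan is to follow the standard FTPL analysis, but adapted to the asymmetric-feedback reward structure $\{\revt\}$ defined in Algorithm~\ref{alg:adversarialalgorithm}. First I would split the regret into two pieces: the \emph{discretization error} and the \emph{regret against the best curve in $\discreteprice$}. By Theorem~\ref{thm:approxmonotone} (or its smoothness/diminishing-returns counterparts), there exists $\price^\dagger\in\discreteprice$ with $\sum_t \price^\dagger(\niipp{\typet}{\price^\dagger}) \ge \max_{\price\in\Pcal}\sum_t\price(\niipp{\typet}{\price}) - \bigO(\sqrt T)$ when $\eps = \bigO(1/\sqrt T)$, so it suffices to bound $\max_{\price\in\discreteprice}\sum_t\price(\niipp{\typet}{\price}) - \sum_t\pricet(\nitpt)$ by $\bigO(\numtypes^2\expparamet T + \expparamet^{-1}(1+\log|\discreteprice|))$.

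For the second piece, the key observation is the relation between the designed rewards and the true payments. On any round, the realized payment is $\pricet(\nitpt) = \revt(\pricet)$: if there was a purchase this is the definition of $\revt$ at the chosen curve, and if there was no purchase then $\typet\in\sett^c$ while $\sett$ is exactly the set of types who \emph{would} purchase at $\pricet$, so $\nitpt = 0 = \revt(\pricet)$ as well (here I use that $\sett^c$ is the complement and the chosen price makes no sale to $\typet$). Meanwhile for any fixed benchmark curve $\price$, I claim $\revt(\price) \ge \price(\niipp{\typet}{\price})$ always: when there is a purchase this holds with equality; when there is no purchase, $\typet\in\sett^c$ so $\price(\niipp{\typet}{\price})$ is one of the nonnegative summands in $\revt(\price)=\sum_{i\in\sett^c}\price(\nip)$. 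Hence
\begin{align*}
\max_{\price\in\discreteprice}\sum_{t=1}^T\price(\niipp{\typet}{\price}) - \sum_{t=1}^T\pricet(\nitpt)
\;\le\; \max_{\price\in\discreteprice}\sum_{t=1}^T\revt(\price) - \sum_{t=1}^T\revt(\pricet),
\end{align*}
which is exactly the regret of FTPL with reward sequence $\{\revt\}$, run with exponential perturbations of rate $\expparamet$. The standard FTPL "be-the-leader" argument (comparing $\pricet$, chosen on perturbed cumulative reward through $t-1$, with the perturbed leader through $t$) gives a bound of the form $\expparamet^{-1}(1+\log|\discreteprice|) + \expparamet\sum_t \EE[\text{per-round stability term}]$, where the per-round term is controlled by the magnitude of one round's reward vector. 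The crucial point is that although $\revt(\price)$ can be as large as $\numtypes$ (it is a sum of up to $\numtypes$ prices, each in $[0,1]$), the reward vector increments the cumulative sum, and the probability that the perturbed leader switches away from $\pricet$ on round $t$ is $\bigO(\expparamet \cdot \|\revt\|_\infty) = \bigO(\expparamet\numtypes)$; multiplied by the reward magnitude $\numtypes$ and summed over $T$ rounds yields the $\bigO(\numtypes^2\expparamet T)$ term. Balancing the two terms by choosing $\expparamet = \sqrt{(1+\log|\discreteprice|)/(\numtypes^2 T)}$ gives $\bigO(\numtypes\sqrt{T\log|\discreteprice|})$, and since $\log|\discreteprice| = \bigOtilde(\numtypes)$ under all three discretization schemes, this is $\bigOtilde(\numtypes^{3/2}\sqrt T)$.

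The main obstacle I anticipate is the per-round stability bound, i.e. correctly showing that the probability of a leader change contributes only $\bigO(\expparamet\numtypes)$ rather than something scaling with $\numtypes^2$ or with $|\discreteprice|$. The subtlety is that $\revt$ is not a single-coordinate update (as in classical online shortest-path FTPL) but a dense vector whose entries differ across pricing curves in a structured way; one must argue that the relevant quantity governing the switch is an $\ell_\infty$-type norm of $\revt$, namely $\max_\price \revt(\price)\le\numtypes$, and not an $\ell_1$-type quantity. I would handle this via the usual coupling/one-step-lookahead lemma for exponential perturbations (e.g. the argument in \citet{kalai2005efficient} adapted to bounded, nonnegative, possibly-adversarial reward vectors), taking care that the reward vectors here are adapted to the algorithm's own randomness through $\sett$ — so the analysis should condition on the history and use that, given the past, the round-$t$ reward vector is determined by the adversary's (oblivious) choice of $\typet$ together with the realized $\pricet$. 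Combining this with the decomposition above and the discretization guarantee completes the proof; the final substitution of $\log|\discreteprice|=\bigOtilde(\numtypes)$ to get the $\numtypes^{3/2}$ rate is routine.
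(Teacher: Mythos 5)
Your proposal is correct and follows essentially the same route as the paper: the same decomposition into discretization error plus regret against $\discreteprice$, the same observations that $\revt(\pricet)=\pricet(\nitpt)$ always and $\revt(\price)\ge\price(\niipp{\typet}{\price})$ for any fixed $\price$, and the same FTPL analysis with the be-the-leader lemma plus a stability bound driven by $\|\revt\|_\infty\le\numtypes$ and the memoryless property of the exponential perturbation. The "main obstacle" you flag is handled in the paper exactly as you sketch: conditioning on the history, $\PP(\price_{t+1}\ne\pricet)\le 1-e^{-\numtypes\expparamet}\le\numtypes\expparamet$ since each coordinate of $\revt$ lies in $[0,\numtypes]$, which combined with $|\revt(\price_{t+1})-\revt(\pricet)|\le\numtypes$ yields the $\numtypes^2\expparamet T$ term.
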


\section{Conclusion and Discussion}\label{sec:discussion}
\vspace{-0.8em}
We designed revenue-optimal learning algorithms for pricing data. First, we leveraged properties like smoothness and diminishing returns to create novel discretization schemes for approximating any pricing curve. These schemes were then used in our learning algorithms to improve their statistical and computational properties. Our algorithms build on classical methods like UCB and FTPL but required significant adaptations to handle the vast space of pricing curves and the asymmetric feedback. An interesting future direction would be to relax the assumption that the seller knows the valuation curves ${\vali}$.

\parahead{Computational complexity} Our algorithm is designed to achieve polynomial computational complexity with respect to the number of data points when the number of types is fixed, making it suitable for practical data pricing scenarios where the type count is typically small or bounded. While the overall computational cost grows exponentially with the number of types due to the problem’s strong NP-hardness (see \cite{chawla2022pricing}), this design choice ensures computational feasibility in settings with large datasets and a limited number of types.



\bibliographystyle{abbrvnat}
\bibliography{ref}

\appendix

\section{Omitted Details from Section~\ref{sec:discretization}}\label{app:discretization_proofs}

\subsection{Proof of Lemma~\ref{lem:mstep}}\label{proof:lem:mstep}

\LemMStep*
\begin{proof}[Proof of Lemma~\ref{lem:mstep}]
    Fix a price curve $\price$. Let $\nip$ be the amount of data type $i$ purchase at price curve $\price$, that is
    \begin{align*}
        \nip\defeq \max\left\{ \underset{\nn \in \left[ \numdata \right]}{\argmax} (v_i(\nn)-\price(\nn))\right\}.
    \end{align*}
    For $\{\nip\}_{i\in[m]}$, let $\pi:[m]\rightarrow [m]$ be a permutation such that $n_{\pi(1),\price}\leq n_{\pi(2),\price}\leq \cdots \leq n_{\pi(m),\price}$. Let $n_{(i)}\defeq n_{\pi(i),\price}$. Then, define a function $\bar{\price}:[\numdata]\rightarrow[0,1]$ as follows,
    \begin{align*}
        \bar{\price}(\nn)\defeq\begin{cases}
            \price \rbr{n_{(1)}}, & \nn \leq n_{(1)}, \\ 
            \price \rbr{n_{(2)}}, & n_{(1)} <\nn \leq n_{(2)}, \\ 
            &\vdots\\ 
            \price \rbr{n_{(\numtypes-1)}}, & n_{(\numtypes-2)}<\nn \leq n_{(\numtypes-1)},  \\
            \price \rbr{n_{(\numtypes)}}, & n_{(\numtypes-1)} <\nn \leq \numdata,
        \end{cases}
    \end{align*}
    so that $\bar{\price}$ has at most $\numtypes$ steps. Then, $\bar{\price}$ has following properties,
    \begin{align*}
        &\bar{\price}(\nn)=\price(\nn) , \text{ when } \nn \in \left\{ n_{(1)}, n_{(2)}, \dots,n_{(\numtypes)} \right\}, \\
        & \bar\price (\nn)\leq p(\nn) , \text{ when } \nn \in [\numdata]\setminus  \cbr{n_{(1)}, n_{(2)}, \dots,n_{(\numtypes)} }.
    \end{align*}
    We next prove that for any $i  \in [\numtypes]$, after changing the price function from $\price$ to $\bar{\price}$, the type $i$ buyer either purchases at $(\nip,p(\nip))$ or at $(\numdata,p(n_{(m)}))$.

    For any type  $i$ and any amount of data $\nn \leq n_{(\numtypes)}$, there exists $k $ such that $n_{(k-1)}< \nn \leq  n_{(k)}$ (let $n_{(0)}=0$), we then have
    \begin{align*}
        v_{i}(\nn)-\bar{\price}(\nn) & \leq v_{i} \rbr{n_{(k)}}-\bar{\price} \rbr{n_{(k)}} \tag{as $v_i$ is non-decreasing and $\bar{\price}$ is a step function.}\\
        &= v_{i} \rbr{n_{(k)}}-\price \rbr{n_{(k)}}\tag{as $\bar{\price} \rbr{\nn_{(k)}} = \price \rbr{n_{(k)}}$}\\ 
        & \leq v_{i}(\nip)-p(\nip)\tag{as $\nip$ maximizes the buyer's utility.}\\
        &=  v_{i}(\nip)- \bar\price (\nip) \tag{as $\bar{\price}(\nip) = \price(\nip)$}.
    \end{align*}
    As shown in the above, type $i$ still prefers purchasing $\nip$ data over all $n \leq n_{(\numtypes)}$ under price $\bar{\price}$.
    
    For $\nn \in \cbr{n_{(m)}+1,\dots, N}$, by the monotonicity of value curves, we have 
    \begin{align*}
        \numdata =\max\cbr{ \underset{\nn \in \cbr{n_{(m)}+1,\dots, N}}{\arg\max} \rbr{v_i(\nn)-\bar{\price}(\nn)} }.
    \end{align*}
    Therefore,  for any $i  \in [\numtypes]$, type $i$ either purchases at $(\nip,p(\nip))$, or purchases at $(\numdata, \bar{\price}(N))=(\numdata, \price(n_{(m)}))$ under price $\bar{\price} $. No matter in which case, type $i$ contributes no less revenue under $\bar{\price} $ than $\price$. It then follows that, for any type distribution $q$,
    \begin{align*}
        \ExpectREV(\bar{\price}) \geq \ExpectREV(\price).
    \end{align*}
\end{proof}

\subsection{Proof of Theorem~\ref{thm:approxmonotone}}\label{proof:thm:approxmonotone}

In this subsection, we prove Theorem~\ref{thm:approxmonotone} by decomposing it into three technical lemmas (Lemma~\ref{lem:dis_step1},~\ref{lem:dis_step2} and~\ref{lem:discretization_size}). In Lemma~\ref{lem:dis_step1} and~\ref{lem:dis_step2}, we prove the approximation guarantee of our discretization scheme and, in Lemma~\ref{lem:discretization_size} we provide an upper bound on the size of the discretization.
\begin{lem} \label{lem:dis_step1}
    For any type distribution, there exists a pricing function $\widetilde{\price}:[\numdata]\rightarrow[\eps,1]$ such that
    \begin{align*}
        \ExpectREV(\widetilde{\price})   \geq \OPT-\eps.
    \end{align*}
\end{lem}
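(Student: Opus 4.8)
The plan is to start from the optimal price curve $\optprice$ achieving revenue $\OPT$, and modify it into a curve $\widetilde{\price}$ taking values in $[\eps,1]$ by simply clipping from below at $\eps$: informally, set $\widetilde{\price}(n) = \max\{\optprice(n), \eps\}$ for each $n\in[N]$ (retaining $\widetilde{\price}(0)=0$). Note $\widetilde\price$ remains non-decreasing since $\optprice$ is and the pointwise max with a constant preserves monotonicity. The intuition is that raising the price only on parts of the curve where it was below $\eps$ can only cost us a little: each buyer who previously purchased at a price below $\eps$ contributes at most $\eps$ less after the change (in fact she may stop purchasing entirely or switch to a different quantity, but her contribution was already $<\eps$, so the loss from her is bounded by $\eps$), while buyers who previously purchased at a price $\ge\eps$ — I would argue — continue to contribute at least as much.

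The key steps, in order: \emph{(1)} Fix an arbitrary type distribution $\typedistro$ and let $\optprice$ be the optimal curve, so $\ExpectREV(\optprice) = \OPT$. \emph{(2)} Define $\widetilde\price$ by clipping as above; verify it lies in $\allprices$ with range in $[\eps,1]$ and is non-decreasing. \emph{(3)} The crux: show that for every type $i$, the revenue extracted from type $i$ under $\widetilde\price$ is at least $\optprice(n_{i,\optprice}) - \eps$. Here I would split into cases based on whether $\optprice(n_{i,\optprice}) < \eps$ or $\ge \eps$. If $\optprice(n_{i,\optprice}) < \eps$, then trivially the per-type revenue under $\widetilde\price$ is $\ge 0 \ge \optprice(n_{i,\optprice}) - \eps$. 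If $\optprice(n_{i,\optprice}) \ge \eps$, I want to argue type $i$ still buys and pays at least $\optprice(n_{i,\optprice})$: the point is that clipping only increased the price at quantities $n$ where $\optprice(n) < \eps \le \optprice(n_{i,\optprice})$, i.e. at quantities $n < n_{i,\optprice}$ (by monotonicity, quantities whose price was raised all lie strictly below $n_{i,\optprice}$); at $n = n_{i,\optprice}$ and beyond, $\widetilde\price$ agrees with $\optprice$. So type $i$'s utility at $n_{i,\optprice}$ is unchanged, while her utility at the affected smaller quantities only decreased; hence she still (weakly) prefers $n_{i,\optprice}$ (or possibly a larger quantity where $\widetilde\price = \optprice$, which can only help revenue), and pays at least $\optprice(n_{i,\optprice})$ — I need a small monotonicity argument to handle ties and the "largest maximizer" tie-breaking rule, but this is routine. \emph{(4)} Sum over types: $\ExpectREV(\widetilde\price) = \sum_i \typedistroi \,\widetilde\price(n_{i,\widetilde\price}) \ge \sum_i \typedistroi(\optprice(n_{i,\optprice}) - \eps) = \OPT - \eps$, using $\sum_i \typedistroi = 1$.

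The main obstacle is step \emph{(3)} in the case $\optprice(n_{i,\optprice})\ge\eps$: one must carefully verify that clipping does not cause type $i$ to switch to a \emph{smaller} purchase quantity with lower revenue. The resolution is the observation that all quantities whose price strictly increased have $\optprice$-value below $\eps$, hence (by monotonicity of $\optprice$) lie below any quantity with $\optprice$-value $\ge \eps$, in particular below $n_{i,\optprice}$; so the only quantities made less attractive are ones type $i$ already weakly dis-preferred, and her optimal choice under $\widetilde\price$ yields revenue $\ge \optprice(n_{i,\optprice})$. I would write this comparison out explicitly, mirroring the utility-chain computation used in the proof of Lemma~\ref{lem:mstep}.
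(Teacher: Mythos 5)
Your construction $\widetilde{\price}(n)=\max\{\eps,\optprice(n)\}$ and the per-type case split (contribution $<\eps$ before clipping versus contribution preserved after clipping) is exactly the paper's proof, and your argument is sound. One small caveat: you assert $\optprice$ is non-decreasing, which is not automatic since $\allprices$ does not impose monotonicity (though Lemma~\ref{lem:mstep} lets you take an optimal non-decreasing $m$-step curve WLOG); in any case monotonicity is not needed here, because $\widetilde{\price}\geq\optprice$ pointwise with equality at $n_{i,\optprice}$ whenever $\optprice(n_{i,\optprice})\geq\eps$, so every quantity's utility weakly decreases while the utility at $n_{i,\optprice}$ is preserved, hence the new maximizer set is a subset of the old one still containing $n_{i,\optprice}$ and the largest-maximizer tie-break forces type $i$ to purchase exactly $n_{i,\optprice}$ at the unchanged price.
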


\begin{proof}[Proof of Lemma~\ref{lem:dis_step1}]
Consider the optimal pricing function $\price^{\star}:[\numdata]\rightarrow [0,1]$, i.e., $\OPT = \ExpectREV(p^\star)$. 
Consider price curve $\widetilde{\price}:[\numdata]\rightarrow[\eps,1]$ where $\widetilde{\price}(n)=\max \left(\eps, p^\star(n)\right)$. 

Let $J\defeq\cbr{ n \in [\numdata]: \widetilde{\price}(n)=p^\star(n) }$ be the set of data quantities whose price under $\widetilde{\price}$ are the same as those under $p$. Any buyer type who would have purchased $n\in J$ amount of data under $\price^\star$ will purchase the same amount of data under $\widetilde{\price}$. On the other hand, for buyer types who would have purchased $n \notin J$ amount of data under $\price^\star$, since $\widetilde{\price}(n)=\eps > \price^\star(n)$ for $n \notin J$, the expected revenue contribution from such buyers under $\price^\star$ is at most $\eps$, hence no matter they purchase or not under $\widetilde{\price}$, we have $\ExpectREV(\widetilde{\price}) \geq \OPT-\eps$.
\end{proof}

\begin{lem} \label{lem:dis_step2}
    For any $\widetilde{\price} \in\left[\eps , 1 \right]^{\numdata}$ there exists $\pricenew\in\discreteprice$ such that $\ExpectREV(\pricenew)\geq\ExpectREV(\widetilde{\price})/(1+\eps)$,
    for any type distribution $\typedistro$.
\end{lem}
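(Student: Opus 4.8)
\textbf{Proof proposal for Lemma~\ref{lem:dis_step2}.}

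The plan is to round the price curve $\widetilde{\price}$ downwards into the discretization $\discreteprice$ and argue that the revenue loss is at most a factor $(1+\eps)$. First, by Lemma~\ref{lem:mstep}, I may assume without loss of generality that $\widetilde{\price}$ is an $m$-step curve, say with steps $n_1 < n_2 < \cdots < n_m$ and values $\widetilde{\price}_1 \le \cdots \le \widetilde{\price}_m$ all lying in $[\eps, 1]$; indeed, replacing $\widetilde{\price}$ by its $m$-step surrogate only increases revenue, so it suffices to approximate the surrogate. For each step value $\widetilde{\price}_k \in [\eps, 1]$, locate the index $i$ with $Z_{i-1} \le \widetilde{\price}_k < Z_i$ (recall $Z_i = \eps(1+\eps)^i$), and round $\widetilde{\price}_k$ \emph{down} to the nearest grid point $\pricenew_k \in W_i$. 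Since $W_i$ partitions $[Z_{i-1}, Z_{i+1})$ with spacing $Z_{i-1}\cdot \eps/m \ge \widetilde{\price}_k \cdot \frac{\eps}{m(1+\eps)}$, the rounding satisfies $\widetilde{\price}_k \ge \pricenew_k \ge \widetilde{\price}_k - Z_{i-1}\eps/m \ge \widetilde{\price}_k(1 - \eps/m) $, so in particular $\pricenew_k \ge \widetilde{\price}_k/(1+\eps)$ for each $k$ (here I use $1-\eps/m \ge 1-\eps \ge 1/(1+\eps)$ for $\eps \in (0,1)$; if a slightly tighter argument is needed I will instead keep the per-step $(1-\eps/m)$ factor and compound it $m$ times, still giving a $(1+\eps)$-type bound after re-parametrizing $\eps$). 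The resulting curve $\pricenew$ is still an $m$-step curve taking values in $W$, so $\pricenew \in \discreteprice$.

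The core of the argument is a per-type revenue comparison: I claim that for every type $i$, the revenue extracted from type $i$ under $\pricenew$ is at least $\widetilde{\price}(\niipp{i}{\widetilde{\price}})/(1+\eps)$. Fix a type $i$ and suppose under $\widetilde{\price}$ this buyer purchases at step $k$, i.e.\ $\niipp{i}{\widetilde{\price}} = n_k$, contributing revenue $\widetilde{\price}_k$. I want to show that under $\pricenew$ the buyer purchases at some step $j \ge k$, so that her payment is $\pricenew_j \ge \pricenew_k \ge \widetilde{\price}_k/(1+\eps)$ (using monotonicity of the step values and the per-step bound). To see the buyer does not retreat to an earlier step $j < k$: since she preferred $n_k$ to $n_j$ under $\widetilde{\price}$, we have $v_i(n_k) - \widetilde{\price}_k \ge v_i(n_j) - \widetilde{\price}_j$, hence $v_i(n_k) - v_i(n_j) \ge \widetilde{\price}_k - \widetilde{\price}_j$. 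Under $\pricenew$, the utility difference between $n_k$ and $n_j$ is $v_i(n_k) - v_i(n_j) - (\pricenew_k - \pricenew_j)$. Since we rounded down, $\pricenew_k - \pricenew_j \le \widetilde{\price}_k - \pricenew_j \le \widetilde{\price}_k - \widetilde{\price}_j + (\widetilde{\price}_j - \pricenew_j)$, and I need the slack $\widetilde{\price}_j - \pricenew_j$ to not overturn the inequality; because $\pricenew_j \ge \widetilde{\price}_j(1-\eps/m)$ the slack is small, but in fact the cleanest route is to observe that rounding down \emph{weakly increases} the utility of every quantity, so the buyer's utility at $n_k$ under $\pricenew$ is at least her utility at $n_k$ under $\widetilde{\price}$, which is nonnegative; and her utility at any $n_j$ with $j<k$ increases by at most the same amount — more carefully, I will show $u^{\pricenew}_i(n_k) - u^{\pricenew}_i(n_j) \ge u^{\widetilde{\price}}_i(n_k) - u^{\widetilde{\price}}_i(n_j) - (\text{something} \le 0)$ by noting $\pricenew_k \le \widetilde{\price}_k$ and $\pricenew_j \le \widetilde{\price}_j$ with the decrement at step $k$ being proportionally comparable. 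This monotone-rounding comparison is exactly the kind of step used in the proof outline of Theorem~\ref{thm:approxdiminishing} (``if a buyer purchases at step $i$ under $\price^\star$, she will not purchase at step $j<i$ under the new price''), so I will adapt that argument. Once the buyer is pinned to a step $j \ge k$ (or to $N$ with value $\pricenew_m \ge \widetilde{\price}_m/(1+\eps) \ge \widetilde{\price}_k/(1+\eps)$, handled as in Lemma~\ref{lem:mstep}), her payment is at least $\widetilde{\price}_k/(1+\eps)$.

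Finally, summing the per-type bound against the type distribution $\typedistro$ gives $\ExpectREV(\pricenew) = \sum_i \typedistroi \cdot \pricenew(\niipp{i}{\pricenew}) \ge \sum_i \typedistroi \cdot \widetilde{\price}(\niipp{i}{\widetilde{\price}})/(1+\eps) = \ExpectREV(\widetilde{\price})/(1+\eps)$, as desired. The main obstacle I anticipate is making the ``buyer does not move to an earlier step'' argument fully rigorous: I need the downward rounding to be ``uniform enough'' across steps that the buyer's incentive ordering among the $n_k$'s is preserved up to moving to a \emph{later} (hence at-least-as-profitable-for-the-seller) step. The key quantitative input is that all prices are bounded below by $\eps$, so the multiplicative rounding error $\widetilde{\price}_k - \pricenew_k$ is uniformly controlled relative to $\widetilde{\price}_k$; combined with the fact that rounding down never decreases anyone's utility, this should close the gap. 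I will also need to double-check the edge cases where the buyer purchased nothing under $\widetilde{\price}$ (trivial, revenue $0$ on both sides since lowering prices only helps) and where $k = m$ (use the $n = N$ fallback from Lemma~\ref{lem:mstep}).
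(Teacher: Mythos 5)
Your overall skeleton matches the paper's (reduce to an $m$-step curve via Lemma~\ref{lem:mstep}, round step values into $W$, show the buyer never retreats to an earlier step, sum over types), but there is a genuine gap exactly where you flag "the main obstacle," and the naive rounding you propose does not close it.

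With plain round-down, the per-step discount $\widetilde{\price}_j - \pricenew_j$ is only bounded \emph{above} by the local grid spacing; it can be anywhere between $0$ and $\approx Z_{i_j-1}\eps/m$. Nothing forces the discount at step $k$ to dominate the discount at step $j<k$: if $\widetilde{\price}_k$ happens to lie on the grid while $\widetilde{\price}_j$ lies just above a grid point, then $\widetilde{\price}_k - \pricenew_k = 0 < \widetilde{\price}_j - \pricenew_j$, the utility at $n_j$ rises strictly more than at $n_k$, and the buyer may retreat to $n_j$ and pay $\pricenew_j$, which need not be $\geq \widetilde{\price}_k/(1+\eps)$. Your statement that "her utility at any $n_j$ with $j<k$ increases by at most the same amount" is precisely the thing that is \emph{not} true for naive round-down, and no amount of "all prices are $\geq \eps$" fixes it, because the issue is a \emph{relative} ordering of discounts, not their absolute size.

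The paper's construction supplies the missing mechanism: after rounding $\bar{\price}_j$ down to $w_j \in W_{i_j}$, it sets $\price_j = w_j - (j-1)\,d_j$ with $d_j = (\eps/m)\,Z_{i_j-1}$ (and $d_j$ is nondecreasing in $j$). This guarantees a \emph{lower bound} $\bar{\price}_j - \price_j \geq (j-1)d_j$ and an \emph{upper bound} $\bar{\price}_{j'} - \price_{j'} < j' d_{j'}$ for each $j'$, so for $j' < j$ one gets $\bar{\price}_j - \price_j \geq (j-1)d_j \geq j' d_{j'} > \bar{\price}_{j'} - \price_{j'}$, which is exactly the monotone-discount inequality needed to keep buyers from moving backward. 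The price of this trick is a second issue your proposal silently avoids: the resulting $\price$ can fail to be nondecreasing (later steps are discounted more), so the paper runs a further repair step (setting $\pricenew_i = \max_{j \notin S,\, j<i}\price_j$ on the offending indices $S$) and re-verifies that $\bar{\price}_j - \pricenew_j$ stays nondecreasing. You would need to import both pieces — the $(j-1)d_j$ subtraction and the monotonicity repair — to make the proof go through.

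Also a small point: you invoke the proof outline of Theorem~\ref{thm:approxdiminishing} as the source of the "buyer does not retreat" step, but that argument relies on the diminishing-returns assumption to control $v(n^\star_i)-v(n_i)$; it is not available here, where only monotonicity is assumed and the relevant control must come entirely from the extra per-step subtraction.
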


\begin{proof}[Proof of Lemma~\ref{lem:dis_step2}]
For $m$ buyer types, by Lemma~\ref{lem:mstep}, there exists a non-decreasing step function $\bar{\price} \in[\eps, 1]^\numdata$ with at most $m$ steps, whose expected revenue is at least $\ExpectREV(\widetilde{\price})$. Assume $\bar{\price}$ has $k$ steps, $k \leq \numtypes$. To simplify the notation, for $1 \leq j \leq k$, let $\bar{\price}_j$ denote the price $\bar{\price}$ on $j$th step. That is,
\begin{align*}
    \bar{\price}(\nn)=\begin{cases}
    \bar{\price}_{1}, &\nn  \in ( 0, i_1 ]\cap\mathbb{Z}, \\ 
    \bar{\price}_{2}, &\nn \in ( i_1, i_2 ]\cap\mathbb{Z}, \\ 
    &\vdots \\ 
    \bar{\price}_{k}, &\nn \in ( i_{k-1}, \numdata ]\cap\mathbb{Z}.
\end{cases}
\end{align*}
Where $i_1, \dots, i_{k-1}\in[\numdata]$ are discontinuities in $\bar{\price}$. 

Recall the definitions of $Z$ and $W$ as stated in Algorithm~\ref{alg:approxmonotone},
\begin{align*}
    Z_i &\defeq \left\{\eps (1+\eps )^{i}:\forall\; i\in\left\{0,1,\dots, \left \lceil \log _{1+\eps} \frac{1}{\eps}  \right \rceil \right\}\right\} ,\, Z = \bigcup_{i}^{}Z_i.\\
    W_i&\defeq\left\{Z_{i-1}+Z_{i-1}\cdot\frac{\eps k}{m}: \forall\, k\in\cbr{1,2,...,\left\lceil(2+\eps) m\right\rceil}\right\},\quad W\defeq\bigcup_{i=1}^{\left \lceil \log _{1+\eps} \frac{1}{\eps}  \right \rceil} W_i.
\end{align*}

Let $i_k=\numdata$ and for each $j \in [k]$, let $Z_{i_j}$ be the price obtained by rounding $\bar{\price}_j$ down to the nearest value in $Z$. By constructions of $Z$ and $W$ above, $W_{i_j}$ is a partition of interval $(Z_{i_j-1},Z_{i_j+1})$. Let  $w_j$ be the price obtained by rounding $\bar{\price}_j$ down to the nearest value in  $W_{i_j}$. Set $d_j\defeq\frac{\eps}{m}\cdot Z_{i_j-1}$ and consider $k$-step function $\price$ defined by whose price at $j$th step (denoted $\price_j$) is $w_j-(j-1) d_j \in W_{i_j}$, that is
\begin{align*}
    \price(\nn)=\begin{cases}
    p_1=w_1, &\text{for}\ \nn \in ( 0, i_1 ] \cap\mathbb{Z} , \\ 
    p_2=w_2-d_2, &\text{for}\ \nn \in ( i_1, i_2 ] \cap\mathbb{Z} , \\ 
    &\hspace{-1in}\vdots \\ 
    p_k=w_k-(k-1)d_k, &\text{for}\ \nn \in ( i_{k-1}, \numdata ] \cap\mathbb{Z} .
    \end{cases}
\end{align*}

By the tie-breaking rule and the monotonicity of valuation curves, buyers only purchase among  $0,i_1,i_1,\dots,i_k$ number of data under $\price$ and $\bar{\price}$. 

\textit{Subclaim.} Then, $\price$ and $\bar{\price}$ satisfies the following 
\begin{align*}
    \ExpectREV(\price) \geq \ExpectREV(\bar{\price})/(1+\eps),
    \numberthis\label{eqn:lem_dis_step2_subclaim}
\end{align*}
with respect to any type distribution.

\textit{Proof of the Subclaim.} We prove the above subclaim with two steps.

\textbf{Step 1}: No buyer who prefers to purchase $i_j$ data under $\bar{\price}$ would prefer $i_{j'}$ data for some $j'<j$ under $\price$ (i.e., one with a less price). This is because, when going from price $\bar{\price}$ to $\price$, the increase in the buyer's utility for $i_j$ data is $\bar{\price}_j-p_j$, which is higher than the increase $\bar{\price}_{j'}-p_{j'}$ for $i_{j'}$ data. Formally, this can be seen as follows: For any $j'< j$ we have,
\begin{align*}
  \bar{\price}_j-\price_j \geq w_j-\price_j =(j-1) d_j,
\end{align*}
as $\bar{\price}_j\geq w_j$ and $p_j=w_j-(j-1)d_j$. Moreover, 
\begin{align*}
    \bar{\price}_{j'} &<w_{j'}+d_{j'}\implies\bar{\price}_{j'}-p_{j'} < w_{j'}+d_{j'}- p_{j'}=j' d_{j'} \numberthis\label{eqn:subclaim-step1-1}.
\end{align*}
The inequality $\bar{\price}_{j'} <w_{j'}+d_{j'}$ holds because $w_{j'}$ is the result of rounding down $\bar{\price}_j$ to the nearest value in $W_{i_j}$.

By constructions of sets $Z$ and $W$, we have $d_j \geq d_{j'}$ which implies $(j-1) d_j \geq j'd_{j'}$. Then, by combining the above inequalities, we obtain
\begin{align}
    \bar{\price}_j-\price_j \geq (j-1) d_j \geq j'd_{j'} \geq \bar{\price}_{j'}-p_{j'}. \label{eqn:subclaim-step1-2}
\end{align}

Consider a buyer with value curve $v$ who prefers to purchase at $i_j$ under price $\bar{\price}$, then it must be
\begin{align}
    v(i_j) - \bar{\price}_j > v(i_{j'}) - \bar{\price}_{j'}. \numberthis\label{eqn:subclaim-step1-3}
\end{align}
Then, by combining \eqref{eqn:subclaim-step1-2} and \eqref{eqn:subclaim-step1-3}, we have 
\begin{align*}
    v(i_j) - {\price}_j > v(i_{j'}) - {\price}_{j'},
\end{align*}
therefore the buyer would not purchase at $i_{j'}<i_j$ under $\price$.
\vspace{1em}
\\
\textbf{Step 2}: Next, we claim that $\price_j \geq \bar{\price}_j /(1+\eps)$ for all step $j \in [k]$. Since $Z_{i_j}$ is obtained by rounding $\bar{\price}_j$ down to the nearest value in $Z$, we have
\begin{align*}
    \bar{\price}_j \geq Z_{i_j} = Z_{i_j-1}+ \eps Z_{i_j-1}= Z_{i_j-1}+ md_j.\numberthis\label{eqn:subclaim-step2-2}
\end{align*}
By \eqref{eqn:subclaim-step1-1} and the above, we have
\begin{align*}
    p_j \geq \bar{\price}_j - jd_j \geq  Z_{i_j-1}+ (m - j)d_j \geq Z_{i_j-1},
\end{align*}
where the first inequality is by \eqref{eqn:subclaim-step1-1}, the second is by \eqref{eqn:subclaim-step2-2}, and the third is because $m \leq j$.

Then, it follows that 
\begin{align*}
    \bar{\price}_{j'}-p_j \leq j\cdot d_j = \eps\cdot \frac{j}{m} \cdot Z_{i_j-1} \leq \eps \cdot Z_{i_j-1}  \leq \eps\cdot \price_j \implies \price_j \geq \bar{\price}_j /(1+\eps).
\end{align*}

So far we have proved $\price_j \geq \bar{\price}_j /(1+\eps)$ and no type wants to change their preference to a smaller amount of data under $\price$. If one type purchase at $\bar{\price}_i$ under $\bar{\price}$ and $\price_k$ under $\price$ for $k \geq i$, then $\price_k \geq \price_i \geq \bar{\price}_i /(1+\eps)$. Therefore, we have 
\begin{align*}
    \ExpectREV(\price)    \geq \ExpectREV(\bar{\price}) /(1+\eps) \geq  \ExpectREV(\bar{\price}) /(1+\eps). 
\end{align*}

Since the construction of price $\price$ is not relevant to type distribution, the above holds for any type distribution $\typedistro$, which proves the subclaim.
\qedsymbol

Note that $p$ constructed in the above subclaim is not necessarily non-decreasing as a larger amount of data surfers more price deduction when going from $\bar{\price}$ to $\price$. In this case, we can directly construct a non-decreasing price curve $\pricenew\in\discreteprice$ from $\price$ such that
\begin{align*}
    \ExpectREV(\pricenew) \geq \ExpectREV(\bar{\price})/(1+\eps).
\end{align*}

Let $S\defeq\cbr{ i \in [k]: \exists j<i, \text{ s.t. } p_j>p_i }$. If $S$ is empty, this implies that $\price$ is non-decreasing, hence setting $\pricenew=\price$. If $S$ is not empty, we define $\pricenew$ as follows: Let $\pricenew$ be a $k$-step function with the same jump points $i_1,\dots, i_k$ as $\price$. Let $\pricenew_i$ be the value of $\pricenew$ on $i$th step. Then, for $i\notin S$, let $\pricenew_i=\price_i$; and for $i\in S$, let $\pricenew_i=\max_{j\notin S, j<i}\price_j$. By construction, $\pricenew$ is non-decreasing. Moreover, $\pricenew=\price$ on set $S^c$ and $\pricenew>\price$ on set $S$. 

Next, we claim that $\bar{\price}_j - \pricenew_j$ is non-decreasing for all $j \in [k]$. Both  $(\bar{\price}_j - p_j)_{j\in [k]}$ and $\bar{\price}$ are non-decreasing with respect to $j$ by the previous results. Hence,
\begin{align*}
    \bar{\price}_j - \pricenew_j< \bar{\price}_j - \pricenew_j \leq\bar{\price}_{j+1} - p_{j+1} =\bar{\price}_{j+1} - \pricenew_{j+1}, &\quad \text{ if } j\in S, j+1\notin S, \\ 
    \bar{\price}_j - \pricenew_j= \bar{\price}_j - \pricenew_j \leq \bar{\price}_{j+1} - p_{j+1} =\bar{\price}_{j+1} - \pricenew_{j+1}, & \quad\text{ if }  j\notin S, j+1\notin S, \\ 
    \bar{\price}_j - \pricenew_j= \bar{\price}_j - \pricenew_{j+1} \leq \bar{\price}_{j+1} - \pricenew_{j+1}, &\quad\text{ if }  j\notin S, j+1\in S \tag{as $\pricenew_{j+1} =\pricenew_j$}, \\ 
    \bar{\price}_j - \pricenew_j= \bar{\price}_j - \pricenew_{j+1} \leq \bar{\price}_{j+1} - \pricenew_{j+1}, &\quad\text{ if }  j\in S, j+1\in S. \tag{as $\pricenew_{j+1} =\pricenew_j$ }
\end{align*}

Therefore, any type that prefers to purchase at $j$th step under $\bar{\price}$ would not prefer purchasing at any step $j'<j$ under $\pricenew$, and since $\pricenew_j \geq \price_j\geq \bar{\price}_j /(1+\eps)$, we have
\begin{align*}
    \ExpectREV(\pricenew)   \geq   \ExpectREV(\bar{\price})/(1+\eps)  \geq \ExpectREV(\widetilde{\price})/(1+\eps).
\end{align*}
\end{proof}

\begin{lem} \label{lem:discretization_size}
    When $n>m$, $\left | \discreteprice \right |\leq \left ( \frac{e\numdata}{\numtypes} \right )^{\numtypes }\left ( e\lceil(2+\eps) \rceil\left\lceil\log _{1+\eps } \frac{1}{\eps}\right\rceil \right )^m$.
\end{lem}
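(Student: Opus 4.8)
The plan is to count the $m$-step functions in $\discreteprice$ by bounding separately the number of choices for the \emph{locations} of the jumps and the \emph{values} taken at the steps, and then multiplying the two counts. This product over-counts $\discreteprice$ (a function with fewer than $m$ jumps uses correspondingly fewer values), so it is a valid upper bound, and it factorizes in exactly the way the claimed expression does.

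First I would bound $|W|$. Since $W=\bigcup_{i=1}^{\lceil\log_{1+\eps}\frac1\eps\rceil}W_i$ and each $W_i$ has $\lceil(2+\eps)m\rceil$ elements, we get $|W|\le\lceil\log_{1+\eps}\tfrac1\eps\rceil\cdot\lceil(2+\eps)m\rceil$, and since $\lceil(2+\eps)m\rceil\le\lceil 2+\eps\rceil\,m$ (as $\lceil 2+\eps\rceil m$ is an integer $\ge(2+\eps)m$), this gives $|W|\le m\,\lceil 2+\eps\rceil\,\lceil\log_{1+\eps}\tfrac1\eps\rceil$. In particular $m\le|W|$.

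Next, every $f\in\discreteprice$ is determined by two pieces of data: (a) the set $P\subseteq\{1,\dots,\numdata-1\}$ of positions at which $f(n+1)>f(n)$, which has $|P|\le m$ by the definition of ``$m$-step''; and (b) the strictly increasing list of values in $W$ that $f$ takes on its steps. Since these are price curves with $f(0)=0$ and $W\subseteq[\eps,1]$ contains no zero, this list of (nonzero) values is a strictly increasing subset of $W$ of size at most $m$. Treating (a) and (b) as independent choices over-counts $\discreteprice$, so
\begin{align*}
|\discreteprice|\;\le\;\Bigl(\sum_{k=0}^{m}\binom{\numdata-1}{k}\Bigr)\cdot\Bigl(\sum_{k=0}^{m}\binom{|W|}{k}\Bigr).
\end{align*}
Finally I would invoke the elementary estimate $\sum_{k=0}^{m}\binom{n}{k}\le(en/m)^m$, valid for $1\le m\le n$ (proof: multiply the left side by $(m/n)^m\le(m/n)^k$ and bound by $(1+m/n)^n\le e^m$). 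The hypothesis $\numdata>m$, i.e. $m\le\numdata-1$, lets me apply this with $n=\numdata-1$, giving $\sum_{k=0}^m\binom{\numdata-1}{k}\le\bigl(\tfrac{e(\numdata-1)}{m}\bigr)^m\le\bigl(\tfrac{e\numdata}{m}\bigr)^m$; and $m\le|W|$ lets me apply it with $n=|W|$, giving $\sum_{k=0}^m\binom{|W|}{k}\le\bigl(\tfrac{e|W|}{m}\bigr)^m\le\bigl(e\lceil 2+\eps\rceil\lceil\log_{1+\eps}\tfrac1\eps\rceil\bigr)^m$ by the bound on $|W|$. Multiplying these two estimates yields $|\discreteprice|\le\bigl(\tfrac{e\numdata}{m}\bigr)^m\bigl(e\lceil(2+\eps)\rceil\lceil\log_{1+\eps}\tfrac1\eps\rceil\bigr)^m$, as claimed.

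The one delicate point is the bookkeeping in the counting step: one must be careful that the ``values'' factor carries exponent $m$ rather than $m+1$ (an $m$-jump curve has $m{+}1$ levels, but $f(0)=0$ accounts for one of them, leaving at most $m$ values in $W$), and one must check the binomial-sum bound is applicable — which is precisely where the hypothesis $\numdata>m$ is used (and where $m\le|W|$, which we verified, is needed for the second sum). The remaining manipulations are routine arithmetic with ceilings.
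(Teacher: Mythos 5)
Your proof is correct and follows essentially the same route as the paper's: count $m$-step curves by the product of $\sum_{k\le m}\binom{\numdata-1}{k}$ choices of jump locations and $\sum_{k\le m}\binom{|W|}{k}$ choices of step values, then apply the standard bound $\sum_{k\le m}\binom{n}{k}\le(en/m)^m$ to each factor together with $|W|\le m\lceil 2+\eps\rceil\lceil\log_{1+\eps}\tfrac1\eps\rceil$. Your write-up is, if anything, slightly more careful than the paper's about why the product over-counts and why the values factor has exponent $m$ rather than $m+1$.
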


\begin{proof}[Proof of Lemma~\ref{lem:discretization_size}]
    For any integer $i\leq m$, the number of non-decreasing $i$-step price function is $\binom{\numdata-1}{i}\binom{\left | W \right |}{i}$, hence we have   
    \begin{align*}
        \left| \discreteprice \right| & = \sum_{i=1}^{m} \binom{\numdata-1}{i}\binom{\left | W \right |}{i}  \\ 
        &\leq \left (  \sum_{i=1}^{m} \binom{\numdata-1}{i}\right )\left (\sum_{i=1}^{m}\binom{\left | W \right |}{i}   \right )  \\
        &\leq \left (  \sum_{i=0}^{m} \binom{\numdata-1}{i}\right )\left (\sum_{i=0}^{\numtypes}\binom{\left | W \right |}{i}   \right ) \\
        &\leq  \left ( \frac{e(\numdata-1)}{\numtypes} \right )^{\numtypes}\left ( \frac{e\left | W \right | }{\numtypes} \right )^{\numtypes} \\
        &\leq \left ( \frac{e(\numdata-1)}{\numtypes} \right )^{\numtypes}\left ( e\lceil(2+\eps) \rceil\left\lceil\log _{1+\eps} \frac{1}{\eps}\right\rceil \right )^m 
    \end{align*}

    In the last inequality, we use the fact that $\left | W \right | \leq \lceil(2+\eps) m\rceil\left\lceil\log _{1+\eps} \frac{1}{\eps}\right\rceil $. 
\end{proof}

Finally, Theorem~\ref{thm:approxmonotone} follows directly from the above lemmas.
\ThmApproxMonotone*
\begin{proof}[Proof of Theorem~\ref{thm:approxmonotone}]
    Combining Lemma~\ref{lem:dis_step1} and Lemma~\ref{lem:dis_step2} together, we conclude that there exists price curve $\pricenew \in \discreteprice$ such that 
    \begin{align*}
        \ExpectREV(\pricenew) \geq \frac{\ExpectREV(\tilde{\price})}{1+\eps} \geq \frac{\OPT-\eps }{1+\eps} \geq \OPT - \frac{2\eps}{1+\eps} = \OPT - \bigO(\eps).
    \end{align*}
    The size of $\discreteprice$ follows from Lemma~\ref{lem:discretization_size}.
\end{proof}

\subsection{Price discretization scheme for smooth monotonic valuations}\label{app:subsec:pricedisc_smooth}

We study discretization schemes to approximate monotone valuations under the smoothness condition in Assumption~\ref{asm:smoothness}. Our procedure is outlined in Algorithm~\ref{alg:approxsmooth}. The discretization  $W$ of the valuation space follows Algorithm~\ref{alg:approxmonotone}. Additionally, we uniformly split the data space into multiples of $\left\lfloor \frac{\eps\numdata}{\numtypes L} \right\rfloor$, denoting them as the set $\dataSmooth$. We then set the discretization $\discreteprice$ to be the class of all ``$m$-step'' price curves on the function space $\dataSmooth \to W$. The following theorem, proven in Appendix~\ref{proof:thm:approxsmooth}, outlines the main properties of this discretization scheme: the size of the discretization has no dependence on the number of data $\numdata$.

\begin{algorithm}[!ht]
    \caption{Price discretization scheme for smooth monotonic valuations}
    \label{alg:approxsmooth}
    \begin{algorithmic}   
    \State \textbf{Given:} Smoothness constant $L$, approximation parameter $\eps>0$.
    \State  Let $W$ be discretization of the valuation space $[0,1]$ given in Algorithm~\ref{alg:approxmonotone}.
    \vspace{0.5em}
    \State Let $\dataSmooth$ be the following discretization of the interval $[0,\numdata]$, 
    \begin{align*}
        \delta \defeq \left\lfloor \frac{\eps\numdata}{\numtypes L} \right\rfloor, \hspace{0.4in}
        \dataSmooth \defeq \left\{  \delta k: \  k \in\left\lceil \frac{\numdata}{\delta} \right\rceil \right\}. 
    \end{align*}
    Set $\discreteprice$ to be the class of all ``$m$-step'' functions mapping $\dataSmooth \to W$.
    \end{algorithmic}
\end{algorithm} 

\subsection{Proof of Theorem~\ref{thm:approxsmooth}}\label{proof:thm:approxsmooth}

\ThmApproxSmooth*

\begin{proof}[Proof of Theorem~\ref{thm:approxsmooth}]
    By Lemma~\ref{lem:mstep}, there is a revenue optimal price curve $\price^\star:[N]\rightarrow [0,1]$ which is a $k$-step function, for some $k\in [m]$. Where $p^\star$ can be compactly represented as the following set of tuples:
    \begin{align*}
        \cbr{(n^\star_1,p^\star_1),(n^\star_2,p^\star_2),\dots,(n^\star_k,p^\star_k)},
    \end{align*}
    where $n^\star_1,\dots, n^\star_k$ denote the locations of jumps and $p^\star_i$ denote the value of $p^\star$ on step $i\in [k]$ (i.e. $\price^\star(n)=\price^\star_i$ for $n\in (n^\star_{i-1},n^\star_i]$).

    Let $\bar{\epsilon} := \frac{\epsilon}{m}$. Next, we generate a price ${\price}'$ using Algorithm \ref{alg:discre_smooth_extra}, which ensures that the price curve $\price$  generated in the following step (\ref{eqn:smooth_price}) is non-decreasing. We demonstrate that in each round of Algorithm \ref{alg:discre_smooth_extra}, we incur a revenue loss of at most $\bar{\epsilon}$. If ${\price}'_i > {\price}'_{i-1} + \bar{\epsilon}$, everything remains the same and thus does not affect the expected revenue. If not, we combine the price of step $i$ with step $i-1$, let ${\price}'_j \defeq {\price}'_j - \left({\price}'_i-{\price}'_{i-1}\right) $ for $j =i,\dots, k$. During this process, buyers either make purchases at the same step, or switch to purchase at a higher step. Note that ${\price}'_i -{\price}'_{i-1} < \bar{\epsilon}$, so the revenue loss of each type is at most $\bar{\epsilon}$. This implies that the revenue loss in each round is at most $\bar{\epsilon}$. As there are $k$ rounds, we lose expected revenue of at most $\numtypes  \bar{\epsilon}$. We conclude that $\ExpectREV({\price}')$ is within a gap of $\epsilon$ from $\OPT$, i.e., $\ExpectREV({\price}') \geq \OPT - \epsilon$.
    
    \begin{algorithm}[!ht]
    \caption{Auxiliary Price Adjustment}
    \label{alg:discre_smooth_extra}
    \begin{algorithmic}   
    \State \textbf{Input:} Optimal price curve $\price^\star$.
    \State  Let ${\price}' = \price^\star$.
    \vspace{0.3em}
    \For {$i = 2,\dots, k$}
         \If {${\price}'_i < {\price}'_{i-1} + \bar{\eps}$}
         \For {$j = i,\dots ,k $}
         \State ${\price}'_j= {\price}'_j - \rbr{{\price}'_i-{\price}'_{i-1}} $. \label{eq:discre_smooth_extra}
         \vspace{0.2em}
         \EndFor
         \EndIf 
    \EndFor
    \State \textbf{Output:} Price curve ${\price}'$.
    \end{algorithmic}
\end{algorithm}

    After combining some steps in Algorithm \ref{alg:discre_smooth_extra}, Assume that ${\price}'$ is a $\bar{k}$-step function ($\bar{k} \leq k$) represented by 
    \begin{align*}
        \cbr{({n}'_1,{\price}'_1),({n}'_2,{\price}'_2),\dots,({n}'_{\bar{k}},{\price}'_{\bar{k}})}.
    \end{align*}
    Then, we define a new price curve $\price\in\discreteprice$ as follows: let $\del := \left\lfloor \frac{\bar{\eps } N}{L} \right\rfloor $, then $\price$ is a $\bar{k}$-step function represented by
    \begin{align*}
        \cbr{(n_1,\price_1),(n_2,\price_2),\dots,(n_{\bar{k}},\price_{\bar{k}})},
    \end{align*}
    where
    \begin{align*} 
        n_i \defeq \left\lfloor \frac{{n}'_i}{\del} \right\rfloor\del,\quad \price_i \defeq {\price}'_i - i\bar{\eps }.  \numberthis\label{eqn:smooth_price} 
    \end{align*} 
    
    First, we show that no buyer who purchases at step $i$ under ${\price}'$ would purchase at step $j < i$ under $\price$. Let the buyer's valuation be $v$. First, we prove that the buyer's utility is non-negative at $n_i$:
    \begin{align*}
        v(n_i )- \price_i  & \geq v({n}'_i)-\del\cdot\frac{L}{N}-\price_i \tag{by $L/N$-Smoothness of $v$.} \\ 
        & = v({n}'_i)-\del\cdot\frac{L}{N}- {\price}'_i + i\bar{\eps }  \\ 
        & \geq v({n}'_i)-\bar{\eps }- {\price}'_i + i\bar{\eps } \tag{as $\del\cdot \frac{L}{N} \leq \frac{L}{N} \cdot \frac{\bar{\eps } N}{L} =\bar{\eps }$.}  \\ 
        & =v({n}'_i) -  {\price}'_i +(i-1)\bar{\eps } \\ 
        & \geq  v({n}'_i) -  {\price}'_i \\ 
        & \geq 0.
    \end{align*}
    
    Then, we prove that the buyer's utility at $n_i$ is larger than that of  $n_j$ for $j<i$, therefore, the buyer would not prefer buying at step $j < i$ under price $\price$.
    \begin{align*}
        v(n_i )- \price_i-(v(n_j )- \price_j) & \geq v({n}'_i)-\del\cdot\frac{L}{N}- v({n}'_j) -(\price_i-\price_j) \tag{by $L/N$-Smoothness of $v$.}\\ 
        &  =  v({n}'_i)-\del\cdot\frac{L}{N}- v({n}'_j) -({\price}'_i-{\price}'_j-(i-j)\bar{\eps })  \\  
        & \geq   v({n}'_i)-\bar{\eps }- v({n}'_j) -({\price}'_i-{\price}'_j-(i-j)\bar{\eps })\tag{as $\del \cdot \frac{L}{N} \leq \frac{L}{N}\cdot \frac{\bar{\eps } N}{L} =\bar{\eps }$} \\ 
        &  =( v({n}'_i)- {\price}'_i) - (v({n}'_j)-{\price}'_j)+(i-j-1)\bar{\eps } \\  
        & \geq  ( v({n}'_i)- {\price}'_i) - (v({n}'_j)-{\price}'_j)  \tag{as $i>j$}\\ 
        & \geq 0. \tag{as the buyer prefers $n_i$ than $n_k$ under ${\price}'$.}
    \end{align*}

    Finally, fix the type distribution $(q_1,\dots, q_m)$, then we have
    \begin{align*}
        \ExpectREV({\price}')- \ExpectREV(\price)  & \leq  \sum_{h=1}^{m}q_h\left( \sum_{i=1}^{k} ({\price}'_i-\price_i)\cdot \indf(\text{Type $j$ purchase at }{\price}'_i\text{ under price }{\price}')\right) \\ 
        & \leq m\bar{\eps }  \\ 
        & = \eps. \tag{as $\eps =m \bar{\eps }$.}
    \end{align*}
    
    Hence, $\ExpectREV(\price)$ is within a gap of $2\eps$ from $\OPT$. 
    
    We then apply Theorem~\ref{thm:approxmonotone} to price $\price$. Therefore, it is enough to consider price functions from the set $\dataSmooth\defeq\cbr{ k \del: k = 1,\dots,  \left\lceil \frac{N}{\del} \right\rceil}\subseteq [\numdata]$ to $W$ to approximate the revenue within $\bigO(\eps)$ gap. Moreover, this discretization is of the size $\left\lceil \frac{N}{\del} \right\rceil^{|W|}\in\bigO\rbr{ \rbr{\log_{1+\eps}\left(\frac{1}{\eps}\right)}^m \rbr{\frac{L}{\eps}}^m}$ as $\left\lceil \frac{N}{\del} \right\rceil \in \bigO\rbr{\frac{Lm}{\eps}}$.
\end{proof}

\subsection{Proof of Theorem~\ref{thm:approxdiminishing}}\label{proof:thm:approxdiminishing}

\ThmApproxDiminishing*
\begin{proof}[Proof of Theorem~\ref{thm:approxdiminishing}]  
     For each $i=0,1,\dots,\left\lceil \log_{1+\eps^2}\rbr{\frac{N\eps^2}{2Jm}} \right\rceil$, let $Y_i\defeq\left\lfloor  \frac{2 \drconstant \numtypes}{\eps^2}(1+\eps^2)^i \right\rfloor$, and $Q_i$ be the set $ \cbr{ \left\lfloor Y_i +\frac{Y_i\eps^2}{ 2\drconstant \numtypes}k \right\rfloor: k = 1,\dots,\left\lfloor 2\drconstant\numtypes \right\rfloor } $, i.e., $Q_i$ splits the interval $ [Y_i, Y_{i+1}] $ equally into $2 \numtypes\drconstant$ parts. 
     
     The union of $Q_i$s and the set $\left\{  1,2,\dots, \left\lfloor \frac{2 \drconstant\numtypes}{\eps^2} \right\rfloor  \right\} $ form a set of grids on $[0,\numdata]$, denoted by $\dataDiminish$. There are at most $\frac{2 \drconstant \numtypes}{\eps^2}+2 \drconstant \numtypes \log_{1+\eps^2}\left( \frac{\numdata \eps^2}{2 \drconstant \numtypes} \right)$ grids in total.

    By Lemma~\ref{lem:mstep}, there is a revenue optimal price curve $\price^\star:[N]\rightarrow [0,1]$ which is a $k$-step function, for some $k\in [m]$. Where $p^\star$ can be compactly represented as the following set of tuples:
    \begin{align*}
        \cbr{(n^\star_1,p^\star_1),(n^\star_2,p^\star_2),\dots,(n^\star_k,p^\star_k)},
    \end{align*}
    where ${n}^\star_1,\dots, {n}^\star_k$ denote the locations of jumps and $p^\star_i$ denote the value of $p^\star$ on step $i\in [k]$ (i.e. $\price^\star(n)=\price^\star_i$ for $n\in (n^\star_{i-1},n^\star_i]$).

    
    Then, define a new $k$-step price curve $\price$ via 
    \begin{align*}
        \cbr{(n_1,\price_1),(n_2,\price_2),\dots,(n_k,\price_k)},
    \end{align*}
    where $n_i$ is given by
    \begin{align*}
        n_i & \leftarrow \text{round down } n^\star_i \text{ to the closest grid in }\dataDiminish.
    \end{align*}
    Then we define $\price_i $ below. If $\price^\star_i <\eps(1+\eps)$, let $\price_i =\eps(1+\eps) $; otherwise, let $Z_{n^\star_i}$ be the price obtained by rounding $\price^\star_i $ down to the nearest value in $Z$. By constructions of $Z$ and $W$ above, $W_{n^\star_i}$ is a partition of interval $(Z_{n^\star_i-1},Z_{n^\star_i+1})$.  Let $w_i$ be the price obtained by rounding $p^\star_i$ down to the nearest value in  $W_{n^\star_i}$. Set $d_i\defeq\frac{\eps}{m}\cdot Z_{n^\star_i-1}$. Then define $\price_i \defeq w_i-i \cdot d_i \in W_{n^\star_i}$.
   
     
     First, we prove for $i$ satisfying $\price^\star_i > \eps(1+\eps)$, if a buyer purchases at $n_i$ under price $\price^\star$, she will not purchase at $ n_j,\,j<i$ under new price $\price$. We prove this property separately when $n_i \leq  \frac{2 \drconstant\numtypes}{\eps^2} $ and $n_i >  \frac{2 \drconstant\numtypes}{\eps^2}  $.
     
     (\expandafter{\romannumeral1}) When $n_i > \frac{2 \drconstant\numtypes}{\eps^2}$. 
     
     The buyer's utility at $n_i$ under price $\price$ is, 
     \begin{align*}
         v(n_i)-\price_i = v(n^\star_j)-\price^\star_i+\rbr{\price^\star_i-\price_i-\rbr{v(n^\star_i)-v(n_i )}}. \numberthis\label{eqn:niutility}
     \end{align*}
     Let $\del_i \defeq v(n^\star_i)- v(n_i)$. Then $\del_i$ is upper bounded by,
    \begin{align*}
        \del_i & = \sum_{h=n_i}^{n^\star_i-1}  v(h+1)-v(h) \leq \sum_{h=n_i}^{n^\star_i-1} \frac{\drconstant}{h} \leq \frac{\drconstant}{n_i} (n^\star_i- n_i) \\ 
        & \leq \frac{\drconstant}{n_i} \cdot \left( n_i \cdot \frac{\eps^2}{2 \numtypes \drconstant}  +1\right) = \frac{\eps^2}{2\numtypes} +\frac{\drconstant }{n_i} \leq \frac{\eps^2}{2\numtypes} +\frac{\eps^2}{2\numtypes} =\frac{\eps^2}{\numtypes},\numberthis\label{eqn:deltaj}
    \end{align*} 
    where the third inequality is due to Lemma~\ref{lem:niprime}.

    By the construction of $\price$, we have
    \begin{align*}
         \price^\star_i- \price_i = Z_{n_i-1}\cdot \frac{\eps i}{m}\geq \frac{\eps^2 i}{\numtypes}\geq \frac{\eps^2}{\numtypes} \geq \del_i .\numberthis\label{eqn:pjsubstract}
    \end{align*}

    Therefore, by \eqref{eqn:niutility}, $ v(n_i)-\price_i \geq v(n^\star_i) - \price^\star_i \geq 0$, buyer's utility at $n_i$ under price $\price$ is non-negative.
    
    Next, we claim that $ v(n_i)-\price_i  - \rbr{v(n_j)-\price_j } \geq 0$. To prove this, for any $ j<i$, let $\del_j \defeq v(n^\star_j)- v(n_j)$, then we have 
     \begin{align*}
        & v(n_i)-\price_i  - \rbr{v(n_j)-\price_j } \\  
        & \hspace{1in}= {v(n^\star_i)-\price^\star_i  - (v(n^\star_j)-\price^\star_j )} +(\price^\star_i - \price_i  -\del_i) -(\price^\star_j - \price_j  -\del_j)
    \end{align*} 
    Where ${v(n^\star_i)-\price^\star_i  - (v(n^\star_j)-\price^\star_j )}\geq 0$ because the buyer prefers $n^\star_i$ over $n^\star_j$ under price $\price^\star$. Recall that we have $\del_j\geq0$, then we bound $\del_i-\del_j$ as follows,
    \begin{align*}
        \del_i-\del_j \leq \del_i \leq \frac{\eps^2}{m}. \numberthis \label{eqn:deltaij}
    \end{align*}
    By the construction of $\price_i$, we have, 
    \begin{align*}
        \price^\star_i-\price_i  -(\price^\star_j - \price_j ) & =Z_{n_i-1}\cdot\frac{\eps i}{m} -Z_{n_j-1}\cdot\frac{\eps j}{m} \\  
        & \geq  Z_{n_j-1}\cdot \rbr{\frac{\eps i}{m}-\frac{\eps j}{m}} \tag{as $Z_{n_i-1} \geq Z_{n_j-1}$} \\  
        & \geq Z_{n_j-1}\cdot \rbr{\frac{\eps}{m}} \tag{as $i >j$}\\  
        & \geq \frac{\eps^2}{m}.
        \numberthis\label{eqn:pjdifference}
    \end{align*}
    
    Therefore, combining \eqref{eqn:deltaij} and \eqref{eqn:pjdifference} together, we have 
    \begin{align*}
        v(n_i)-\price_i  - \rbr{v(n_j)-\price_j } \geq {v(n^\star_i)-\price^\star_i  - (v(n^\star_j)-\price^\star_j )} \geq 0.
    \end{align*}
     We conclude that under price $\price$, the buyer prefers $n_i$ over $n_j$, for any $j<i$.
     
     (\expandafter{\romannumeral2}) When $n_i \leq  \frac{2 \drconstant\numtypes}{\eps^2} $.

     In this case, $n_i  = n^\star_i$, and for any $j <i$, we still have $n_j  = n^\star_j$. First, we prove the buyer's utility at $n'_i$ under $\price$ is non-negative:
     \begin{align*}
         v(n_i)-\price_i & = v(n^\star_i)-\price_i \\ 
         & = v(n^\star_i) - p^\star_i        +(p^\star_i-\price_i) \\ 
         & \geq  v(n^\star_i) - p^\star_i \\ 
         & \geq   0.
     \end{align*}
    Then, we show that the buyer prefers $n_i$ over $n_j$ under $\price$:
     \begin{align*}
        v(n_i)-\price_i  - \rbr{v(n_j)-\price_j } &=  {v(n^\star_i)-\price^\star_i  - (v(n^\star_j)-\price^\star_j )} +(\price^\star_i - \price_i  -\del_i) -(\price^\star_j - \price_j -\del_j) \\  
         &= {v(n^\star_i)-\price^\star_i  - (v(n^\star_j)-\price^\star_j )} +(\price^\star_i - \price_i ) -(\price^\star_j - \price_j  ) \\  
         &\geq {v(n^\star_i)-\price^\star_i  - (v(n^\star_j)-\price^\star_j )} \\  
         & \geq 0,
    \end{align*}
    where the first inequality is due to \eqref{eqn:pjdifference}, and the second is because the buyer prefers $n^\star_i$ over $n^\star_j$ under $\price^\star$. 
     
    So far we have completed the proof that for $i$ satisfying $\price^\star_i > \eps(1+\eps)$, if a buyer purchases at $n_i$ under price $\price^\star$, she will not purchase at $ n_j,\,j<i$ under new price $\price$. 
     
    Then, similar to Step 2 in the proof of Lemma~\ref{lem:dis_step2}, we have $\price \geq \frac{\price^\star}{1+\eps}$ pointwise. We then conclude the proof by observing
    \begin{align*}
        \ExpectREV(\price )\geq   \frac{\ExpectREV(\price^\star)-\bigO(\eps)}{1+\eps }=\OPT-\bigO(\eps).
    \end{align*}
\end{proof}

\begin{lem} \label{lem:niprime}
     When $n_i >  \frac{2 \drconstant\numtypes}{\eps^2} $, we have $n^\star_j - n_i \leq n_i \cdot \frac{ \eps^2 }{ 2\drconstant\numtypes} +1$.
\end{lem}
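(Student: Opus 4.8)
The plan is to identify which ``block'' of the discretization $\dataDiminish$ contains the rounded point $n_i$ and then bound the local mesh size of $\dataDiminish$ there; the quantity in question, which should read $n^\star_i - n_i$ to match the way the lemma is invoked inside the proof of Theorem~\ref{thm:approxdiminishing}, is at most that mesh size.

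First I would record the two structural facts. By construction $n_i$ is the largest element of $\dataDiminish$ with $n_i \le n^\star_i$, and the hypothesis $n_i > \frac{2\drconstant\numtypes}{\eps^2} \ge Y_0 = \lfloor \tfrac{2\drconstant\numtypes}{\eps^2}\rfloor$ forces $n_i$ out of the dense prefix $\{1,\dots,\lfloor 2\drconstant\numtypes/\eps^2\rfloor\}$, so $n_i \in Q_\ell$ for some $\ell \ge 0$. Writing $g_k \defeq \lfloor Y_\ell(1 + \frac{\eps^2 k}{2\drconstant\numtypes})\rfloor$ for $k = 0,\dots,\lfloor 2\drconstant\numtypes\rfloor$, we have $g_0 = Y_\ell$, each $g_k \ge Y_\ell$, and $n_i = g_k$ for some such $k$; in particular $Y_\ell \le n_i$, which I will use at the very end.

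Since rounding is downward, $n^\star_i$ lies strictly below the element of $\dataDiminish$ immediately above $n_i$, so it suffices to bound that gap. If $k < \lfloor 2\drconstant\numtypes\rfloor$, the next element is $g_{k+1}$, and the elementary inequality $\lfloor a + b\rfloor - \lfloor a\rfloor \le b + 1$ with $b = Y_\ell\frac{\eps^2}{2\drconstant\numtypes}$ gives $g_{k+1} - g_k \le Y_\ell\frac{\eps^2}{2\drconstant\numtypes} + 1$. If instead $n_i$ is the top element of $Q_\ell$ (i.e. $k = \lfloor 2\drconstant\numtypes\rfloor$), the next element is $Y_{\ell+1} = \lfloor Y_\ell(1+\eps^2)\rfloor$; comparing it with $n_i = \lfloor Y_\ell(1 + \eps^2\tfrac{\lfloor 2\drconstant\numtypes\rfloor}{2\drconstant\numtypes})\rfloor$ via $\lfloor a\rfloor - \lfloor b\rfloor \le a - b + 1$ and using $0 \le 2\drconstant\numtypes - \lfloor 2\drconstant\numtypes\rfloor < 1$ yields $Y_{\ell+1} - n_i \le Y_\ell\eps^2\cdot\frac{2\drconstant\numtypes - \lfloor 2\drconstant\numtypes\rfloor}{2\drconstant\numtypes} + 1 \le Y_\ell\frac{\eps^2}{2\drconstant\numtypes} + 1$. (If $n_i$ is the largest element of $\dataDiminish$ altogether, then $n^\star_i = n_i$ and the bound is trivial.) In every case the gap is at most $Y_\ell\frac{\eps^2}{2\drconstant\numtypes} + 1 \le n_i\frac{\eps^2}{2\drconstant\numtypes} + 1$, which is the claim.

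I expect the only mildly delicate point to be the block-boundary case: one must check that the jump from the top of $Q_\ell$ up to $Y_{\ell+1}$ is still controlled by the \emph{fine} mesh $Y_\ell\frac{\eps^2}{2\drconstant\numtypes}$ rather than the coarse multiplicative step $Y_\ell\eps^2$. This works precisely because consecutive blocks are glued so that the top of $Q_\ell$ is within one fine mesh step of $Y_{\ell+1}$ — which is exactly why $[Y_\ell,Y_{\ell+1}]$ was partitioned into $\lfloor 2\drconstant\numtypes\rfloor$ equal parts. Everything else is routine bookkeeping with floor functions.
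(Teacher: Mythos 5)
Your proof follows the same route as the paper's: represent $n_i$ as an element of some $Q_\ell$, show that $n^\star_i$ is at most the next grid point, bound that one-step gap using $\lfloor a\rfloor \ge a - 1$, and finish with $Y_\ell \le n_i$. You also correctly observe that the lemma statement has a typo ($n^\star_j$ should read $n^\star_i$) and that the rounded point does not live in the dense prefix, which the paper leaves implicit.

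The one place where you are more careful than the paper, and where you also slip, is the block boundary. The paper's proof simply asserts $n^\star_i \le Y_{i'} + Y_{i'}\eps^2(k'+1)/(2\drconstant m)$ without checking that this holds when $k' = \lfloor 2\drconstant m\rfloor$, i.e., when the next grid point is $Y_{i'+1}$ rather than $\lfloor Y_{i'}(1 + \eps^2(k'+1)/(2\drconstant m))\rfloor$; you correctly flag this case. However, your identity $Y_{\ell+1} = \lfloor Y_\ell(1+\eps^2)\rfloor$ is not what Algorithm~\ref{alg:approxdiminishing} defines: it sets $Y_{\ell+1} = \bigl\lfloor \tfrac{2\drconstant m}{\eps^2}(1+\eps^2)^{\ell+1}\bigr\rfloor$, which can be strictly larger than $\lfloor Y_\ell(1+\eps^2)\rfloor$ since $Y_\ell$ is itself already floored. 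The fix is short: from $Y_\ell \ge \tfrac{2\drconstant m}{\eps^2}(1+\eps^2)^\ell - 1$ one gets $Y_{\ell+1} \le (Y_\ell + 1)(1+\eps^2) = Y_\ell(1+\eps^2) + (1+\eps^2)$, and running your computation with this bound yields a gap of at most $Y_\ell\tfrac{\eps^2}{2\drconstant m} + (2+\eps^2)$ rather than $+1$. This costs only an absolute constant and does not affect the use of the lemma inside the proof of Theorem~\ref{thm:approxdiminishing}; notably the paper's own proof has the identical constant slack if one patches its boundary case. So your write-up and the paper's are equally correct in substance, with your version being marginally more honest about where the delicacy lives, but both would need the same small repair to give the stated $+1$ literally.
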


\begin{proof}[Proof of Lemma~\ref{lem:niprime}]
    By the construction of discretization set, $n_i$ must have the following form, \[ \left\lfloor Y_{i'}+ Y_{i'} \cdot \frac{ \eps^2 k'}{ 2\drconstant\numtypes }\right\rfloor , \text{ where } Y_{i'} =  \left\lfloor \frac{2\drconstant \numtypes}{\eps^2} (1+\eps^2)^{i'}\right\rfloor\text{ for some } i', k' \in \mathbb{Z}. \]
    Since ${n}'_j$ is obtained by rounding down $n_j$ to the nearest grid in $\dataDiminish$, $n_j$ satisfies the following inequality,
    \begin{align*}
        n_j \leq n^\star_j \leq Y_{i'}+ Y_{i'} \cdot \frac{ \eps^2 (k'+1)}{ 2\drconstant\numtypes }.
    \end{align*}
    Therefore, we have 
    \begin{align*}
        n^\star_i - n_i  & \leq  Y_{i'}+ Y_{i'} \cdot \frac{ \eps^2 (k'+1)}{ 2\drconstant\numtypes} -n_i \\ 
        & =Y_{i'}+ Y_{i'} \cdot \frac{ \eps^2 (k'+1)}{ 2\drconstant\numtypes} - \left\lfloor Y_{i'}+ Y_{i'} \cdot \frac{ \eps^2 k'}{ 2\drconstant\numtypes }\right\rfloor \\ 
        & \leq Y_{i'}+ Y_{i'} \cdot \frac{ \eps^2 (k'+1)}{ 2\drconstant\numtypes}  - \left( Y_{i'}+ Y_{i'} \cdot \frac{ \eps^2 k'}{ 2\drconstant\numtypes } \right)+1 \\ 
        & = Y_{i'} \cdot \frac{ \eps^2 }{ 2\drconstant\numtypes} +1 \\ 
        & \leq n_i \cdot \frac{ \eps^2 }{ 2\drconstant\numtypes} +1 .
    \end{align*}
    Where in the last inequality, since $Y_{i'}$ is an integer, and we have
    \begin{align*}
        {n}'_i =  \left\lfloor Y_{i'}+ Y_{i'} \cdot \frac{ \eps^2 k'}{ 2\drconstant\numtypes }\right\rfloor \geq  Y_{i'}, \text{ for}\ k' \geq 0.
    \end{align*}

\end{proof}

\section{Proof of Theorem \ref{thm:adver_trueregret}}\label{proof:thm:adver_trueregret}

\ThmAdverTrueRegret*
\begin{proof}[Proof of Theorem \ref{thm:adver_trueregret}]
Recall that the regret $R_T$ for the adversarial setting is 
\begin{align*}
   \RT & \;\defeq\; \max_{p\in\Pcal}\sum_{t=1}^T \rev(\typet, \price) \,-\, \sum_{t=1}^T \rev(\typet, \pricet)     \\ 
   & = \underbrace{ \;\; \max_{p\in\Pcal}\sum_{t=1}^T \rev(\typet, \price)  \,-\,\max_{p\in\discreteprice}\sum_{t=1}^T \rev(\typet, \price)}_{\text{Loss of revenue due to discretization}} \,+ \, \underbrace{\max_{p\in\discreteprice}\sum_{t=1}^T \rev(\typet, \price)  \,-\, \sum_{t=1}^T \rev(\typet, \pricet).}_{\text{$\;\defeq\;\overline{R}_T$  (discretization regret)}} \numberthis\label{eqn:regretdecompo_adver}
\end{align*}

We decompose $R_T$ into two regrets. The first term is the sacrifice of revenue on discretization. The second term is the algorithm regret when competing against the optimal price within the discretization set $\discreteprice$.

According to Theorem \ref{thm:approxmonotone}, our discretization scheme approaches optimal revenue within a gap of $\frac{2\eps}{1+\eps}$:
\begin{align*}
     \max_{p\in\Pcal}\sum_{t=1}^T \rev(\typet, \price)  \,-\,\max_{p\in\discreteprice}\sum_{t=1}^T \rev(\typet, \price) \leq \frac{2\eps T}{1+\eps} < 2\eps T. \numberthis \label{eqn:OPT_disregret_adver}
\end{align*}
Therefore, the first term can be bounded by $2\eps T$.

According to Theorem \ref{thm:adver_disregret}, the second term discretization regret is upper bounded by
\begin{align*}
    \EE [\overline{R}_T] \leq 3\numtypes \sqrt{T \log \left| \discreteprice \right|}.  \numberthis\label{eqn:bound_disregret_adver}
\end{align*}
Combining \eqref{eqn:OPT_disregret_adver} and \eqref{eqn:bound_disregret_adver} together, we have,
\begin{align*}
    \EE[R_T] \leq 2\eps T+3\numtypes \sqrt{T \log \left| \discreteprice \right|}  = \bigO \rbr { \numtypes \sqrt{T \log  \left| \discreteprice \right|}} \tag{as $\eps = \frac{1}{\sqrt{T}}$}.
\end{align*}
Plug in the size of discretization set in Section \ref{sec:discretization}, we have,
\begin{align*}
    \EE[R_T] = \bigOtilde \rbr{m^{\nicefrac{3}{2}}\sqrt{T} }.
\end{align*}
\end{proof}


\begin{thm}
    The discretization regret $\overline{R}_T$ defined in \eqref{eqn:regretdecompo_adver} has upper bound $ \bigO \rbr { \numtypes \sqrt{T \log \left| \discreteprice \right|}}$.
    \label{thm:adver_disregret}
\end{thm}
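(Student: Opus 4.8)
\emph{Proof proposal.} The plan is to pass from $\overline{R}_T$ --- the regret of Algorithm~\ref{alg:adversarialalgorithm} against the best curve in $\discreteprice$ measured by the \textbf{true} per-round revenue $\rev(\typet,p)=p(\niipp{\typet}{p})$ --- to the regret of a Follow-the-Perturbed-Leader (FTPL) process run on the \textbf{surrogate} rewards $\revt(\cdot)$ the algorithm actually books, and then to control the latter by a pathwise ``be-the-leader'' argument together with a stability bound adapted to the exponential perturbation. The first move records two pointwise facts, valid at every round $t$ and every $p\in\discreteprice$: \emph{(i)} $\revt(\pricet)=\rev(\typet,\pricet)$, and \emph{(ii)} $\revt(p)\ge\rev(\typet,p)$. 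When a purchase occurs both are immediate. When no purchase occurs, $\typet\in\sett^c$; since each type in $\sett^c$ buys zero points at $\pricet$, we get $\revt(\pricet)=0=\rev(\typet,\pricet)$, giving \emph{(i)}, and $\revt(p)=\sum_{i\in\sett^c}p(\niipp{i}{p})\ge p(\niipp{\typet}{p})=\rev(\typet,p)$ since all summands are nonnegative, giving \emph{(ii)}. Hence $\max_{p\in\discreteprice}\sum_t\rev(\typet,p)\le\max_{p\in\discreteprice}\sum_t\revt(p)$ while $\sum_t\rev(\typet,\pricet)=\sum_t\revt(\pricet)$, so $\overline{R}_T\le\max_{p\in\discreteprice}\sum_t\revt(p)-\sum_t\revt(\pricet)$. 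I would also note the uniform bound $0\le\revt(p)\le\numtypes$ (each $p(n)\le1$ and $|\sett^c|\le\numtypes$), so the surrogate rewards have range $\numtypes$ rather than $1$ --- this is the source of the extra $\numtypes$ factors.

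\textbf{Be-the-leader.} Treating the perturbation as a round-$0$ reward $g_0(p)=\expparametp$ and $g_t(p)=\revt(p)$ for $t\ge1$, the algorithm plays the leader $\pricet=\argmax_p\sum_{s=0}^{t-1}g_s(p)$. The deterministic be-the-leader inequality holds for \emph{any} sequence of functions (in particular adaptively chosen ones, as here), giving $\sum_{t=1}^T\revt(p_{t+1})\ge\max_{p\in\discreteprice}\sum_{t=1}^T\revt(p)-\max_{p\in\discreteprice}\expparametp$, where $p_{t+1}$ is the curve the algorithm would play at round $t{+}1$. Since the $\expparametp$ are i.i.d. exponential with rate $\expparamet$, $\EE[\max_{p\in\discreteprice}\expparametp]=\expparamet^{-1}H_{|\discreteprice|}\le\expparamet^{-1}(1+\log|\discreteprice|)$.

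\textbf{Stability (the crux).} It remains to bound $\EE\big[\sum_{t=1}^T(\revt(p_{t+1})-\revt(\pricet))\big]$ by $\bigO(\expparamet\numtypes^2T)$. Each term is at most $\numtypes$ and vanishes unless $p_{t+1}\ne\pricet$, so it suffices to show the leader changes on each round with probability $\bigO(\expparamet\numtypes)$. The mechanism is the memorylessness of the exponential: conditionally on the round-$t$ leader being $q$, the margin $A_q-\max_{p\ne q}A_p$, where $A_p:=\sum_{\tau=1}^{t-1}\rev_\tau(p)+\expparametp$, is again exponentially distributed with rate $\expparamet$, so adding a further reward vector of sup-norm at most $\numtypes$ flips the $\argmax$ with probability at most $1-e^{-\expparamet\numtypes}\le\expparamet\numtypes$. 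The obstacle --- and what makes this harder than textbook FTPL --- is that the surrogate $\revt(\cdot)$ is not exogenous: it is determined by $\sett$, hence by the algorithm's own perturbation-dependent choice $\pricet$. I would resolve this by exploiting that $\revt$ takes only finitely many forms, each pinned down by which types can afford $\pricet$; conditioning on $\pricet$ (equivalently on $\sett$) freezes $\revt$, after which the memorylessness argument applies round by round. Care at exactly this point is the heart of the proof; it is also why we use exponential rather than uniform perturbations, which is what turns the naive $|\discreteprice|$ dependence into $\log|\discreteprice|$.

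\textbf{Assembling.} Combining the three steps, $\EE[\overline{R}_T]\le\bigO\big(\expparamet\numtypes^2T+\expparamet^{-1}(1+\log|\discreteprice|)\big)$. Choosing $\expparamet=\sqrt{(1+\log|\discreteprice|)/(\numtypes^2T)}$ (the value used in Theorem~\ref{thm:adver_trueregret}) balances the two terms and yields $\EE[\overline{R}_T]\le\bigO\big(\numtypes\sqrt{T\log|\discreteprice|}\big)$, as claimed.
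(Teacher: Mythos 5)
Your proposal is correct and follows the paper's own proof essentially step for step: the same three-way decomposition via the surrogate rewards (using $\revt(\pricet)=\rev(\typet,\pricet)$ and $\revt(p)\ge\rev(\typet,p)$), the same be-the-leader inequality paired with $\EE[\max_p\expparametp]\le(1+\log|\discreteprice|)/\expparamet$, and the same memorylessness-of-exponential stability argument with the $\numtypes$ range of the surrogate producing the $\numtypes^2\expparamet T$ term. The subtlety you flag --- that $\revt$ is endogenous because it depends on $\sett$ and hence on $\pricet$ --- is also exactly what the paper handles by conditioning on $\pricet=p$ before invoking memorylessness, so your resolution matches theirs.
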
 

\begin{proof}[Proof of Theorem \ref{thm:adver_disregret}]
    We first claim that $\revt(\pricet)= \rev(\typet,\pricet)$ all $t$. If the buyer make a purchase at round $t$, $ \revt(\pricet)= \rev(\typet,\pricet) $ holds by definition. But if the buyer does not purchase at a price $\pricet$ on round $t$, $\rev(\typet,\pricet)=0$. Since $\sett ^c$ contains all the types that would not make a purchase at $\pricet$, we have $\rev(i,\pricet) =0,\, \forall i \in \sett ^c$, and
    \begin{align*}
        \rev(\typet,\pricet)=  \sum_{\type \in \sett^c}\rev(\type,\pricet)= \revt(\pricet) =0.
    \end{align*}
     Therefore, $\revt(\pricet)= \rev(\typet,\pricet)$ holds for every round $t \in [T]$.
    Denote $\price^\star$ as,
    \begin{align*}
        \price^\star = \underset{p \in \discreteprice }{\argmax}\ \sum_{t=1}^{T}\rev(i_t,p).
    \end{align*}
    Then, we decompose the regret as follows,
    \begin{align*}
          \EE[R_T] &  =  \sum_{t=1}^{T}\rev(i_t,\price^\star)-\EE\sbr{\sum_{t=1}^{T}\rev(i_t,\pricet) }  \\
          & = \sum_{t=1}^{T}\rev(i_t,\price^\star)-\EE\sbr{ \sum_{t=1}^{T}\revt(\pricet)}\\
          & = \EE\sbr{ \sum_{t=1}^{T}\rbr{\rev(i_t,\price^\star) - \revt(\price^\star)} } + \EE \sbr{ \sum_{t=1}^{T}\revt(\price^\star)- \sum_{t=1}^{T}\revt(\price_{t+1}) } + \EE\sbr{ \sum_{t=1}^{T}\revt(\price_{t+1})-\revt(\pricet)}. \numberthis\label{eqn:6}
    \end{align*}
We bound three terms in \eqref{eqn:6} separately.

\textbf{The first term.} For any price $\price$ and any round $t$, we have $\revt(\price) \geq \rev(\typet,\price)$ by definition. Hence, 
\begin{align}
    \sum_{t=1}^{T}\rbr{\rev(i_t,\price^\star) - \revt(\price^\star)} \leq 0. \label{eqn:10}
\end{align}

\textbf{The second term.} Since $\price^\star = \underset{\price \in \discreteprice }{\argmax}\ \sum_{t=1}^{T}\rev(i_t,\price)$. We apply Lemma \ref{lem:adver_cumulative} to $\price^\star$, 
\begin{align*}
    \sum_{t=1}^{T}\revt( \price^\star )-\sum_{t=1}^{T}\revt(\price_{t+1})\leq \expparamet_{\price_1}-\expparamet_{\price^\star}.
\end{align*}
Note that both $\expparamet_{\price_1}$ and $\expparamet_{\price^\star}$ are drawn i.i.d. from exponential distribution, 
\begin{align*}
    \EE[\expparamet_{\price_1}]\leq \EE\sbr{\underset{\price\in\discreteprice}{\max}\ \expparametp  }\leq \frac{1+\log \left| \discreteprice \right|}{\expparamet },
\end{align*}
\begin{align*}
    \EE[\expparamet_{\price^\star}]\leq \EE\sbr{\underset{\price\in  \discreteprice}{\max}\ \expparametp  }\leq \frac{1+\log \left| \discreteprice \right|}{\expparamet }.
\end{align*}
We have 
\begin{align}
    \EE \sbr{ \sum_{t=1}^{T}\revt(\price^\star)- \sum_{t=1}^{T}\revt(\price_{t+1}) }\leq \EE\big[ \expparamet_{\price_1} -\expparamet_{\price^\star} \big]\leq \frac{  1+\log\left| \discreteprice \right|}{\expparamet }. \label{eqn:8}
\end{align}
\textbf{The third term.} Note that for any price $\price \in \discreteprice$ and any round $t$,  $\revt(p) \leq m  $. Therefore we have,
\begin{align*}
    \EE\sbr{ \revt(\price_{t+1})- \revt(\pricet)} = \PP\rbr{ \price_{t+1} \neq \pricet}\EE\sbr{\revt(\price_{t+1})- \revt(\pricet) \mid  \price_{t+1} \neq \pricet} \leq \numtypes\cdot\PP\rbr{ p_{t+1} \neq \pricet}.
\end{align*}
The price curve on round $t$ is $\pricet$, then by the price updation rule, 
\begin{align*}
    \pricet = \underset{p \in \discreteprice }{\argmax}\sum_{\tau =1}^{t-1}\rev_{\tau}(p)+\expparametp,
\end{align*}
which is equivalent to, 
\begin{align*}
    \expparamet_{\pricet} \geq \expparametp +\sum_{\tau=1}^{t-1} \rev_{\tau}(p) - \sum_{\tau=1}^{t-1}\rev_{\tau}(\pricet)   ,\, \forall p \in \discreteprice.
\end{align*}
For all ${\price}' \in \discreteprice$, let $ \advconstant_{t-1,{\price}'}$ denote 
\begin{align}
    \underset{\price \in \discreteprice}{\max}\left(   \expparametp +\sum_{\tau=1}^{t-1} \rev_{\tau}(p) - \sum_{\tau=1}^{t-1}\rev_{\tau}({\price}')  \right)\triangleq \advconstant_{t-1,{\price}'},  
\end{align}
then $\pricet={\price}'$ is equivalent to 
\begin{align*}
    \expparamet_{{\price}'} \geq \advconstant_{t-1,{\price}'}. \numberthis\label{eqn:equiva_condition}
\end{align*}
\textbf{Subclaim.} If $\expparamet_{\pricet}$ also satisfies the following condition (\ref{eqn:expparamet_pt}), 
\begin{align*}
    \expparamet_{\pricet} \geq \expparametp +\sum_{\tau=1}^{t-1} \rev_{\tau}(p) - \sum_{\tau=1}^{t-1}\rev_{\tau}(\pricet)  + m,\, \forall p \in \discreteprice, \numberthis\label{eqn:expparamet_pt}
\end{align*}
then $\price_{t+1} =\pricet$. 

\textit{Proof of the Subclaim.} If \eqref{eqn:expparamet_pt} holds for all $p \in \discreteprice$,
\begin{align*}
    \expparamet_{\pricet} & \geq \expparametp +\sum_{\tau=1}^{t-1} \rev_{\tau}(p) - \sum_{\tau=1}^{t-1}\rev_{\tau}(\pricet)  + m\\ 
    & \geq  \expparametp +\sum_{\tau=1}^{t} \rev_{\tau}(p) - \sum_{\tau=1}^{t}\rev_{\tau}(\pricet). \tag{because $\forall \price \in \discreteprice$, $\revt(\price) \in [0,\numtypes]$}
\end{align*}
Hence,
\begin{align*}
    \pricet =   \underset{p \in \discreteprice }{\argmax }\sum_{\tau =1}^{t}\rev_{\tau}(p)+\expparametp = \price_{t+1}.
\end{align*}
\qedsymbol

Therefore, \eqref{eqn:expparamet_pt} is a sufficient condition for $\price_{t+1} = \pricet$. We then bound the probability of $\price_{t+1}=\pricet$ by computing the probability of \eqref{eqn:expparamet_pt} happening.

\begin{align*}
    \PP\left( \pricet= p_{t+1} \right) & =\sum_{\price \in \discreteprice}^{ }\PP\left( \pricet = \price  \right)\PP(p_{t+1} =\price  \mid \pricet=\price  ) 
    \\ & =  \sum_{\price \in \discreteprice}^{ }\PP\left( \pricet=\price \right)\PP\left(  \price_{t+1} = \price   \mid \expparametp \geq \advconstant_{t-1,{\price}} \right)  \tag{ by \eqref{eqn:equiva_condition}} \\ 
    & \geq  \sum_{\price \in \discreteprice}^{ }\PP\left( \pricet=\price \right)\PP\left(  \expparametp \geq \advconstant_{t-1,{\price}} +\numtypes  \mid \expparametp \geq \advconstant_{t-1,{\price}} \right)  \\ 
    &  \geq  \sum_{\price \in \discreteprice}^{ } \PP\left( \pricet=\price \right)e^{-m\expparamet }  \\ 
    & = e^{-m\expparamet }  \\ 
    & \geq 1-m\expparamet 
\end{align*}
Therefore, $\PP\left( \pricet \neq p_{t+1} \right)  \leq \numtypes \expparamet $. Hence, the third term can be bounded as
\begin{align}
    \EE\big[ \revt(\price_{t+1})-\revt(\pricet) \big] \leq   \numtypes^2\expparamet   \implies \sum_{t=1}^{T}  \EE\big[ \revt(\price_{t+1})-\revt(\pricet) \big]  \leq \numtypes^2\expparamet   T. \label{eqn:7}
\end{align}
Set $\expparamet=\sqrt{\frac{\log\left| \discreteprice \right|}{\numtypes^2 T}}$. Combining the upper bounds for three terms \eqref{eqn:10}, \eqref{eqn:8} and \eqref{eqn:7} together, we have
\begin{align*}
    \EE[R_T] & \leq \frac{1+\log \left| \discreteprice \right|}{\expparamet }+ \numtypes^2\expparamet T \in \bigO \rbr{  m\sqrt{T\log \left| \discreteprice \right|}}.
\end{align*}
Plugging in the size of the discretization set (Theorem \ref{thm:approxmonotone}), we have,
\begin{align*}
     \EE[R_T] \in \bigOtilde \left(  m^{\nicefrac{3}{2}}\sqrt{T}\right).
\end{align*}
\end{proof}

\begin{lem}
    For any $\price \in \discreteprice$, 
    \begin{align*}
        \sum_{t=1}^{T}\revt(\price_{t+1})+\expparamet_{\price_1}\geq \sum_{t=1}^{T}\revt(\price)+\expparametp.
        \numberthis\label{eqn:9}
    \end{align*} 
    \label{lem:adver_cumulative}
\end{lem}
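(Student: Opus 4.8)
The plan is a straightforward induction on the horizon, i.e. the classical ``be-the-leader'' argument underlying Follow-the-Perturbed-Leader. For bookkeeping, set $G_0(\price) \defeq \expparametp$ and $G_t(\price) \defeq \sum_{\tau=1}^{t}\rev_\tau(\price) + \expparametp$ for $t \geq 1$, so that the update in Algorithm~\ref{alg:adversarialalgorithm} is exactly $\price_{t+1} = \argmax_{\price\in\discreteprice} G_t(\price)$, and in particular $\price_1 = \argmax_{\price\in\discreteprice}\expparametp$. With this notation, the inequality to prove is $\sum_{t=1}^{T}\revt(\price_{t+1}) + \expparamet_{\price_1} \geq G_T(\price)$ for every $\price\in\discreteprice$, which I would establish for all $T\geq 0$ by induction; the statement of the lemma is the case of the fixed horizon $T$.

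The base case $T=0$ (empty sum) reduces to $\expparamet_{\price_1}\geq\expparametp$, which holds because $\price_1$ maximizes $G_0(\price)=\expparametp$. For the inductive step, assume $\sum_{t=1}^{T-1}\revt(\price_{t+1}) + \expparamet_{\price_1} \geq G_{T-1}(\price)$ for all $\price$. Apply this with the specific price $\price = \price_{T+1}$ and add $\rev_T(\price_{T+1})$ to both sides: the left-hand side becomes $\sum_{t=1}^{T}\revt(\price_{t+1}) + \expparamet_{\price_1}$, while the right-hand side becomes $G_{T-1}(\price_{T+1}) + \rev_T(\price_{T+1}) = G_T(\price_{T+1})$. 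Finally, $\price_{T+1}$ is by definition a maximizer of $G_T$ over $\discreteprice$, so $G_T(\price_{T+1}) \geq G_T(\price)$ for every $\price\in\discreteprice$; chaining the two bounds yields the claim at horizon $T$.

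There is no genuine obstacle here; the only point requiring care is the index alignment --- $\price_{t+1}$ is the perturbed leader \emph{after} rounds $1,\dots,t$ have been observed, and the ``extra'' term $\expparamet_{\price_1}$ on the left should be read as the (zero-reward) round-$0$ leader's perturbation, which is precisely what makes the induction telescope. This lemma is exactly what is needed in the proof of Theorem~\ref{thm:adver_disregret}: instantiating it at $\price=\price^\star$ and rearranging gives $\sum_{t=1}^{T}\revt(\price^\star) - \sum_{t=1}^{T}\revt(\price_{t+1}) \leq \expparamet_{\price_1} - \expparamet_{\price^\star}$, which bounds the second term of the regret decomposition~\eqref{eqn:6}.
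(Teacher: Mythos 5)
Your proof is correct and follows exactly the same ``be-the-leader'' induction as the paper's proof of Lemma~\ref{lem:adver_cumulative}: base case from the definition of $\price_1$, inductive step by instantiating the hypothesis at the next leader and invoking its optimality, with only the index shifted by one. The $G_t$ notation is a cosmetic repackaging, not a different argument.
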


\begin{proof}[Proof of Lemma \ref{lem:adver_cumulative}]
    We prove this by induction. For $T=0$, the inequality $\expparamet_{\price_1} \geq \expparametp $ holds by definition $\price_1 =\underset{\price \in \discreteprice}{\argmax}\ \expparametp$. Assume that the inequality holds for some $T$. Then for any $\price \in \discreteprice $, 
    \begin{align*}       
    \sum_{t=1}^{T+1}\revt(\price_{t+1})+\expparamet_{\price_1}  & =\sum_{t=1}^{T}\revt(\price_{t+1})+\expparamet_{\price_1}+\rev_{T+1}(\price_{T+2}) \\ 
        & \geq \sum_{t=1}^{T}\revt(\price_{T+2})+ \expparamet_{\price_{T+2}}+\rev_{T+1}(\price_{T+2}) \\ 
        & =  \sum_{t=1}^{T+1} \revt(\price_{T+2})+\expparamet_{\price_{T+2}} \\  
        & \geq \sum_{t=1}^{T+1} \revt(\price)+\expparamet_{\price}.
    \end{align*}
Where the first inequality is by the induction hypothesis, and the second inequality is by 
\begin{align*}
    \price_{T+2} = \underset{\price \in \discreteprice}{\arg\max}  \sum_{t=1}^{T+1} \rev_t (\price) + \expparametp.
\end{align*}
By the induction, the inequality \eqref{eqn:9} holds for any $T\geq 0$.
\end{proof}


\section{Proof of Theorem \ref{thm:stoch_trueregret}}\label{proof:thm:stoch_trueregret}

In this section, we prove, Theorem \ref{thm:stoch_trueregret}, our regret upper bound of Algorithm \ref{alg:stochastic}. We prove the theorem by first decomposing the regret into two parts: Regret with respect to the best price in a discretized set (called ``discretization regret'') and the residual error due to discretization. The residual error is controlled by the approximation guarantees developed in Section \ref{sec:discretization}. Then, the key lemma in this appendix is Lemma \ref{lem:stoch_disregret} which controls the discretization. We prove Lemma \ref{lem:stoch_disregret} using a technique adapted from \citet{chen2016combinatorial}.

\ThmStochTrueRegret*

\begin{proof}[Proof of Theorem \ref{thm:stoch_trueregret}]
For the sake of simplicity, we define $\rev(i,\price)$ as the revenue under type $i$ and price $\price$, i.e, $\rev(i,\price)\,\defeq\,\price(\nip)$. Therefore, on every round, we have $\rev(\typet,\pricet) = \pricet (\niipp{\typet}{\pricet})$.

Recall that the regret $R_T$ is 
\begin{align*}
   \RT  & \;\defeq\; \; T\cdot \OPT \,-\, \sum_{t=1}^T \pricet(\niipp{\typet}{\pricet}) \\ & \; = \;\; T\cdot \OPT \,-\, \sum_{t=1}^T \rev(\typet, \pricet) \\ & = \underbrace{ \;\; T\cdot \OPT \,-\, T\cdot \underset{\price \in \discreteprice}{\max}\, \ExpectREV ( \price )}_{\text{Loss of revenue due to discretization}} \,+ \, \underbrace{T\cdot \underset{\price \in \discreteprice}{\max}\, \ExpectREV ( \price )\,-  \, \sum_{t=1}^T \rev(\typet, \pricet).}_{\text{$\;\defeq\;\overline{R}_T$  (discretization regret)}} \numberthis\label{eqn:regretdecompo}
\end{align*}

We decompose $R_T$ into two parts. The first term is the sacrifice of revenue on discretization. The second term is the algorithm regret when competing against the optimal price within the discretization set $\discreteprice$.

According to Theorem \ref{thm:approxmonotone}, our discretization scheme approaches $\OPT$ within a gap of $\frac{2\eps}{1+\eps}$,
\begin{align*}
     \OPT \,-\, \underset{\price \in \discreteprice}{\max}\, \ExpectREV ( \price ) \leq \frac{2\eps}{1+\eps} \leq 2\eps.
\end{align*}
Therefore, the first term can be bounded as, 
\begin{align*}
    T\cdot \OPT \,-\, T\cdot \underset{\price \in \discreteprice}{\max}\, \ExpectREV ( \price ) \leq 2\eps T. \numberthis\label{eqn:OPT_disregret}
\end{align*}
By Lemma \ref{lem:stoch_disregret}, the second term, discretization regret, is upper bounded by
\begin{align*}
    \EE [\overline{R}_T] \leq 93\numtypes \sqrt{T \log T} \numberthis\label{eqn:bound_disregret}
\end{align*}
Combining \eqref{eqn:OPT_disregret} and \eqref{eqn:bound_disregret} together, we have,
\begin{align*}
    \EE[R_T] \leq 2\eps T+93\numtypes \sqrt{T \log T} = \bigOtilde(\numtypes \sqrt{T}) \tag{ as $\eps = \frac{1}{\sqrt{T}}$}
\end{align*}
\end{proof}

\begin{lem} \label{lem:stoch_disregret}
    The discretization regret $\overline{R}_T$ defined in \eqref{eqn:regretdecompo} is at most $\bigOtilde(\numtypes \sqrt{T}) $.
\end{lem}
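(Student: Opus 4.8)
The plan is to run the classical optimism (UCB) regret analysis, but carried out in the combinatorial-semi-bandit view sketched above, where on round $t$ the seller effectively pulls the ``super-arm'' $\sett$ and, through the known valuation curves, obtains semi-bandit feedback on its components $i\in\sett$ (a purchase reveals whether the arriving type equals each active component). First I would pass to expectations: conditioning on the history, $\typet\sim\typedistro$ is drawn after $\pricet$ is fixed, so $\EE[\rev(\typet,\pricet)\mid\mathcal{F}_{t-1}]=\ExpectREV(\pricet)$, and hence, with $\price^\star\defeq\argmax_{\price\in\discreteprice}\ExpectREV(\price)$, it suffices to bound $\sum_{t=1}^T\rbr{\ExpectREV(\price^\star)-\EE[\ExpectREV(\pricet)]}$.

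Next I would introduce the good event $\mathcal{E}=\bigcap_{i\in[m],\,t\in[T]}\cbr{\abr{\overline{q}_{i,t}-\typedistroi}\le\sqrt{(\log T)/T_{i,t}}}$. Because $T_{i,t}$ is itself a random, algorithm-dependent count, plain Hoeffding does not apply directly; I would handle this with the standard coupling used in combinatorial bandits: for each type $i$ fix an i.i.d.\ $\Bern(\typedistroi)$ stream and expose its $k$-th entry as $\indf(\type_\tau=i)$ the $k$-th time $i\in\set_\tau$, so that $\overline{q}_{i,t}$ is an empirical mean of the first $T_{i,t}$ entries of that stream. A Hoeffding bound for each fixed sample size $s\in[T]$, plus a union bound over $s$ and over the $m$ types, then gives $\PP(\mathcal{E}^c)\le 2m/T$. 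On $\mathcal{E}^c$ the per-round regret is at most $\OPT\le 1$, contributing at most $2m$ to $\EE[\overline{R}_T]$.

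On $\mathcal{E}$ I would invoke optimism: $\typedistroi\le\widehat{q}_{i,t}$ for all $i,t$ and $\price^\star\ge 0$, so $\ExpectREV(\price^\star)\le\UCBREVtmo(\price^\star)\le\UCBREVtmo(\pricet)$, the last step by the greedy choice of $\pricet$ in Algorithm~\ref{alg:stochastic}. Consequently the per-round regret on $\mathcal{E}$ is at most
\begin{align*}
  \UCBREVtmo(\pricet)-\ExpectREV(\pricet)=\sum_{i=1}^m\rbr{\widehat{q}_{i,t-1}-\typedistroi}\,\pricet(\niipp{i}{\pricet}).
\end{align*}
The crucial observation — and the reason the final bound is linear rather than exponential in $m$ — is that any type $i\notin\sett$ makes no purchase, so $\pricet(\niipp{i}{\pricet})=\pricet(0)=0$, and only the summands with $i\in\sett$ survive; for those, on $\mathcal{E}$ we have $\widehat{q}_{i,t-1}-\typedistroi\le 2\sqrt{(\log T)/T_{i,t-1}}$ and $\pricet(\niipp{i}{\pricet})\le 1$. (All quantities are well defined for $t\ge 2$ because $\price_1\equiv 0$ forces $\set_1=[m]$, so $T_{i,t-1}\ge T_{i,1}=1$; round $1$ alone contributes at most $1$.) Thus the per-round regret on $\mathcal{E}$ is bounded by $2\sqrt{\log T}\sum_{i\in\sett}T_{i,t-1}^{-1/2}$.

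Finally I would sum over $t$ and swap the order of summation: for each fixed type $i$, as $t$ ranges over the rounds $\ge 2$ with $i\in\sett$, the count $T_{i,t-1}$ runs through distinct positive integers up to $T_{i,T}$, so $\sum_{t\ge 2:\,i\in\sett}T_{i,t-1}^{-1/2}\le\sum_{k=1}^{T_{i,T}}k^{-1/2}\le 2\sqrt{T_{i,T}}\le 2\sqrt{T}$; summing over the $m$ types gives $\sum_{t=2}^T\rbr{\ExpectREV(\price^\star)-\ExpectREV(\pricet)}\le 4m\sqrt{T\log T}$ on $\mathcal{E}$. Adding the $\mathcal{E}^c$ and round-$1$ contributions yields $\EE[\overline{R}_T]\le 4m\sqrt{T\log T}+2m+1\in\bigOtilde(m\sqrt{T})$. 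The step I expect to be the main obstacle is precisely this concentration bound for $\overline{q}_{i,t}$: since $T_{i,t}$ is a data-dependent, stopping-time-like quantity, obtaining a clean bound that is uniform over all rounds requires the i.i.d.-revelation coupling (equivalently an anytime martingale deviation inequality), which is where the technique adapted from~\citet{chen2016combinatorial} is needed.
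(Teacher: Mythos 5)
Your proof is correct, and it takes a genuinely simpler route than the paper does for the summation step. Both proofs share the same skeleton up to a point: decompose against a high-probability ``good event'' in which every $\overline{q}_{i,t}$ concentrates within $\sqrt{(\log T)/T_{i,t}}$ of $q_i$, handle the data-dependent count $T_{i,t}$ via the optional-skipping/coupling argument to a fixed i.i.d.\ Bernoulli stream (the paper does exactly this, conditioning on $T_{i,t}=j$ and union-bounding over $j$), and then use optimism plus the observation that $\pricet(\niipp{i}{\pricet})=0$ for $i\notin\sett$ to get the per-round bound $\del_{\pricet}\le 2\sum_{i\in\sett}\sqrt{(\log T)/T_{i,t-1}}$ on the good event (this is exactly the paper's Lemma~\ref{lem:stoch_gap_per_round}). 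Where you diverge is the final summation over $t$. You use the elementary pigeonhole argument: for a fixed type $i$, the counters $T_{i,t-1}$ over rounds $t\ge 2$ with $i\in\sett$ are strictly increasing positive integers, so $\sum_{t\ge 2:\,i\in\sett}T_{i,t-1}^{-1/2}\le\sum_{k\ge 1}k^{-1/2}\le 2\sqrt{T}$, yielding $4m\sqrt{T\log T}$. The paper instead runs the considerably heavier machinery adapted from \citet{chen2016combinatorial} and \citet{kveton2015tight} --- the $\alpha_k,\beta_k$ sequences, the events $\Hcal_t$ and $\Gcal_{k,t}$, and a peeling argument --- arriving at $93\,m\sqrt{T\log T}$ after some intermediate distribution-dependent bounds. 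For this particular problem the revenue is linear in $q$, so the simple optimism-plus-pigeonhole argument suffices, and your route is both shorter and gives a better constant. One small misattribution in your write-up: you locate the need for the Chen et al.\ technique in the concentration step, but the paper's concentration bound uses the same coupling you describe; the Chen/Kveton machinery in the paper is invoked only for the summation step, which you replace by the pigeonhole count. The paper's longer route does produce a distribution-dependent (gap-dependent) bound along the way, which your argument does not, but the lemma as stated only asks for the $\bigOtilde(m\sqrt{T})$ distribution-free bound, so your proof fully establishes it.
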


\begin{proof}[Proof of Lemma \ref{lem:stoch_disregret}]
The discretization regret  $\overline{R}_T $ 
\begin{align*}
    \EE[\overline{R}_T] \, & = \EE \sbr{\,T\cdot \underset{\price \in \discreteprice}{\max}\, \ExpectREV ( \price )\, - \, \sum_{t=1}^T \rev(\typet, \pricet)} \\ & =\EE \sbr{ \sum_{t=1}^{T}\rbr{\rev(\price^\star, i_t)\, -\,\rev(\pricet, i_t)}} \\ 
    & = \sum_{t=1}^{T}\EE\sbr{\rev(\price^\star, i_t)\, -\,\rev(\pricet, i_t)} \\ 
    & = \sum_{t=1}^{T}\EE\sbr{\ExpectREV(\price^\star) \, -\,\ExpectREV(\pricet)}  \\ 
    & = \sum_{t=1}^{T}\EE\sbr{\rbr{\ExpectREV(\price^\star) \, -\,\ExpectREV(\pricet)}\cdot\indf( A_t)} \, +\,\sum_{t=1}^{T}\EE\sbr{\rbr{\ExpectREV(\price^\star)\, -\,\ExpectREV(\pricet)}\cdot\indf( A_t^{c})} \\ 
    & \defeq \sum_{t=1}^{T} \EE\sbr{ \del_{\pricet}\cdot\indf( A_t)}\, +\,\sum_{t=1}^{T}\EE\sbr{\del_{\pricet}\cdot \indf( A_t^{c})}. \numberthis\label{eqn:stoc_disregret_decomp}
\end{align*}

We can further decompose $\EE[\overline{R}_T] $ into $ \sum_{t=1}^{T} \EE \sbr{ \del_{\pricet}\cdot\indf( A_t) }$ 
and $\sum_{t=1}^{T} \EE\sbr{ \del_{\pricet}\cdot\indf( A_t^{c})}$. 
Where for any round $t$, we define the good event $A_t$ as follows,  
\begin{align*}
    \forall i \in \left[ m \right],\quad  q_{i} \leq \widehat{q}_{i,t}   \leq q_{i}+2\sqrt{\frac{\log T}{T_{i,t}}}.
\end{align*}

Define $\overline{q}_{i,t}\defeq \frac{\sum_{\tau=1}^{t}\indf(i\in S_\tau, i_{\tau}=i)}{T_{i,t}} =\frac{\sum_{s=1}^{t}\indf(i\in S_\tau)\cdot\indf( i_{\tau}=i)}{\sum_{\tau=1}^{t} \indf(i\in S_\tau)}  $. Note that $\indf( i_{\tau}=i)$ is a random variable that follows  Bernoulli distribution $\text{Ber}(q_{i})$, and one can only observe $\indf( i_{\tau}=i)$ when $ i\in S_\tau$, let $\overline{x}_{i,j}$ denote the mean value of first $j$ i.i.d. observations of $\indf( i_{s}=i)$. Then, we have 
\begin{align*}
    \PP\rbr{  \left| \overline{q}_{i,t}-q_{i} \right|>\sqrt{\frac{\log T}{T_{i,t}}} } & = \sum_{j =0}^{t} \PP\left( \left| \overline{q}_{i,t}-q_{i} \right|>\sqrt{\frac{\log T}{T_{i,t}}},\ \ T_{i,t}  =j\right)\\  
    & \leq \sum_{j =0}^{t}\PP\rbr{  \left| \overline{x}_{i,j}-q_{i} \right|>\sqrt{\frac{\log T}{j}}  } \\  
    & \leq \sum_{j =0}^{t}   2\exp(-2\log T) \\  
    & \leq  \frac{2}{T}.
\end{align*}
Where in the first inequality, the event $\left\{   \left| \overline{q}_{i,t}-q_{i} \right|>\sqrt{\frac{\log T}{T_{i,t}}}  ,\ \ T_{i,t}  =j \right\}$
indicates 
$\left\{  \left| \overline{x}_{i,j}-q_{i} \right|>\sqrt{\frac{\log T}{j}}  \right\}, $
and the second inequality follows from Hoeffding's inequality.

We then bound the second term in \eqref{eqn:stoc_disregret_decomp}
\begin{align*}
    \sum_{t=1}^{T}   \EE\sbr{ \del_{\pricet} \indf( A_t^{c}) }  & \leq \sum_{t=1}^{T} \EE \sbr{ \indf( A_t^{c}) } \\ 
    &  \leq \sum_{t=1}^{T} \sum_{i=1}^{\numtypes} \PP\rbr{  \left| \overline{q}_{i,t}-q_{i} \right|>\sqrt{\frac{\log T}{T_{i,t}}} } \\ 
    &  \leq \sum_{t=1}^{T} \sum_{i=1}^{\numtypes}    \frac{2}{T} \\
    &  \leq 2\numtypes.
\end{align*}

Define event $ H_t \defeq \cbr{ 0<\del_{\pricet} <2\sum_{i\in S_t}\sqrt{\frac{\log T}{T_{i,t-1}}} }$. By Lemma \ref{lem:stoch_gap_per_round}, we know that
\begin{align*}
    \indf(A_{t-1},\,\del_{\pricet} >0 )\implies \indf \left(  0<\del_{\pricet} <\sum_{i\in S_t}2\sqrt{\frac{\log T}{T_{i,t-1}}}\right)= \indf(H_T).
\end{align*}
It remains to prove the upper bound for $\sum_{t=1}^{T} \EE\sbr{ \del_{\pricet}\indf( A_T)}$.

For $t \in\{1, \dots, T\}$ and $k \in \mathbb{Z}_{+}$, let
\begin{align*}
    m_{k, t}\defeq \begin{cases}\alpha_k\left(\frac{m}{\del_{\pricet}}\right)^2 \log T, & \del_{\pricet}>0, \\ +\infty, & \del_{\pricet}=0,\end{cases}
\end{align*}
and
\begin{align*}
    A_{k, t}\defeq\left\{i \in S_t : T_{i, t-1} \leq m_{k, t}\right\}.
\end{align*}

Then, we define an event
\begin{align*}
    \Gcal_{k, t}\defeq\left\{\left|A_{k, t}\right| \geq \beta_k m\right\},
\end{align*}
which means ``In the $t$-th round, at least $\beta_k m$ types in $S_t$ has been observed at most $m_{k, t}$ times''.

Then, by Lemma \ref{lem:stoch_event_H_to_G}, we have
\begin{align*}
    \sum_{t=1}^T \indf(\Hcal_t)\cdot \del_{\pricet} \leq \sum_{k=1}^{\infty} \sum_{t=1}^T \indf\rbr{\Gcal_{k, t}, \del_{p_t}>0}\cdot \del_{\pricet}.
\end{align*}

For $i \in[m], k \in \mathbb{Z}_{+}, t \in [T]$, define an event
\begin{align*}
    \Gcal_{i, k, t}\defeq\Gcal_{k, t} \cap\left\{i \in S_t,\, T_{i, t-1} \leq m_{k, t}\right\}.
\end{align*}

Then by the definitions of $\Gcal_{k, t}$ and $\Gcal_{i, k, t}$ we have
\begin{align*}
    \indf\rbr{\Gcal_{k, t},\, \del_{\pricet}>0} \leq \frac{1}{\beta_k m} \sum_{i \in E_{\mathrm{B}}} \indf\rbr{\Gcal_{i, k, t},\, \del_{\pricet}>0}.
\end{align*}

Therefore,
\begin{align*}
    \sum_{t=1}^T \indf(\Hcal_t)\cdot \del_{\pricet} \leq \sum_{i \in E_{\mathrm{B}}} \sum_{k=1}^{\infty} \sum_{t=1}^T \indf\rbr{\Gcal_{i, k, t},\, \del_{\pricet}>0}\cdot \frac{\del_{\pricet}}{\beta_k m}.
\end{align*}

For any price function $\price$, define $\del_{\price} \defeq\ExpectREV(\price^\star) - \ExpectREV(\price)$. If $\del_{\price}>0$, we call it a ``bad'' price. Let $E_{B}\defeq\left\{ i \in [ m ]: \text{type } i \text{ would make a purchase at least one bad price} \right\}$. 

For each type $i \in E_{\mathrm{B}}$, suppose $i$ is contained in $N_i$ bad prices $p_{i, 1}^{\mathrm{B}}, p_{i, 2}^{\mathrm{B}}, \ldots, p_{i, N_i}^{\mathrm{B}}$. Let $\del_{i,l}\defeq\del_{p_{i, l}^{\mathrm{B}}}\left(l \in\left[N_i\right]\right)$. Without loss of generality, we assume $\del_{i, 1} \geq \del_{i, 2} \geq \cdots \geq \del_{i, N_i}$. Let  $\del_{i, \min }\defeq   \del_{i, N_i}$. For convenience, we also define $\del_{i, 0}=+\infty$, i.e., $\alpha_k\left(\frac{2m}{\del_{i, 0}}\right)^2=0$. Then, we have

\begin{align*}
    & \sum_{t=1}^T \indf\rbr{\Hcal_t} \del_{\pricet} \\  \leq &  \sum_{i \in E_{\mathrm{B}}} \sum_{k=1}^{\infty} \sum_{t=1}^T \indf\rbr{\Gcal_{i, k, t},\, \del_{\pricet}>0} \frac{\del_{\pricet}}{\beta_k m} \\
     = &   \sum_{i \in E_{\mathrm{B}}} \sum_{k=1}^{\infty} \sum_{t=1}^T \sum_{l=1}^{N_i} \indf\rbr{\Gcal_{i, k, t},\,\pricet=p_{i, l}^{\mathrm{B}}} \frac{\del_{\pricet}}{\beta_k m} \\
    = & \sum_{i \in E_{\mathrm{B}}} \sum_{k=1}^{\infty} \sum_{t=1}^T \sum_{l=1}^{N_i} \indf\rbr{\Gcal_{i, k, t},\,\pricet=p_{i, l}^{\mathrm{B}}} \frac{\del_{i, l}}{\beta_k m}  \\
    \leq &  \sum_{i \in E_{\mathrm{B}}} \sum_{k=1}^{\infty} \sum_{t=1}^T \sum_{l=1}^{N_i} \indf\rbr{T_{i, t-1} \leq m_{k, t},\,\pricet=p_{i, l}^{\mathrm{B}}} \frac{\del_{i, l}}{\beta_k m} \\
    = & \sum_{i \in E_{\mathrm{B}}} \sum_{k=1}^{\infty} \sum_{t=1}^T \sum_{l=1}^{N_i} \indf\rbr{T_{i, t-1} \leq \alpha_k\left(\frac{2 m}{\del_{i, l}}\right)^2 \log T,\,\pricet=p_{i, l}^{\mathrm{B}}} \frac{\del_{i, l}}{\beta_k m} \\
    = & \sum_{i \in E_{\mathrm{B}}} \sum_{k=1}^{\infty} \sum_{t=1}^T \sum_{l=1}^{N_i} \sum_{j=1}^l \indf\rbr{\alpha_k\left(\frac{2m}{\del_{i, j-1}}\right)^2 \log T<T_{i, t-1} \leq \alpha_k\left(\frac{2m}{\del_{i, j}}\right)^2 \log T,\,\pricet=p_{i, l}^{\mathrm{B}}} \frac{\del_{i, l}}{\beta_k m} \\
    \leq & \sum_{i \in E_{\mathrm{B}}} \sum_{k=1}^{\infty} \sum_{t=1}^T \sum_{l=1}^{N_i} \sum_{j=1}^l \indf\rbr{\alpha_k\left(\frac{2m}{\del_{i, j-1}}\right)^2 \log T<T_{i, t-1} \leq \alpha_k\left(\frac{2 m}{\del_{i, j}}\right)^2 \log T,\,\pricet=p_{i, l}^{\mathrm{B}}} \frac{\del_{i, j}}{\beta_k m} \\
    \leq & \sum_{i \in E_{\mathrm{B}}} \sum_{k=1}^{\infty} \sum_{t=1}^T \sum_{l=1}^{N_i} \sum_{j=1}^{N_i} \indf\rbr{\alpha_k\left(\frac{2m}{\del_{i, j-1}}\right)^2 \log T<T_{i, t-1} \leq \alpha_k\left(\frac{2 m}{\del_{i, j}}\right)^2 \log T,\,\pricet=p_{i, l}^{\mathrm{B}}} \frac{\del_{i, j}}{\beta_k m} \\
    \leq & \sum_{i \in E_{\mathrm{B}}} \sum_{k=1}^{\infty} \sum_{t=1}^T \sum_{j=1}^{N_i} \indf\rbr{\alpha_k\left(\frac{2 m}{\del_{i, j-1}}\right)^2 \log T<T_{i, t-1} \leq \alpha_k\left(\frac{2 m}{\del_{i, j}}\right)^2 \log T,\, i \in S_t} \frac{\del_{i, j}}{\beta_k m} \\  
    \leq & \sum_{i \in E_{\mathrm{B}}} \sum_{k=1}^{\infty} \sum_{j=1}^{N_i}\left(\alpha_k\left(\frac{2m}{\del_{i, j}}\right)^2 \log T-\alpha_k\left(\frac{2m}{\del_{i, j-1}}\right)^2 \log T\right) \frac{\del_{i, j}}{\beta_k m} \\
    = &4  m\left(\sum_{k=1}^{\infty} \frac{\alpha_k}{\beta_k}\right) \log T \cdot \sum_{i \in E_{\mathrm{B}}} \sum_{j=1}^{N_i}\left(\frac{1}{\del_{i, j}^2}-\frac{1}{\del_{i, j-1}^2}\right) \del_{i, j} \\
    \leq & 1068 m \log T \cdot \sum_{i \in E_{\mathrm{B}}} \sum_{j=1}^{N_i}\left(\frac{1}{\del_{i, j}^2}-\frac{1}{\del_{i, j-1}^2}\right) \del_{i, j},
\end{align*}
where the last inequality is due to Lemma \ref{lem:stoch_alpha_beta_seq}.
Finally, for each $i \in E_{\mathrm{B}}$ we have
\begin{align*}
    \sum_{j=1}^{N_i}\left(\frac{1}{\del_{i, j}^2}-\frac{1}{\del_{i, j-1}^2}\right) \del_{i, j} & =\frac{1}{\del_{i, N_i}}+\sum_{j=1}^{N_i-1} \frac{1}{\del_{i, j}^2}\left(\del_{i, j}-\del_{i, j+1}\right) \\
    & \leq \frac{1}{\del_{i, N_i}}+\int_{\del_{i, N_i}}^{\del_{i, 1}} \frac{1}{x^2} \ud x \\
    & =\frac{2}{\del_{i, N_i}}-\frac{1}{\del_{i, 1}} \\
    & \leq \frac{2}{\del_{i, \min }}.
\end{align*}

It follows that
\begin{align}
   \sum_{t=1}^T \indf(\Hcal_t)\cdot\del_{\pricet} \leq 1068 m\log T \cdot \sum_{i \in E_{\mathrm{B}}} \frac{2}{\del_{i, \min }}=m \sum_{i \in E_{\mathrm{B}}} \frac{2136}{\del_{i, \min }} \log T   \label{eq:bound_for_Ht_sum}
\end{align}

So far, the distribution-dependent regret bound is proven. To prove the distribution-independent bound, we decompose $\sum_{t=1}^T \indf(\Hcal_t)\cdot \del_{\pricet}$ into two parts:
\begin{align*}
    \sum_{t=1}^T \indf(\Hcal_t)\cdot \del_{\pricet} & =\sum_{t=1}^T \indf\rbr{\Hcal_t,\,\del_{\pricet} \leq \eps}\cdot\del_{\pricet}+\sum_{t=1}^T \indf\rbr{\Hcal_t,\,\del_{\pricet}>\eps}\cdot \del_{\pricet} \\
    & \leq \eps T+\sum_{t=1}^T \indf\rbr{\Hcal_t, \del_{\pricet}>\eps}\cdot\del_{\pricet},
\end{align*}
where $\eps>0$ is a constant to be determined. The second term can be bounded in the same way as in the proof of the distribution-dependent regret bound, except that we only consider the case $\del_{\pricet}>\eps$. (For each type $i \in E_{\mathrm{B}}$, suppose $i$ is contained in $N_i$ bad prices $p_{i, 1}^{\mathrm{B}}, p_{i, 2}^{\mathrm{B}}, \ldots, p_{i, N_i}^{\mathrm{B}}$.  Let $\del_{i, l}\defeq \del_{p_{i, l}^{\mathrm{B}}}\left(l \in\left[N_i\right]\right)$ satisfies  $\del_{i, 1} \geq \del_{i, 2} \geq \ldots \geq \del_{i, N_i}\geq \eps$. Also let $\del_{i, \min } \defeq \del_{i, N_i} $.) Thus, we can replace (\ref{eq:bound_for_Ht_sum}) by
\begin{align*}
    \sum_{t=1}^T \indf\rbr{\Hcal_t, \del_{\pricet}>\eps}\cdot \del_{\pricet} \leq m\cdot \sum_{i \in E_{\mathrm{B}}, \del_{i, \min }>\eps} \frac{2136}{\del_{i, \min }}  \log T \leq \frac{2136 m^2}{\eps} \log T .
\end{align*}

It follows that
\begin{align*}
    \sum_{t=1}^T \indf(\Hcal_t)\cdot\del_{S_t} \leq \eps\, T+\ \frac{2136 m^2}{\eps} \log T .
\end{align*}

Finally, letting $\eps=\sqrt{\frac{2136 m^2\log T}{T}}$, we get
\begin{align*}
    \sum_{t=1}^T \indf(\Hcal_t)\cdot \del_{S_t} \leq 2 \sqrt{2136m ^2 T \log T}\leq 93  \sqrt{m^2T \log T} .
\end{align*}
\end{proof}

\begin{lem}
    Under good event $A_t$, for any price function $p$, let $\set_{\price}$ denote the set of types who would purchase at price $\price$, then we have 
    \begin{align*}
        \forall t\in [T],\quad \ExpectREV(\price)\leq \UCBREV(\price)\leq \ExpectREV(\price)+\sum_{i\in \set_{\price}} 2\sqrt{\frac{\log T}{T_{i,t}}}.
    \end{align*}
    \label{lem:stoch_confidence}
\end{lem}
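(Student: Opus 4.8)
The plan is to expand both $\ExpectREV(\price)$ and $\UCBREV(\price)$ according to their definitions and compare them termwise, using the two-sided bound on $\widehat{q}_{i,t}$ guaranteed by the good event $A_t$. The first step is to observe that the sums appearing in $\ExpectREV(\price) = \sum_{i=1}^m \typedistroi\,\price(\nip)$ and $\UCBREV(\price) = \sum_{i=1}^m \widehat{q}_{i,t}\,\price(\nip)$ only genuinely range over $\set_\price$: by the definition of $\nip$ in~\eqref{eqn:nip}, a type $i \notin \set_\price$ has $\nip = 0$, and since $\price(0) = 0$ for every $\price \in \Pcal$, such types contribute $0$ to both sums. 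Hence $\ExpectREV(\price) = \sum_{i \in \set_\price} \typedistroi\,\price(\nip)$ and $\UCBREV(\price) = \sum_{i \in \set_\price} \widehat{q}_{i,t}\,\price(\nip)$.

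For the lower bound, I would use the left half of the good event, $\typedistroi \le \widehat{q}_{i,t}$ for all $i$, together with the fact that $\price(\nip) \ge 0$. Multiplying the inequality $\typedistroi \le \widehat{q}_{i,t}$ by the nonnegative quantity $\price(\nip)$ and summing over $i \in \set_\price$ gives $\ExpectREV(\price) \le \UCBREV(\price)$ directly.

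For the upper bound, I would use the right half of the good event, $\widehat{q}_{i,t} \le \typedistroi + 2\sqrt{(\log T)/T_{i,t}}$, now combined with the fact that prices are bounded, $\price(\nip) \le 1$. Multiplying by $\price(\nip) \ge 0$ and summing over $i \in \set_\price$ yields
\[
  \UCBREV(\price) \;\le\; \sum_{i \in \set_\price} \typedistroi\,\price(\nip) \;+\; \sum_{i \in \set_\price} 2\sqrt{\tfrac{\log T}{T_{i,t}}}\,\price(\nip)
  \;\le\; \ExpectREV(\price) \;+\; \sum_{i \in \set_\price} 2\sqrt{\tfrac{\log T}{T_{i,t}}},
\]
where the last step drops the factor $\price(\nip) \le 1$ in the error term. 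Combining the two displays gives the claimed sandwich inequality.

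There is no serious obstacle here; the only point that requires a moment's care is the reduction of the sums to $\set_\price$ (i.e.\ checking that the out-of-support types drop out because $\price(0)=0$), and making sure the error term is bounded using $\price \le 1$ rather than left attached to $\price(\nip)$, so that the bound matches the statement exactly. Everything else is termwise monotonicity under the good event.
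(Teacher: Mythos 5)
Your proof is correct and follows essentially the same argument as the paper's: apply the two-sided bound $q_i \le \widehat{q}_{i,t} \le q_i + 2\sqrt{(\log T)/T_{i,t}}$ from the good event $A_t$ termwise in the definitions of $\ExpectREV(\price)$ and $\UCBREV(\price)$, and use $\price(\nip) \le 1$ to detach the error term. The only difference is that you explicitly justify restricting the sums to $\set_\price$ via $\nip = 0$ and $\price(0)=0$ for out-of-support types, a step the paper leaves implicit.
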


\begin{proof}[Proof of Lemma \ref{lem:stoch_confidence}] When $A_t$ happens, 
\begin{align*}
    q_{i} \leq \widehat{q}_{i,t}   \leq q_{i}+2\sqrt{\frac{\log T}{T_{i,t}}},
\end{align*}
for all $i\in [\numtypes]$.

Therefore, we have 
\begin{align*}
    \UCBREV(\price) = \sum_{i=1}^{\numtypes}\widehat{q}_{i,t} \cdot \rev(i,\price) \geq \sum_{i=1}^{\numtypes}q_{i} \cdot \rev(i,\price) = \ExpectREV(\price)
\end{align*}
and
\begin{align*}
    \UCBREV(\price) = \sum_{i=1}^{\numtypes}\widehat{q}_{i,t} \cdot \rev(i,p) \leq \sum_{i=1}^{\numtypes}\rbr{q_{i}+2\sqrt{\frac{\log T}{T_{i,t}}}} \cdot \rev(i,p) \leq  \ExpectREV(\price)+\sum_{i\in \set_{\price}} 2\sqrt{\frac{\log T}{T_{i,t}}}.
\end{align*}
The last inequality is by $\rev(i,p) \leq 1$.
\end{proof}

\begin{lem}
    For each $t\in[T]$, under good event $A_{t-1}$, the following inequality holds,
    \begin{align*}
        \del_{\pricet}\defeq\ExpectREV(\price^\star)-\ExpectREV(\pricet)\leq 2\sum_{i\in S_t} \sqrt{\frac{\log T}{T_{i,t-1}}}.
    \end{align*}
    \label{lem:stoch_gap_per_round}
\end{lem}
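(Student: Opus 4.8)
The plan is to run the standard optimism (UCB) argument, but carefully at the correct time index and restricted to the types that could make a purchase. First I would invoke Lemma~\ref{lem:stoch_confidence} with the index $t-1$: since we condition on the good event $A_{t-1}$, the lemma supplies, for every price curve $\price\in\discreteprice$, the lower confidence bound $\ExpectREV(\price)\le\UCBREVtmo(\price)$, and for the curve $\pricet$ actually played it supplies the matching upper bound $\UCBREVtmo(\pricet)\le\ExpectREV(\pricet)+2\sum_{i\in S_{\pricet}}\sqrt{(\log T)/T_{i,t-1}}$. Here $S_{\pricet}$, the set of types who would purchase at $\pricet$, is exactly $S_t$ as defined in~\eqref{eqn:Tit}.

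Second, I would use the defining property of Algorithm~\ref{alg:stochastic}, namely $\pricet=\argmax_{\price\in\discreteprice}\UCBREVtmo(\price)$. Since $\price^\star\in\discreteprice$, this gives $\UCBREVtmo(\price^\star)\le\UCBREVtmo(\pricet)$. Chaining the three inequalities yields
\begin{align*}
\ExpectREV(\price^\star)\;\le\;\UCBREVtmo(\price^\star)\;\le\;\UCBREVtmo(\pricet)\;\le\;\ExpectREV(\pricet)+2\sum_{i\in S_t}\sqrt{(\log T)/T_{i,t-1}},
\end{align*}
and rearranging gives the claim $\del_{\pricet}=\ExpectREV(\price^\star)-\ExpectREV(\pricet)\le 2\sum_{i\in S_t}\sqrt{(\log T)/T_{i,t-1}}$.

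There is essentially no hard step here; this lemma is the per-round ``instantaneous regret'' bound, and the only points requiring care are: \emph{(i)} keeping the time index $t-1$ consistent, since the UCB played on round $t$ is computed from the data observed through round $t-1$ (so the relevant confidence statement is the $t-1$ instance of Lemma~\ref{lem:stoch_confidence}); \emph{(ii)} noting that on round $1$ we post the zero price curve, so every type purchases and $T_{i,1}\ge 1$, hence $T_{i,t-1}\ge1$ and the right-hand side is finite for all $t\ge2$ (for $t=1$ the event $A_0$ is vacuous and $\pricet$ is the zero curve, so the statement is trivial); and \emph{(iii)} identifying $S_{\pricet}$ with $S_t$. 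The genuinely substantive work — turning this per-round bound into the $\bigOtilde(m\sqrt T)$ discretization regret via the combinatorial-semibandit analysis — is carried out later in the proof of Lemma~\ref{lem:stoch_disregret} and is not needed for this lemma.
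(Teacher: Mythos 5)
Your proof is correct and follows essentially the same route as the paper's: invoke Lemma~\ref{lem:stoch_confidence} at index $t-1$ to get the lower bound on $\UCBREVtmo(\price^\star)$ and the upper bound on $\UCBREVtmo(\pricet)$, then chain through the argmax property $\UCBREVtmo(\pricet)\geq\UCBREVtmo(\price^\star)$. The additional remarks you make — time-index bookkeeping, finiteness of $T_{i,t-1}$ for $t\ge 2$, and the identification $S_{\pricet}=S_t$ — are not spelled out in the paper but are consistent with it and do not change the argument.
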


\begin{proof}[Proof of Lemma \ref{lem:stoch_gap_per_round}]
When $A_{t-1}$ happens, by Lemma \ref{lem:stoch_confidence},
\begin{align*}
    \ExpectREV(\price^\star) \leq \widehat{\mathrm{rev}}_{t-1}(\price^\star),
\end{align*}
\begin{align*}
    \ExpectREV(\pricet) \geq \widehat{\mathrm{rev}}_{t-1}(\pricet)-2\sum_{i\in S_t} \sqrt{\frac{\log T}{T_{i,t-1}}}.
\end{align*}
It then follows that,   
\begin{align*}
     \del_{\pricet} = \ExpectREV(\price^\star)-\ExpectREV(\pricet) \leq \widehat{\mathrm{rev}}_{t-1}(\price^\star) - \rbr{\widehat{\mathrm{rev}}_{t-1}(\pricet)-2\sum_{i\in S_t} \sqrt{\frac{\log T}{T_{i,t-1}}} }
\end{align*}
Since  $\price_{t} = \argmax_{\price\in\discreteprice}\UCBREVtmo(\price)$, we have 
\begin{align*}
    \UCBREVtmo(\pricet) \geq \UCBREVtmo(\price^\star).
\end{align*}

\end{proof}

\begin{lem}[Theorem 4 of \citet{kveton2015tight}]
    We can choose $\cbr{\alpha_k}_{k\geq 0}$ and $\cbr{\beta_k}_{k \geq 0}$, which satisfy the following properties:
    $\cbr{\alpha_k}_{k\geq 0}$ and $\cbr{\beta_k}_{k\geq 0}$ are positive and
    \begin{align*}
         \alpha_1>\alpha_2>\ldots\ \text{ and }\ 1=\beta_0>\beta_1>\beta_2>\ldots,
    \end{align*}
    such that $\lim _{k \rightarrow \infty} \alpha_k=\lim _{k \rightarrow \infty} \beta_k=0$. Moreover,
    \begin{align*}
        \sqrt{6}\sum_{k=1}^{\infty} \frac{\beta_{k-1}-\beta_k}{\sqrt{\alpha_k}} \leq 1, \text{ and }\sum_{k=1}^{\infty} \frac{\alpha_k}{\beta_k}<267. 
    \end{align*}
    \label{lem:stoch_alpha_beta_seq}
\end{lem}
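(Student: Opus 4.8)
\emph{Proof proposal.} The lemma is purely constructive, so the plan is to \emph{exhibit} explicit sequences and then \emph{verify} every listed condition (positivity, the two monotonicities, the two limits, and the two series bounds). The natural ansatz is to take both sequences geometric: set $\beta_k=\rho^{k}$ for some $\rho\in(0,1)$, which instantly gives $\beta_0=1$, strict decrease, and $\beta_k\to 0$; and $\alpha_k=\lambda r^{k}$ for some $\lambda>0$, $r\in(0,1)$, which gives positivity, strict decrease $\alpha_1>\alpha_2>\cdots$, and $\alpha_k\to 0$. Under this ansatz both sums telescope into geometric series: $\sum_k(\beta_{k-1}-\beta_k)/\sqrt{\alpha_k}$ converges exactly when $\rho^2<r$, and $\sum_k\alpha_k/\beta_k$ converges exactly when $r<\rho$, and each can be evaluated in closed form. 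The whole content then reduces to choosing $\rho,r,\lambda$ so that the first sum is $\le 1/\sqrt6$ and the second is $<267$.

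To locate the right parameters I would first do the heuristic optimization that motivates the choice: for a fixed $\beta$-sequence, minimizing $\sum_k\alpha_k/\beta_k$ with $\sum_k(\beta_{k-1}-\beta_k)/\sqrt{\alpha_k}$ held fixed yields, via a one-line Lagrange-multiplier computation, $\alpha_k\propto[\beta_k(\beta_{k-1}-\beta_k)]^{2/3}$; for $\beta_k=\rho^k$ this pins $r=\rho^{4/3}$. Substituting $r=\rho^{4/3}$ and then optimizing over $\rho$ reduces, after one more single-variable calculus step (writing $u:=\rho^{1/3}$), to $u^2+u-1=0$, i.e. $u=(\sqrt5-1)/2$. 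This value is exactly the one that makes the algebra collapse, since $1+u+u^2=2$ and $1-u=u^2$. I would therefore simply \emph{declare} the final sequences $\beta_k=u^{3k}$ and $\alpha_k=24\,u^{4(k-1)}$ (with $\alpha_0:=24u^{-4}$ to define the $k=0$ term), and then verify directly: $\sum_{k\ge1}(\beta_{k-1}-\beta_k)/\sqrt{\alpha_k}=\frac{1-u^3}{\sqrt{24}\,(1-u)}=\frac{1+u+u^2}{\sqrt{24}}=\frac{2}{2\sqrt6}=\frac{1}{\sqrt6}$, so the first bound holds with equality; and $\sum_{k\ge1}\alpha_k/\beta_k=\frac{24}{u^3(1-u)}=\frac{24}{u^5}=60\sqrt5+132\approx 266.16<267$. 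Positivity, both monotonicities, and both limits are immediate because $u\in(0,1)$.

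The obstacle here is not conceptual but quantitative: the constant $267$ is essentially tight for this family — we land at $\approx 266.16$ — so one cannot get away with a sloppy $\rho$, and the verification of the second bound must be done with exact (golden-ratio) arithmetic rather than crude estimates; the same is true of the $\sqrt6$ in the first bound, which is met with equality. Since this lemma is exactly Theorem~4 of~\citet{kveton2015tight}, a fully safe alternative is to cite that reference; but I would include the explicit golden-ratio construction above, as it renders the appendix self-contained at the cost of only a few lines of geometric-series algebra.
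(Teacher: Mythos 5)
Your proposal is correct, and the arithmetic all checks: with $u=(\sqrt5-1)/2$ one has $u^2=1-u$ and $1+u+u^2=2$, so
\begin{align*}
\sum_{k\geq1}\frac{\beta_{k-1}-\beta_k}{\sqrt{\alpha_k}}
=\frac{1-u^3}{\sqrt{24}}\sum_{k\geq1}u^{k-1}
=\frac{1+u+u^2}{\sqrt{24}}
=\frac{1}{\sqrt6},
\qquad
\sum_{k\geq1}\frac{\alpha_k}{\beta_k}=\frac{24}{u^3(1-u)}=\frac{24}{u^5}=132+60\sqrt5\approx 266.16,
\end{align*}
so both bounds are satisfied, the first with equality, the second with about $0.8$ of slack; positivity, strict monotonicity, and the limits are immediate from $u\in(0,1)$. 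Your motivational Lagrange step is also correct: the first-order condition for minimizing $\sum_k\alpha_k/\beta_k$ with the first sum fixed gives $\alpha_k\propto\bigl[\beta_k(\beta_{k-1}-\beta_k)\bigr]^{2/3}$, forcing $r=\rho^{4/3}$ under geometric $\beta_k=\rho^k$, and one can check directly that $u=(\sqrt5-1)/2$ satisfies the resulting stationarity condition.

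The route is genuinely different from the paper's: the paper does not prove this lemma at all but simply cites Theorem~4 of \citet{kveton2015tight} as a black box, consistent with the lemma being attributed to them in its header. Your self-contained golden-ratio construction reproduces exactly the constants $\sqrt6$ and $267$ appearing in the statement (indeed $267$ is essentially tight, as you land at $\approx 266.16$), which strongly suggests this is the same parametric family Kveton et al.\ optimize over; the advantage of including it is that the appendix no longer depends on the reader chasing down an external constant. If you do add it, I would keep the two-line exact-arithmetic verification (via $u^2=1-u$ and $u^{-5}=5\phi+3$) rather than relying on a decimal estimate, since the second bound has so little slack.
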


\begin{lem}
    On round $t$, if event $\Hcal_t$ happens, then at least one event $\Gcal_{k, t}, \, k \in \mathbb{Z}_{+}$ happens, where 
    \begin{align*}
        \Gcal_{k, t}\defeq\left\{\left|A_{k, t}\right| \geq \beta_k m\right\},\quad \text{where } A_{k, t}\defeq\left\{i \in S_t : T_{i, t-1} \leq m_{k, t}\right\},
    \end{align*}
    and $m_{k, t}= \alpha_k\left(\frac{m}{\del_{\pricet}}\right)^2 \log T$ when $\del_{\pricet} >0$ and $+\infty$ otherwise.
    \label{lem:stoch_event_H_to_G}
\end{lem}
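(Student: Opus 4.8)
The plan is to prove the contrapositive: assume $\Hcal_t$ holds but \emph{none} of the events $\Gcal_{k,t}$, $k\in\mathbb{Z}_+$, occurs, and derive a contradiction. Since $\Hcal_t$ in particular asserts $\del_{\pricet}>0$, the thresholds $m_{k,t}=\alpha_k(m/\del_{\pricet})^2\log T$ are all finite, and the failure of every $\Gcal_{k,t}$ means $|A_{k,t}|<\beta_k m$ for all $k\ge 1$, where $A_{k,t}=\{i\in S_t:T_{i,t-1}\le m_{k,t}\}$. The first thing I would record is that, because $\alpha_1>\alpha_2>\cdots$, we have $m_{1,t}>m_{2,t}>\cdots$, so with the convention $A_{0,t}:=S_t$ the sets are nested, $A_{0,t}\supseteq A_{1,t}\supseteq A_{2,t}\supseteq\cdots$, and since $m_{k,t}\to 0$ while every $T_{i,t-1}\ge 1$, we have $A_{k,t}=\emptyset$ for all large $k$.

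Next I would stratify $S_t$ by level: for $i\in S_t$ let $k(i)=\max\{k\ge 0:i\in A_{k,t}\}$. If $k(i)=k$ then $i\notin A_{k+1,t}$, i.e.\ $T_{i,t-1}>m_{k+1,t}=\alpha_{k+1}(m/\del_{\pricet})^2\log T$, which gives the per-type bound $\sqrt{(\log T)/T_{i,t-1}}<\del_{\pricet}/(m\sqrt{\alpha_{k+1}})$. By nestedness $\{i\in S_t:k(i)\ge k\}=A_{k,t}$ for $k\ge1$ and equals $S_t$ for $k=0$, so $|\{i:k(i)=k\}|=|A_{k,t}|-|A_{k+1,t}|$ for every $k\ge 0$. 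Summing the per-type bounds over the levels yields
\[
\sum_{i\in S_t}\sqrt{(\log T)/T_{i,t-1}}\;<\;\frac{\del_{\pricet}}{m}\sum_{k\ge 0}\bigl(|A_{k,t}|-|A_{k+1,t}|\bigr)\,\alpha_{k+1}^{-1/2},
\]
a finite sum.

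The crux is then an Abel (summation-by-parts) rearrangement of the right-hand sum. Since $\alpha_{k+1}^{-1/2}-\alpha_k^{-1/2}>0$, summing by parts and using that the tail vanishes gives $\sum_{k\ge 0}(|A_{k,t}|-|A_{k+1,t}|)\,\alpha_{k+1}^{-1/2}=|A_{0,t}|\,\alpha_1^{-1/2}+\sum_{k\ge 1}|A_{k,t}|\bigl(\alpha_{k+1}^{-1/2}-\alpha_k^{-1/2}\bigr)$. Substituting $|A_{0,t}|\le m$, the assumed bounds $|A_{k,t}|<\beta_k m$ for $k\ge 1$, and $\beta_0=1$, a short telescoping collapses this upper bound to $m\sum_{k\ge 1}(\beta_{k-1}-\beta_k)\,\alpha_k^{-1/2}$. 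By Lemma~\ref{lem:stoch_alpha_beta_seq} this last sum is at most $1/\sqrt6$, hence $\sum_{i\in S_t}\sqrt{(\log T)/T_{i,t-1}}<\del_{\pricet}/\sqrt6$, so $2\sum_{i\in S_t}\sqrt{(\log T)/T_{i,t-1}}<(2/\sqrt6)\,\del_{\pricet}<\del_{\pricet}$, which contradicts the inequality $\del_{\pricet}<2\sum_{i\in S_t}\sqrt{(\log T)/T_{i,t-1}}$ built into $\Hcal_t$. Therefore some $\Gcal_{k,t}$ must occur.

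I expect the only delicate part to be the bookkeeping in the summation-by-parts step: getting the telescoping into exactly $\sum_{k\ge 1}(\beta_{k-1}-\beta_k)\alpha_k^{-1/2}$, handling the boundary term correctly via $\beta_0=1$ and $A_{0,t}=S_t$, confirming the tail terms vanish, and making sure the per-type threshold comparison uses $\alpha_{k+1}$ rather than $\alpha_k$. Conceptually there is nothing more: the hypotheses $|A_{k,t}|<\beta_k m$ are exactly calibrated so that, after the rearrangement, Lemma~\ref{lem:stoch_alpha_beta_seq} applies, and the constant $2/\sqrt6<1$ delivers the contradiction with slack to spare.
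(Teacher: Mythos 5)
Your proposal is correct and follows essentially the same route as the paper's proof: contrapositive assumption, level-wise stratification of $S_t$ by how large $T_{i,t-1}$ is relative to the thresholds $m_{k,t}$, a per-type bound, summation by parts using $|A_{k,t}|<\beta_k m$, telescoping to $\sum_{k\geq 1}(\beta_{k-1}-\beta_k)\alpha_k^{-1/2}$, and invoking Lemma~\ref{lem:stoch_alpha_beta_seq} to contradict the inequality defining $\Hcal_t$. The only cosmetic difference is indexing: you stratify by $k(i)=\max\{k:i\in A_{k,t}\}$ and compare against $m_{k+1,t}$, whereas the paper works with the complements $\bar A_{k,t}=S_t\setminus A_{k,t}$ and compares against $m_{k,t}$; after a shift of index these are identical.
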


\begin{proof}[Proof of Lemma \ref{lem:stoch_event_H_to_G}]
Assume that $\Hcal_t$ happens and that none of $\Gcal_{1, t}, \Gcal_{2, t}, \ldots$ happens. Then $\left|A_{k, t}\right|<\beta_k m$ for all $k \in \mathbb{Z}_{+}$.
Let $A_{0, t}=S_t$ and $\bar{A}_{k, t}=S_t \backslash A_{k, t}$ for $k \in \mathbb{Z}_{+} \cup\{0\}$. Thus $\bar{A}_{k-1, t} \subseteq \bar{A}_{k, t}$ for all $k \in \mathbb{Z}_{+}$. Note that $\lim _{k \rightarrow \infty} m_{k, t}=0$. Thus there exists $N \in \mathbb{Z}_{+}$such that $\bar{A}_{k, t}=S_t$ for all $k \geq N$, and then we have $S_t=\bigcup_{k=1}^{\infty}\left(\bar{A}_{k, t} \backslash \bar{A}_{k-1, t}\right)$. Finally, note that for all $i \in \bar{A}_{k, t}$, we have $T_{i, t-1}>m_{k, t}$. Therefore
\begin{align*}
    \sum_{i \in S_t} \frac{1}{\sqrt{T_{i, t-1}}} & =\sum_{k=1}^{\infty} \sum_{i \in \bar{A}_{k, t} \backslash \bar{A}_{k-1, t}} \frac{1}{\sqrt{T_{i, t-1}}} \leq \sum_{k=1}^{\infty} \sum_{i \in \bar{A}_{k, t} \backslash \bar{A}_{k-1, t}} \frac{1}{\sqrt{m_{k, t}}} \\
    & =\sum_{k=1}^{\infty} \frac{\left|\bar{A}_{k, t} \backslash \bar{A}_{k-1, t}\right|}{\sqrt{m_{k, t}}}=\sum_{k=1}^{\infty} \frac{\left|A_{k-1, t} \backslash A_{k, t}\right|}{\sqrt{m_{k, t}}}=\sum_{k=1}^{\infty} \frac{\left|A_{k-1, t}\right|-\left|A_{k, t}\right|}{\sqrt{m_{k, t}}} \\
    & =\frac{\left|S_t\right|}{\sqrt{m_{1, t}}}+\sum_{k=1}^{\infty}\left|A_{k, t}\right|\left(\frac{1}{\sqrt{m_{k+1, t}}}-\frac{1}{\sqrt{m_{k, t}}}\right) \\
    & <\frac{m}{\sqrt{m_{1, t}}}+\sum_{k=1}^{\infty} \beta_k m\left(\frac{1}{\sqrt{m_{k+1, t}}}-\frac{1}{\sqrt{m_{k, t}}}\right) \\
    & =\sum_{k=1}^{\infty} \frac{\left(\beta_{k-1}-\beta_k\right) m}{\sqrt{m_{k, t}}}.
\end{align*}

Under event $\Hcal_t$, we have

\begin{align*}
    \del_{\pricet} & \leq \sum_{i\in S_t}2\sqrt{\frac{\log T}{T_{i,t-1}}}   =  2  \sqrt{ \log T} \cdot \sum_{i \in S_t} \frac{1}{\sqrt{T_{i, t-1}}} \\
    & <2  \sqrt{ \log T}  \cdot \sum_{k=1}^{\infty} \frac{\left(\beta_{k-1}-\beta_k\right) m}{\sqrt{m_{k, t}}}=2 \sum_{k=1}^{\infty} \frac{\beta_{k-1}-\beta_k}{\sqrt{\alpha_k}} \cdot \del_{\pricet} \leq \del_{\pricet},
\end{align*}

where the last inequality is due to Lemma \ref{lem:stoch_alpha_beta_seq}. We reach a contradiction here, hence the lemma follows.
\end{proof}

\newpage
\section{Miscellaneous}\label{app:misc}

\subsection{Notations}\label{subapp:notation}
The following table contains the notations used in this paper. 
\begin{table}[ht]
    \centering
    \renewcommand{\arraystretch}{1.5}
    \begin{tabular}{|c|c|}
        \hline
        Notation & Meaning \\
        \hline
        $\numdata$ & The total amount of data. \\ \hline
        $\nn \in [\numdata] $ &  The number of data. \\ \hline
        $\numtypes$ & The number of types. \\ \hline
        $\price:[\numdata]\rightarrow[0,1]$ & A price curve.\ \\ \hline
        $\discreteprice$ & A set of discretized price curves. \\
        \hline
        $v_i:[N]\rightarrow [0,1]$ & The valuation curve for type $i\in[\numtypes]$. \\ \hline
        $\mathcal{V} = \left\{v_i: i \in [\numtypes]  \right\}$ & The set of all valuation curves. \\ \hline
        $\nip$ & The amount of data type $i\in [\numtypes]$ purchases at price curve $\price$. \\ \hline
        $\rev(i,\price)=\price(\nip)$ & The revenue from type $i\in [\numtypes]$ under price curve $\price$. \\ \hline
        $\typedistro = (\typedistro_1,\typedistro_2,\dots,\typedistro_{\numtypes})$ & The type distribution. \\ \hline
        $\ExpectREV (\price)$ & The expected revenue under price $\price$.\\ \hline
        $\typet\in[\numtypes]$ & The type of buyer on round $t\in [T].$\\ \hline
        $\pricet:[N]\rightarrow [0,1]$ & The price curve on round $t\in [T]$.\\ \hline
        $ \sett$ & The set of types that would make a purchase at price $\pricet$.
        \\ \hline
        $ \set_{\price}$ & The set of types that would make a purchase at price $\price$.\\
        \hline  $T_{i,t}\defeq\sum_{\tau=1}^{t}\indf(i \in \set_{\tau} )$ &  The number of times that type $i$ appears in set $S_{\tau}$ for $\tau\in\{1,\dots, t\}$.
        \\ \hline
        $\allprices=\{\price\in [N]\rightarrow [0, 1]: \price(0)=0\}$ &  The set of all pricing curves.
        \\ \hline
        $L$ &   Smoothness constant of valuation curves.
        \\ \hline
        $ \drconstant$ &  Diminishing return constant of valuation curves.
        \\ \hline
        
    \end{tabular}
    \vspace{0.5em}
    \caption{Table of notations.}
    \label{tab:notations}
\end{table}

\newpage
\section*{NeurIPS Paper Checklist}
\begin{enumerate}
\item {\bf Claims}
    \item[] Question: Do the main claims made in the abstract and introduction accurately reflect the paper's contributions and scope?
    \item[] Answer: \answerYes{} 
    \item[] Justification: We fully included paper's contributions and scope in the appendix. See \S\ref{subsec:contributions} for the summary of our contributions.

\item {\bf Limitations}
    \item[] Question: Does the paper discuss the limitations of the work performed by the authors?
    \item[] Answer: \answerYes{} 
    \item[] Justification: See \S\ref{sec:discussion} for future works on relaxing one of key assumptions of the paper.

\item {\bf Theory Assumptions and Proofs}
    \item[] Question: For each theoretical result, does the paper provide the full set of assumptions and a complete (and correct) proof?
    \item[] Answer: \answerYes{} 
    \item[] Justification: We provided the full set of assumptions. Moreover, we provided the full proofs of each Lemma and Theorem in this paper both in main text and Appendices.
    \item[] Guidelines:
    \begin{itemize}
        \item The answer NA means that the paper does not include theoretical results. 
        \item All the theorems, formulas, and proofs in the paper should be numbered and cross-referenced.
        \item All assumptions should be clearly stated or referenced in the statement of any theorems.
        \item The proofs can either appear in the main paper or the supplemental material, but if they appear in the supplemental material, the authors are encouraged to provide a short proof sketch to provide intuition. 
        \item Inversely, any informal proof provided in the core of the paper should be complemented by formal proofs provided in appendix or supplemental material.
        \item Theorems and Lemmas that the proof relies upon should be properly referenced. 
    \end{itemize}

    \item {\bf Experimental Result Reproducibility}
    \item[] Question: Does the paper fully disclose all the information needed to reproduce the main experimental results of the paper to the extent that it affects the main claims and/or conclusions of the paper (regardless of whether the code and data are provided or not)?
    \item[] Answer: \answerNA{} 
    \item[] Justification: This paper does not include experiments.

\item {\bf Open access to data and code}
    \item[] Question: Does the paper provide open access to the data and code, with sufficient instructions to faithfully reproduce the main experimental results, as described in supplemental material?
    \item[] Answer: \answerNA{} 
    \item[] Justification: This paper does not include experiments requiring code.

\item {\bf Experimental Setting/Details}
    \item[] Question: Does the paper specify all the training and test details (e.g., data splits, hyperparameters, how they were chosen, type of optimizer, etc.) necessary to understand the results?
    \item[] Answer: \answerNA{} 
    \item[] Justification: This paper does not include experiments.

\item {\bf Experiment Statistical Significance}
    \item[] Question: Does the paper report error bars suitably and correctly defined or other appropriate information about the statistical significance of the experiments?
    \item[] Answer: \answerNA{} 
    \item[] Justification: This paper does not include experiments.

\item {\bf Experiments Compute Resources}
    \item[] Question: For each experiment, does the paper provide sufficient information on the computer resources (type of compute workers, memory, time of execution) needed to reproduce the experiments?
    \item[] Answer: \answerNA{} 
    \item[] Justification: This paper does not include experiments.
    
\item {\bf Code Of Ethics}
    \item[] Question: Does the research conducted in the paper conform, in every respect, with the NeurIPS Code of Ethics \url{https://neurips.cc/public/EthicsGuidelines}?
    \item[] Answer: \answerYes{} 
    \item[] Justification: Our research conforms, in every respect, with the NeurIPS Code of Ethics.

\item {\bf Broader Impacts}
    \item[] Question: Does the paper discuss both potential positive societal impacts and negative societal impacts of the work performed?
    \item[] Answer: \answerNA{} 
    \item[] Justification: Our research is theoretical and have no societal impact in a foreseeable future.

\item {\bf Safeguards}
    \item[] Question: Does the paper describe safeguards that have been put in place for responsible release of data or models that have a high risk for misuse (e.g., pretrained language models, image generators, or scraped datasets)?
    \item[] Answer: \answerNA{} 
    \item[] Justification: Our paper poses no such risks.

\item {\bf Licenses for existing assets}
    \item[] Question: Are the creators or original owners of assets (e.g., code, data, models), used in the paper, properly credited and are the license and terms of use explicitly mentioned and properly respected?
    \item[] Answer: \answerNA{} 
    \item[] Justification: This paper does not use existing assets.

\item {\bf New Assets}
    \item[] Question: Are new assets introduced in the paper well documented and is the documentation provided alongside the assets?
    \item[] Answer: \answerNA{} 
    \item[] Justification: This paper does not release new assets.

\item {\bf Crowdsourcing and Research with Human Subjects}
    \item[] Question: For crowdsourcing experiments and research with human subjects, does the paper include the full text of instructions given to participants and screenshots, if applicable, as well as details about compensation (if any)? 
    \item[] Answer: \answerNA{} 
    \item[] Justification: This paper does not involve crowdsourcing nor research with human subjects.

\item {\bf Institutional Review Board (IRB) Approvals or Equivalent for Research with Human Subjects}
    \item[] Question: Does the paper describe potential risks incurred by study participants, whether such risks were disclosed to the subjects, and whether Institutional Review Board (IRB) approvals (or an equivalent approval/review based on the requirements of your country or institution) were obtained?
    \item[] Answer: \answerNA{} 
    \item[] Justification: This paper does not involve crowdsourcing nor research with human subjects.

\end{enumerate}



\end{document}